\PassOptionsToPackage{table}{xcolor}
 \documentclass{article} % Choose the class that fits your needs
\usepackage[utf8]{inputenc} % Encoding for special characters
\usepackage{lipsum} % For dummy text, remove in your actual document
\usepackage{iclr2025_conference,times}
\usepackage{soul}
   %highlighting
\usepackage{todonotes}
% Include these packages in your preamble
\usepackage{graphicx}
\usepackage{subcaption}

% Optional math commands from https://github.com/goodfeli/dlbook_notation.
%\input{math_commands.tex}
\usepackage{dsfont}
\usepackage{booktabs}
\usepackage{graphicx}
\usepackage{enumitem} % Add this to your preamble
\usepackage{arydshln} % Dashed lines for soft splits
\usepackage{graphicx} % Allow resizing

\definecolor{customblue}{RGB}{61,150,209}
\usepackage{wrapfig}
\usepackage[colorlinks=true, citecolor=customblue, linkcolor=customblue, urlcolor=customblue]{hyperref}
\usepackage{wrapfig, subcaption} % Add this to your preamble

\usepackage{url}
\usepackage{amsmath, amsthm, amssymb, amsfonts}
\usepackage{thmtools}
\usepackage{multirow}
\usepackage{subcaption}
\usepackage{bbm}
\usepackage{mdframed}

\newtheorem{definition}{Definition}
\newtheorem{corollary}{Corollary}
\newtheorem{proposition}{Proposition}

\newtheorem{theorem}{Theorem}

 \usepackage{tcolorbox}
\usepackage{amssymb}
\tcbuselibrary{theorems}
\usepackage{booktabs} 
\usepackage{tocbibind} % Ensures the appendix is included in ToC
\usepackage{titletoc}  % Customizes ToC layout

\title{I-Con: A Unifying Framework for \\ Representation Learning}

% Authors must not appear in the submitted version. They should be hidden
% as long as the \iclrfinalcopy macro remains commented out below.
% Non-anonymous submissions will be rejected without review 

\author{
    \textbf{Shaden Alshammari}$^{1}$ \quad %\textcolor{white}{\thanks{{\texttt{shaden@mit.edu}}}}
    \textbf{John Hershey}$^{2}$ \quad
    \textbf{Axel Feldmann}$^{1}$ \quad
    \textbf{William T. Freeman}$^{1,2}$ \quad
    \textbf{Mark Hamilton}$^{1,3}$ \\
    \vspace{0.2cm}
    \textit{$^1$ MIT \quad $^2$ Google \quad $^3$ Microsoft} \\
    \vspace{0.2cm}
    \href{https://aka.ms/i-con}{\texttt{\textcolor{magenta}{https://aka.ms/i-con}}}
    \vspace{-5em}
}

\iclrfinalcopy
\begin{document}

\maketitle
\begin{figure}[ht]
    \centering
    \makebox[\textwidth]{
        \includegraphics[width=1.2\textwidth]{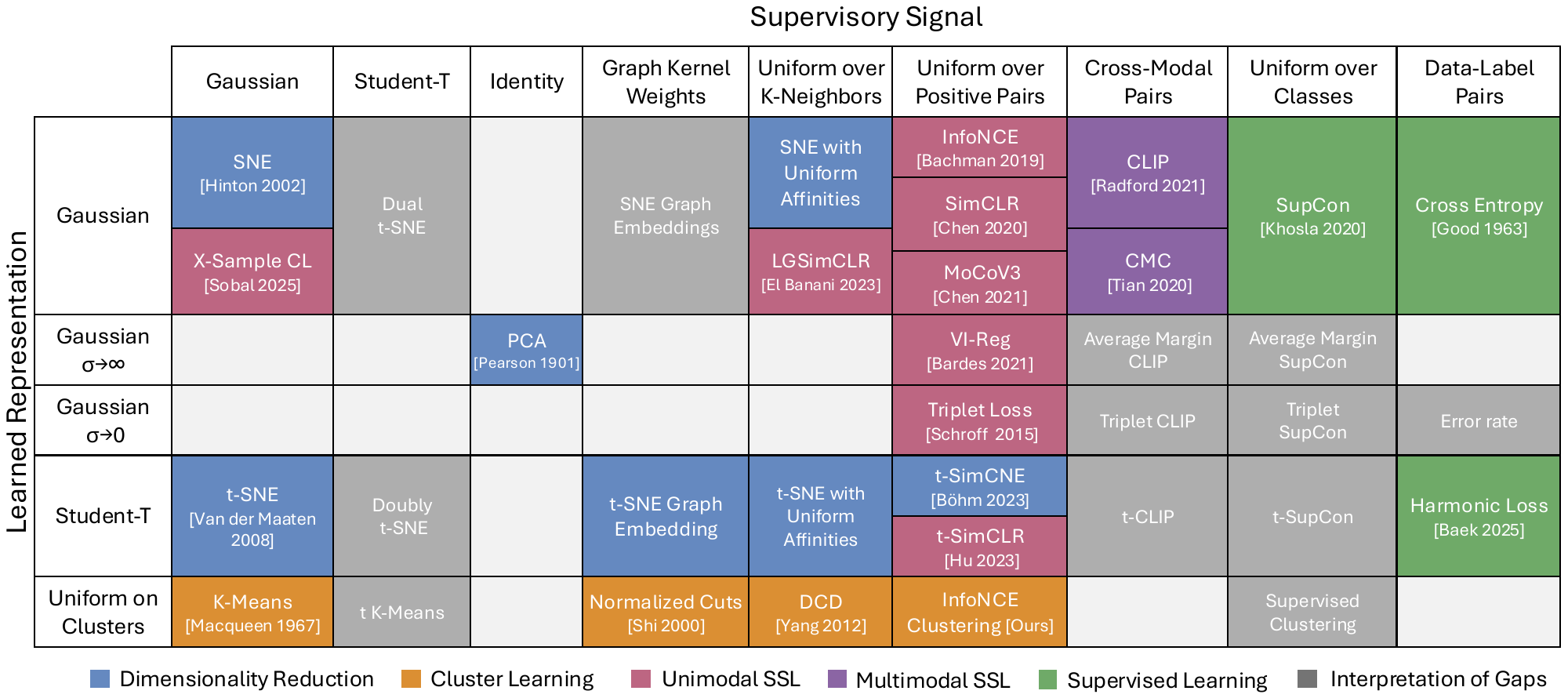}
    }
    \caption{\textbf{A ``periodic'' table of representation learning methods unified by the I-Con framework.} By choosing different types of conditional probability distributions over neighbors, I-Con generalizes over 23 commonly used representation learning methods.}
    \label{fig:overview}
\end{figure}

\begin{abstract}
As the field of representation learning grows, there has been a proliferation of different loss functions to solve different classes of problems. We introduce a single information-theoretic equation that generalizes a large collection of modern loss functions in machine learning. In particular, we introduce a framework that shows that several broad classes of machine learning methods are precisely minimizing an integrated KL divergence between two conditional distributions: the supervisory and learned representations. This viewpoint exposes a hidden information geometry underlying clustering, spectral methods, dimensionality reduction, contrastive learning, and supervised learning. This framework enables the development of new loss functions by combining successful techniques from across the literature. We not only present a wide array of proofs, connecting over 23 different approaches, but we also leverage these theoretical results to create state-of-the-art unsupervised image classifiers that achieve a +8\% improvement over the prior state-of-the-art on unsupervised classification on ImageNet-1K. We also demonstrate that I-Con can be used to derive principled debiasing methods which improve contrastive representation learners.
\end{abstract}

\section{Introduction}

Over the past decade the field of representation learning has flourished, with new techniques, architectures, and loss functions emerging daily. These advances have powered state-of-the-art models in vision, language, and multimodal learning, often with minimal human supervision. Yet as the field expands, the diversity of loss functions makes it increasingly difficult to understand how different methods relate, and which objectives are best suited for a given task.

In this work, we introduce a general mathematical framework that unifies a wide range of representation learning techniques spanning supervised, unsupervised, and self-supervised approaches under a single information-theoretic objective. Our framework, \textbf{Information Contrastive Learning (I-Con)}, reveals that many seemingly disparate methods including clustering, spectral graph theory, contrastive learning, dimensionality reduction, and supervised classification are all special cases of the same underlying loss function.

While prior work has identified isolated connections between subsets of representation learning methods, typically linking only two or three techniques at a time \citep{sobal2024XCL, hu2022your,  yang2022unified, bohm2022unsupervised, balestriero2022contrastive}, \textbf{I-Con is the first framework to unify over 23 distinct methods} under a single objective. This unified perspective not only clarifies the structure of existing techniques but also provides a strong foundation for transferring ideas and improvements across traditionally separate domains.
%To our knowledge, this is also the first work to formally derive clustering methods as instances of the same loss function that governs contrastive, spectral, supervised, and dimensionality reduction approaches.

Using I-Con, we derive new unsupervised loss functions that significantly outperform previous methods on standard image classification benchmarks. Our key contributions are:
\vspace{-0.3em}
\begin{itemize}[leftmargin=2em,itemsep=0.1em]
    \item We introduce \textit{I-Con}, a single information-theoretic loss that generalizes several major classes of representation learning.
    \item We prove 15 theorems showing how diverse algorithms emerge as special cases of I-Con.
    \item We use I-Con to design a debiasing strategy that improves unsupervised ImageNet-1K accuracy by \textbf{+8\%}, with additional gains of \textbf{+3\%} on CIFAR-100 and \textbf{+2\%} on STL-10 in linear probing.
\end{itemize}
\vspace{-1em}
\section{Related Work}

Representation learning spans a wide range of methods for extracting structure from complex data. We review approaches that I-Con builds upon and generalizes. For comprehensive surveys, see \citep{le2020contrastive,bengio2013representation,weng2021contrastive}.

\textbf{Feature Learning} aims to derive informative low-dimensional embeddings using supervisory signals such as pairwise similarities, nearest neighbors, augmentations, class labels, or reconstruction losses. Classical methods like PCA \citep{pearson1901liii} and MDS \citep{kruskal1964multidimensional} preserve global structure, while UMAP \citep{mcinnes2018umap} and t-SNE \citep{hinton2002stochastic,van2008visualizing} focus on local topology by minimizing divergences between joint distributions. I-Con adopts a similar divergence-minimization view.

Contrastive learning approaches such as SimCLR \citep{chen2020simple}, CMC \citep{tian2020contrastive}, CLIP \citep{radford2021learningtransferablevisualmodels}, and MoCo v3 \citep{chen2021mocov3} use positive and negative pairs, often built via augmentations or aligned modalities. I-Con generalizes these losses within a unified KL-based framework, highlighting subtle distinctions between them. Supervised classifiers (e.g., ImageNet models \citep{krizhevsky2017imagenet}) also yield effective features, which I-Con recovers by treating class labels as discrete contrastive points, bridging supervised and unsupervised learning.

\textbf{Clustering} methods uncover discrete structure through distance metrics, graph partitions, or contrastive supervision. Algorithms like k-Means \citep{macqueen1967some}, EM \citep{dempster1977maximum}, and spectral clustering \citep{shi2000normalized} are foundational. Recent methods, including IIC \citep{ji2019invariant}, Contrastive Clustering \citep{li2021contrastive}, and SCAN \citep{vangansbeke2020scanlearningclassifyimages}, leverage invariance and neighborhood structure. Teacher-student models such as TEMI \citep{adaloglou2023exploring} and EMA-based architectures \citep{chen2020improved} enhance clustering further. I-Con encompasses these by aligning a clustering-induced joint distribution with a target distribution derived from similarity, structure, or contrastive signals.

\textbf{Unifying Representation Learning} has been explored through connections between contrastive learning and t-SNE \citep{hu2022your, bohm2022unsupervised}, equivalences between contrastive and cross-entropy losses \citep{yang2022unified}, and relations between spectral and contrastive methods \citep{balestriero2022contrastive, sobal2024XCL}. Other efforts, like Bayesian grammar models \citep{grosse2012exploiting}, offer probabilistic perspectives. Tschannen et al. \citep{tschannen2019mutual} emphasized estimator and architecture design in mutual information frameworks but stopped short of broader unification.

While prior work links subsets of these methods, I-Con, to our knowledge, is the first to unify supervised, contrastive, clustering, and dimensionality reduction objectives under a single loss. This perspective clarifies their shared structure and opens paths to new learning principles.

\begin{figure}[t]
    \centering
    \begin{subfigure}[b]{\linewidth}
        \centering
        \includegraphics[width=\linewidth]{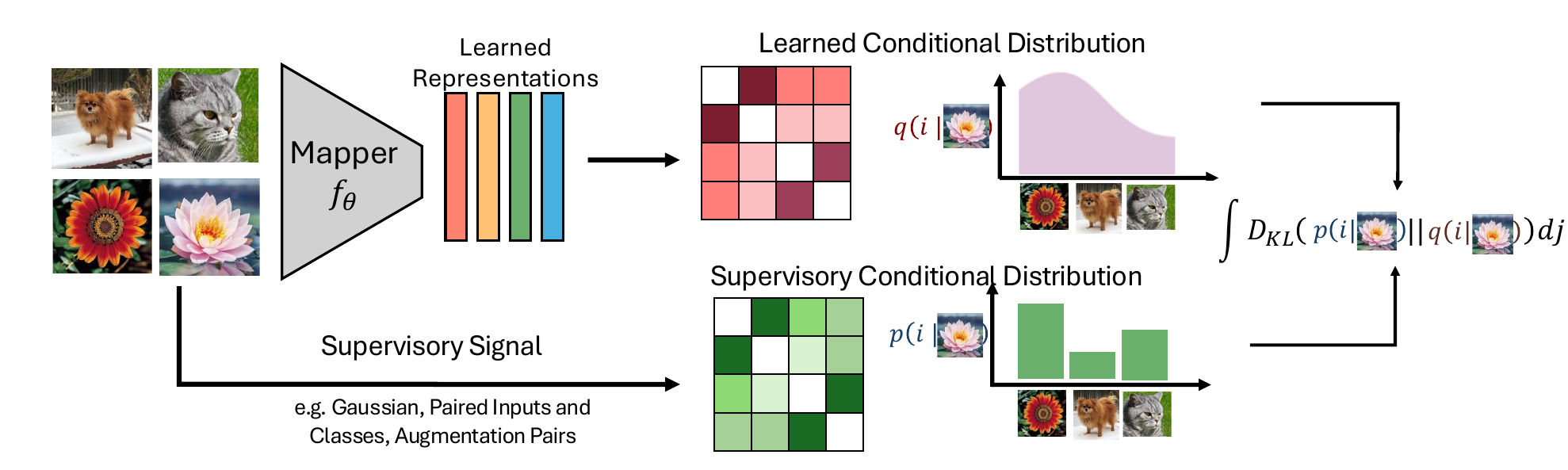}
        \caption{High-level I-Con architecture.}
        \label{fig:architecture}
    \end{subfigure}
    \begin{subfigure}[b]{\linewidth}
        \centering
        \includegraphics[width=\linewidth]{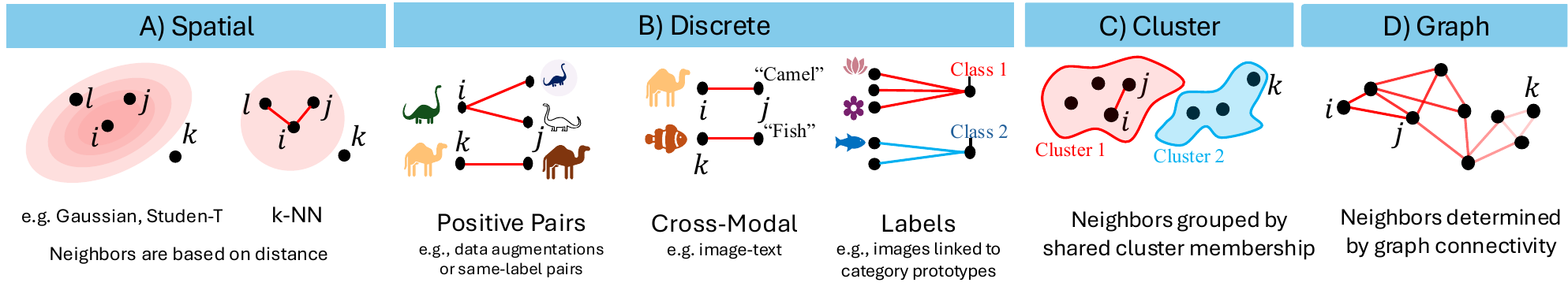}
        \caption{Illustrative examples of distribution families for \(p_\theta\) or \(q_\phi\).}
        \label{fig:distributions}
    \end{subfigure}
    \caption{\textbf{Overview of the I-Con framework}. (a) Alignment of learned and supervisory distributions. (b) Common distribution families in I-Con's formulation.}
    \vspace{-1em}
\end{figure}

\section{Methods}

The I-Con framework unifies multiple representation learning methods under a single loss function: minimizing the average KL divergence between two conditional ``neighborhood distributions'' that define transition probabilities between data points. This information-theoretic objective generalizes techniques from clustering, contrastive learning, dimensionality reduction, spectral graph theory, and supervised learning. By varying the construction of the supervisory distribution and the learned distribution, I-Con encompasses a broad class of existing and novel methods. We introduce I-Con and demonstrate its ability to unify techniques from diverse areas and orchestrate the transfer of ideas across different domains, leading to a state-of-the-art unsupervised image classification method.

\subsection{Information Contrastive Learning}
Let $i,j \in \mathcal{X}$ be elements of a dataset $\mathcal{X}$, with a probabilistic neighborhood function $p(j|i)$ defining a transition probability. To ensure valid probability distributions, $p(j|i) \geq 0$ and $\int_{j \in \mathcal{X}} p(j | i) = 1$. We parameterize this distribution by $\theta \in \Theta$, to create a learnable function $p_\theta(j|i)$. Similarly, we define another distribution $q_\phi(j|i)$ parameterized by $\phi \in \Phi$. The core I-Con loss function is then:

\begin{equation}
    \mathcal{L}(\theta, \phi) = \int_{i \in \mathcal{X}} D_{\text{KL}} \left( p_{\theta}(\cdot|i) || q_{\phi}(\cdot|i ) \right) = \int_{i \in \mathcal{X}} \int_{j \in \mathcal{X}} p_{\theta}(j|i) \log \frac{p_{\theta}(j|i)} {q_{\phi}(j|i)}.
    \label{eqn:icon-loss}
\end{equation}

In practice, $p$ is typically a fixed ``supervisory'' distribution, while $q_\phi$ is learned by comparing deep network representations, prototypes, or clusters. Figure \ref{fig:architecture} illustrates this alignment process. The optimization aligns $q_\phi$ with $p$, minimizing their KL divergence. Although most existing methods optimize only $q_\phi$, I-Con also allows learning both $p_\theta$ and $q_\phi$, although one must take care to prevent trivial solutions.

\subsection{Unifying Representation Learning Algorithms with I-Con}
\vspace{-.5em}
Despite the incredible simplicity of Equation \ref{eqn:icon-loss}, this equation is rich enough to generalize several existing methods in the literature simply by choosing parameterized neighborhood distributions $p_\theta$ and $q_\phi$ as shown in Figure \ref{fig:overview}. We categorize common choices for $p_\theta$ and $q_\phi$ in Figure \ref{fig:architecture}. 

Table \ref{tab:choice_pq} summarizes some key choices which recreate popular methods from contrastive learning (SimCLR, MOCOv3, SupCon, CMC, CLIP, VICReg), dimensionality reduction (SNE, t-SNE, PCA), clustering (K-Means, Spectral, DCD, PMI), and supervised learning (Cross-Entropy and Harmonic Loss). Due to limited space, we defer proofs of each of these theorems to the supplemental material. We also note that Table \ref{tab:choice_pq} is not exhaustive, and we encourage the community to explore whether other learning frameworks implicitly minimize Equation \ref{eqn:icon-loss} for some choice of $p$ and $q$.
\vspace{-.5em}
\subsubsection{Example: SNE, SimCLR, and K-Means}
While I-Con unifies a broad range of methods, we illustrate how different choices of $p$ and $q$ recover well-known techniques such as SNE, SimCLR, and K-Means. Full details are in the appendix.
\begin{wrapfigure}{r}{0.5\textwidth} % 'r' aligns it to the right, adjust width
    \centering
    % Subfigure 1: SNE
    \begin{subfigure}{\linewidth}
        \centering
        \includegraphics[width=\linewidth]{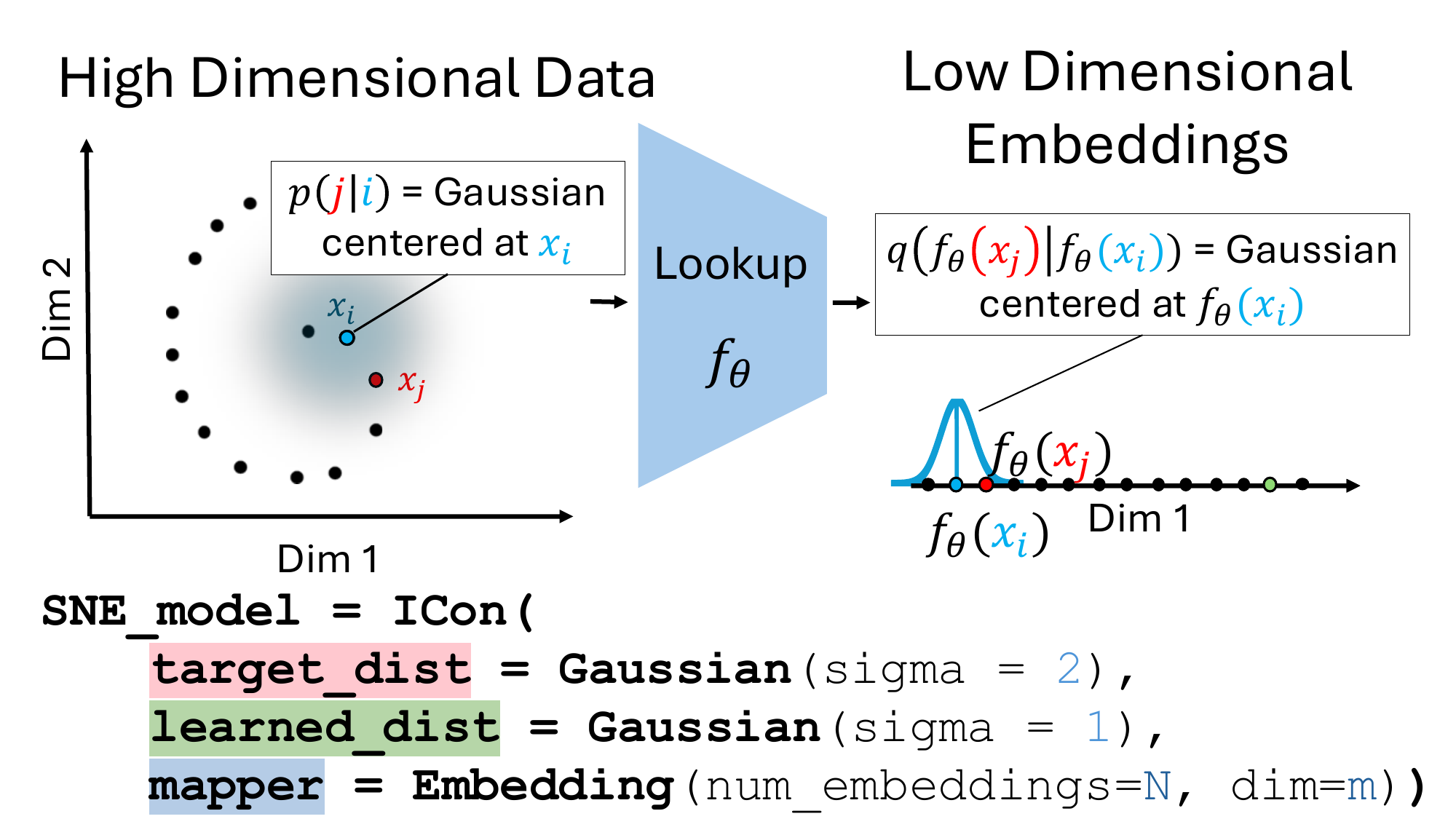}
        \caption{SNE (dimensionality reduction)}
        \label{fig:sne}
    \vspace{1.4em}
    \end{subfigure}
    
    % Subfigure 2: SupCon
    \begin{subfigure}{\linewidth}
        \centering
        \includegraphics[width=\linewidth]{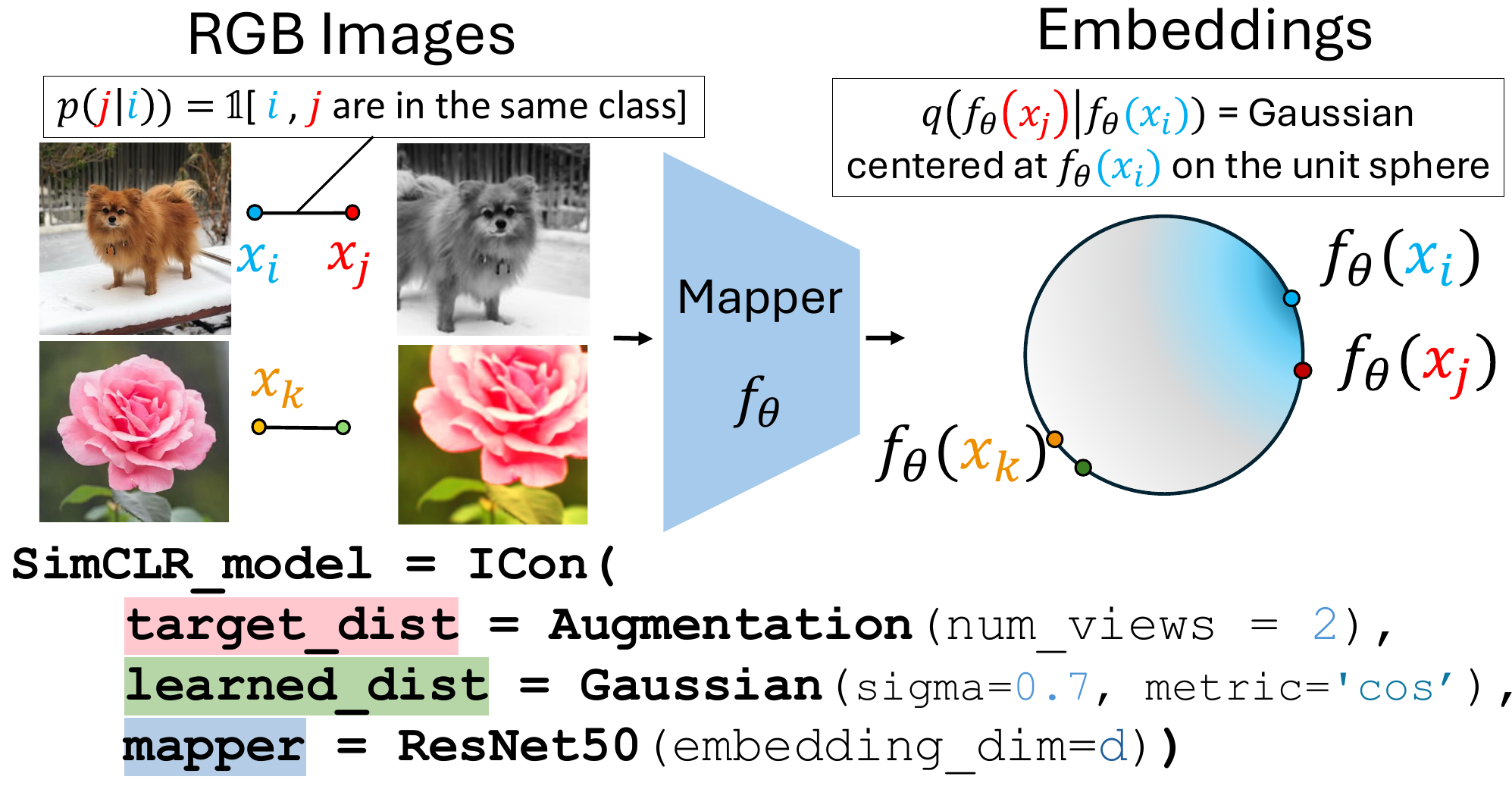}
        \caption{SimCLR (contrastive learning)}
        \label{fig:supcon}
    \end{subfigure}
    \vspace{1.2em}
    % Subfigure 3: DCD
    \begin{subfigure}{\linewidth}
        \centering
        \includegraphics[width=\linewidth]{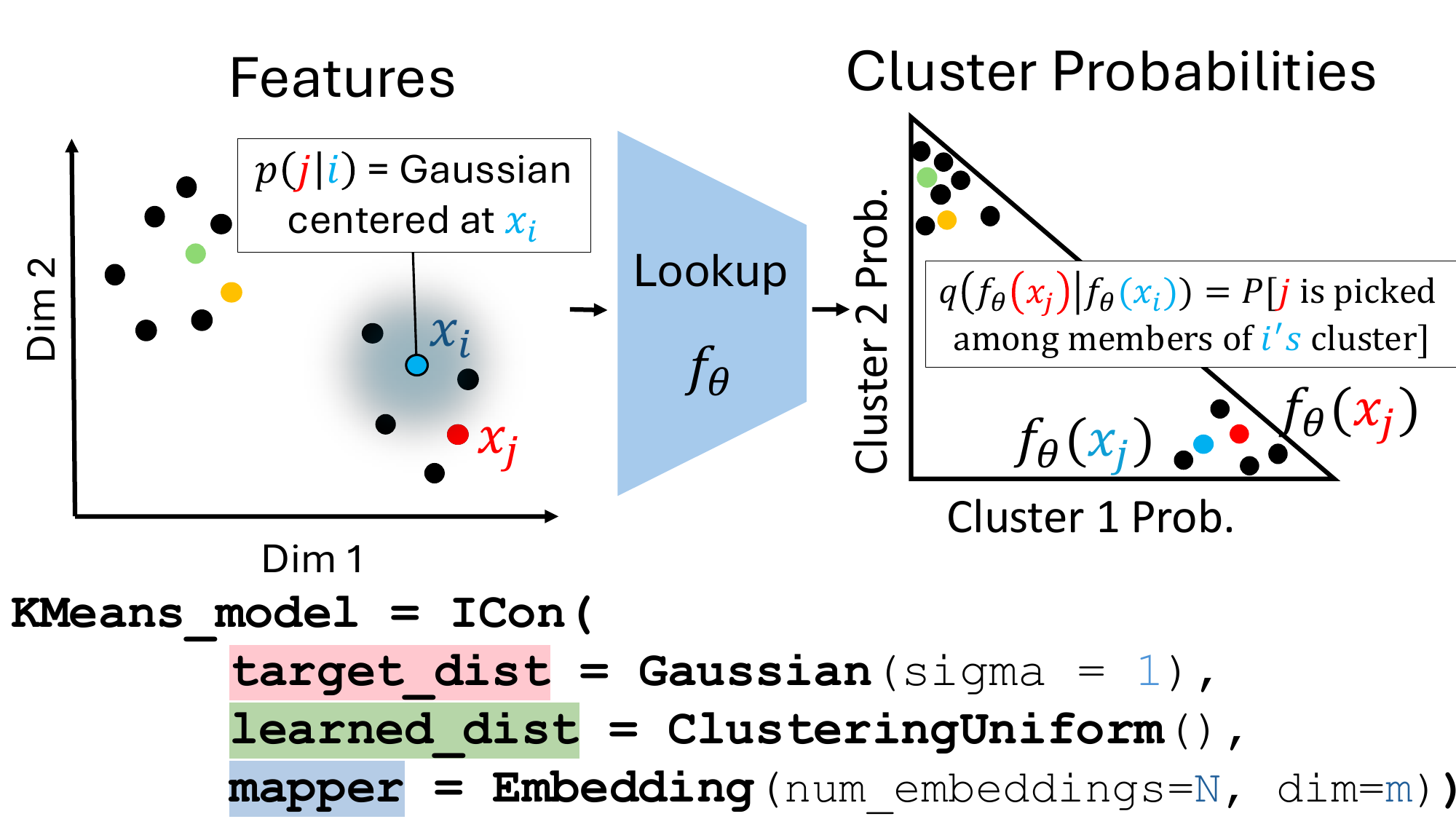}
        \caption{K-Means (clustering)}
        \label{fig:dcd}
    \end{subfigure}
    \vspace{-3em}
    \caption{Examples of methods as special cases of I-Con via different choices of $p$ and $q$, with corresponding code-style configurations.}
    \label{fig:all_methods}
    \vspace{-5em}
\end{wrapfigure}

\textbf{SNE as ``neighbors remain neighbors.''} Stochastic Neighbor Embedding (SNE) is a classic example. Given $x \in \mathbb{R}^{d \times n}$ with $n$ points in $d$ dimensions, SNE learns a low-dimensional representation $\phi \in \mathbb{R}^{m \times n}$, typically $m \ll d$. To preserve local structure, $p(j \mid i)$ is defined by placing a Gaussian around each high-dimensional point $x_i$, and $q_\phi(j \mid i)$ by placing a Gaussian around $\phi_i$. Minimizing the average KL divergence between these distributions ensures that points close in the original space remain close in the embedded space.

\vspace{0.5 em}

\textbf{SimCLR as ``augmentations of the same image are neighbors.''} Contrastive learning methods like SimCLR and SupCon instead use class labels. Here, $p(j \mid i) = 1$ if $j$ is an augmentation of $i$ (and $0$ otherwise). In the embedding space, $q_\phi(j \mid i)$ is defined via a Gaussian-like distribution based on cosine similarity. Minimizing their KL divergence encourages images from the same scene to cluster together.

\vspace{0.6 em}

\textbf{K-Means as ``points that are close are members of the same clusters.''} Clustering-based approaches like K-Means and DCD follow a similar recipe. The distribution $p(j \mid i)$ is again Gaussian-based in the original space, while $q_\phi(j \mid i)$ reflects whether points are assigned to the same cluster in the learned representation. Minimizing KL divergence aligns these cluster assignments with the actual neighborhood structure in the data.  Methods like K-Means include an entropy penalty to enforce hard probabilistic assignments, as shown in Theorem \ref{thm:kmeans}, whereas methods like DCD do not include it.
\begin{table}[p]
\vspace{-4em}
\centering
\renewcommand{\arraystretch}{0.95} % Tighter rows
\setlength{\tabcolsep}{0pt} % Tighter columns

\makebox[\textwidth]{ % This centers the table
    \resizebox{1.2\textwidth}{!}{%
        \tiny 
\begin{tabular}{|>{\centering\arraybackslash}m{3.1cm}
                |>{\centering\arraybackslash}m{3.9cm}
                |>{\centering\arraybackslash}m{5.1cm}|} 
\hline
\rowcolor{lightgray!50}
\hline
\textbf{Method} & \textbf{Choice of $p_\theta(j \mid i)$} & \textbf{Choice of $q_\phi(j \mid i)$} \\[3pt]
\hline
%%%%%%%%%%%%%%%%%%%%%%%%%%%%%%%%%%%%
% A) DIMENSIONALITY REDUCTION
\rowcolor{lightgray!30}
\multicolumn{3}{|c|}{\textbf{(A) Dimensionality Reduction}}\\
\hline
%-----------------------------------------------------------------------------------------
% SNE
{\centering \textbf{SNE}\\\citep{hinton2002stochastic}\\Theorem \ref{thm:sne}}
 & Gaussian over data points, $x_i$
 & Gaussian over learned low-dimensional points, $\phi_i$
 \(\displaystyle
     \frac{\exp(-\|\phi_i - \phi_j\|^2)}
          {\sum_{k \neq i} \exp(-\|\phi_i - \phi_k\|^2)}
   \)
 \\
 \cline{1-1}\cline{3-3}
%-----------------------------------------------------------------------------------------
% t-SNE
{\centering \textbf{t-SNE}\\\citep{van2008visualizing}\\Corollary \ref{thm:tsne}}
 &
 \(\displaystyle
     \frac{\exp(-\|x_i - x_j\|^2 / 2\sigma_i^2)}
          {\sum_{k \neq i}\exp(-\|x_i - x_k\|^2 / 2\sigma_i^2)}
   \)
 & Cauchy distribution over $\phi_i$
 \(\displaystyle
     \frac{(1 + \|\phi_i - \phi_j\|^2)^{-1}}
          {\sum_{k \neq i}(1 + \|\phi_i - \phi_k\|^2)^{-1}}
   \)
 \\
\hline
% PCA
{\centering \textbf{PCA}\\\citep{pearson1901liii}\\Theorem \ref{thm:pca}}
 & \(\displaystyle \mathds{1}[\,i = j\,]\)
 & {\centering 
    Wide Gaussian on linear projection features, $f_\phi(x_i)$  
    \(
      \lim\limits_{\sigma \to \infty}
      \frac{\exp(-\|f_\phi(x_i) - f_\phi(x_j)\|^2 / 2\sigma^2)}
           {\sum_{k \neq i} \exp(-\|f_\phi(x_i) - f_\phi(x_k)\|^2 / 2\sigma^2)}
    \)
 } \\
\hline
%-----------------------------------------------------------------------------------------
%%%%%%%%%%%%%%%%%%%%%%%%%%%%%%%%%%%%
% B) CONTRASTIVE LEARNING
\rowcolor{lightgray!30}
\multicolumn{3}{|c|}{\textbf{(B) Contrastive Learning}}\\
\hline
{\centering \textbf{InfoNCE Loss}\\\citep{bachman2019learning}\\Theorem \ref{thm:infonce}}
 &
 & {\centering
  % Triplet Loss q_\phi
   Gaussian on deep normalized features
   {
\(
     \frac{\exp\bigl({f_\phi(x_i) \cdot f_\phi(x_j)}\bigr)}
          {\sum\limits_{k\neq i}
           \exp\bigl(f_\phi(x_i) \cdot f_\phi(x_k)\bigr)}
\)}
 } \\
 \cline{1-1}\cline{3-3}
{\centering 
\textbf{Triplet Loss}\\\citep{schroff2015facenet}\\Theorem \ref{thm:triplet_loss}}
 & \(\frac{1}{Z}\mathds{1}[\text{$i$ and $j$ are a positive pair}]\)
 & {\centering
   Gaussian on deep features (1 neg. sample, $\sigma\to 0$)
   {\(
     \frac{\exp\bigl({-\|f_\phi(x_i) - f_\phi(x_j)\|^2/2\sigma^2}\bigr)}
          {\sum\limits_{k\in\{i^+,\,i^-\}}
           \exp\bigl(-\|f_\phi(x_i) - f_\phi(x_k)\|^2/2\sigma^2\bigr)}
\)}}\\
\cline{1-1}\cline{3-3}
{\centering 
\textbf{t-SimCLR, t-SimCNE}\\\citep{hu2022your, bohm2022unsupervised}\\Corollary \ref{cor:tsmiclr}}
 &
 & {\centering Student-T on deep features \hspace{10pt}
   \(\frac{(1 + \|\phi_i - \phi_j\|^2/\nu)^{-(\nu+1)/2}}
          {\sum_{k \neq i} (1 + \|\phi_i - \phi_k\|^2/\nu)^{-(\nu+1)/2}}
   \)
 }\\
 \cline{1-1}\cline{3-3}
{\centering \textbf{VICReg}*\\
without covariance term\\\citep{bardes2021vicreg}\\Theorem \ref{thm:vicreg}}
 &
 &  Wide Gaussian on learned features  
    \(
      \lim\limits_{\sigma \to \infty}
      \frac{\exp(-\|f_\phi(x_i) - f_\phi(x_j)\|^2 / 2\sigma^2)}
           {\sum_{k \neq i} \exp(-\|f_\phi(x_i) - f_\phi(x_k)\|^2 / 2\sigma^2)}
    \) \\
\hline
%-----------------------------------------------------------------------------------------
{\centering \textbf{SupCon}\\\citep{khosla2020supervised}\\Theorem \ref{thm:supcon}}
 & \(\displaystyle
   \frac{1}{Z}\,\mathds{1}[\text{$i$ and $j$ have same class}]
 \)
 &\\
\cline{1-2}
{\centering \textbf{X-Sample}\\\citep{sobal2024XCL}\\Theorem \ref{thm:XCL}}
 & Gaussian on corresponding embeddings
 \(\displaystyle \frac{\exp\bigl({g_\theta(x_i) \cdot g_\theta(x_j)}\bigr)}
          {\sum\limits_{k\neq i}
           \exp\bigl(g_\theta(x_i) \cdot_\theta(x_k)\bigr)}
\)
 &  {{\centering Gaussian on deep normalized features
\(\displaystyle \frac{\exp\bigl({f_\phi(x_i) \cdot f_\phi(x_j)}\bigr)}
          {\sum\limits_{k\neq i}
           \exp\bigl(f_\phi(x_i) \cdot f_\phi(x_k)\bigr)}
\)
 }}\\
\cline{1-2}
{\centering \textbf{LGSimCLR}\\\citep{el2023learning}}
 & \(\displaystyle
   \frac{1}{Z}\,\mathds{1}[\text{$x_i$ is among $x_j$'s $k$ nearest neighbors}]
 \)
 &  \\
 \hline

%-----------------------------------------------------------------------------------------
% CMC & CLIP
{\centering \textbf{CMC \& CLIP}\\\citep{tian2020contrastive}\\Theorem \ref{thm:cmc}}
 & \(\displaystyle
   \frac{1}{Z}\,\mathds{1}[\text{$i$,$j$ pos.\ pairs, }V_i\neq V_j]
 \)
 &
 
 \(\displaystyle
   \frac{\exp\bigl({f_\phi(x_i) \cdot f_\phi(x_j)}\bigr)}
          {\sum_{k \in V_j}
           \exp\bigl(f_\phi(x_i) \cdot f_\phi(x_k)\bigr)}
 \)
\\
\hline
%%%%%%%%%%%%%%%%%%%%%%%%%%%%%%%%%%%%
% C) SUPERVISED LEARNING
\rowcolor{lightgray!30}
\multicolumn{3}{|c|}{\textbf{(C) Supervised Learning}}\\
\hline
%-----------------------------------------------------------------------------------------

%-----------------------------------------------------------------------------------------
{\centering \textbf{Supervised Cross Entropy}\\\citep{good1963maximum}\\Theorem \ref{thm:ce}}
 & Indicator over classes
 & \(\displaystyle
   \frac{\exp\bigl(f_\phi(x_i)\cdot\phi_j\bigr)}
        {\sum_{k \in C}\exp\bigl(f_\phi(x_i)\cdot\phi_k\bigr)}
 \)
\\
\cline{1-1}\cline{3-3}
{\centering \textbf{Harmonic Loss}\\\citep{baek2025harmonic}\\Theorem \ref{thm:harmonic}}
 & \(\displaystyle
   \mathds{1}\bigl[\text{$i$ belongs to class $j$}\bigr]
 \)
 & Student-T on deep features and class prototypes 
 \(\lim\limits_{\sigma \rightarrow 0}\frac{(\sigma^2+\|f_\phi(x_i)-\phi_j\|^2)^{-n}}
        {\sum_{k \in C} (\sigma^2+\|f_\phi(x_i)-\phi_k\|)^{-n}}
 \)
\\
\hline
{\centering \textbf{Masked Lang. Modeling}\\\citep{devlin2019bertpretrainingdeepbidirectional}\\Theorem \ref{thm:mlm}}
 & \(\displaystyle
   \frac{1}{Z}\,\#\bigl[\text{Context $i$ precedes token $j$}\bigr]
 \)
 & \(\displaystyle
   \frac{\exp\bigl(f_\phi(x_i)\cdot\phi_j\bigr)}
        {\sum_{k \in C}\exp\bigl(f_\phi(x_i)\cdot\phi_k\bigr)}
 \)
\\
\hline
%%%%%%%%%%%%%%%%%%%%%%%%%%%%%%%%%%%%
% D) CLUSTERING
\rowcolor{lightgray!30}
\multicolumn{3}{|c|}{\textbf{(D) Clustering}}\\
\hline
%-----------------------------------------------------------------------------------------
% Prob k-Means + Spectral

{\centering 
\textbf{Probabilistic k-Means}\\\citep{macqueen1967some}\\Theorem \ref{thm:kmeans}} & Intra-cluster uniform probability
 &  {\centering
   Gaussians on datapoints
   \(
     \frac{\exp(-\|x_i - x_j\|^2/2\sigma_i^2)}
          {\sum_{k \neq i}\exp(-\|x_i - x_k\|^2/2\sigma_i^2)}
   \)}\\
\cline{1-1}\cline{3-3}
{\centering 
\textbf{Spectral Clustering}\\\citep{ng2001spectral}\\Corollary \ref{cor:spectral}}
 & \(\displaystyle
   \sum_{c=1}^m 
   \frac{p\bigl(f_\theta(x_i)\text{ and }f_\theta(x_j)\text{ in }c\bigr)}
        {\mathbb{E}[\text{size of cluster } c]}
 \)
 & {\centering
   Gaussians on spectral embeddings
   \(
     \frac{\exp(-\|x_i - x_j\|^2/2\sigma_i^2)}
          {\sum_{k \neq i}\exp(-\|x_i - x_k\|^2/2\sigma_i^2)}
   \)
 }\\
\hline
%-----------------------------------------------------------------------------------------
% Normalized Cuts
{\centering \textbf{Normalized Cuts}\\\citep{shi2000normalized}\\Theorem \ref{thm:normcuts}}
 & Intra-cluster uniform probability weighted by degree
 \(\displaystyle
     \sum_{c=1}^m
     \frac{p\bigl(f_\theta(x_i)\text{ and }f_\theta(x_j)\text{ in }c\bigr)\cdot d_j}
          {\mathbb{E}[\text{degree of cluster } c]}
   \)
 & Gaussians on graph weigths
 \(\displaystyle
     \frac{\exp(w_{ij}/d_j)}
          {\sum_{k}\exp(w_{ik}/d_k)}
   \)
\\
\hline
%-----------------------------------------------------------------------------------------
% Mutual Info Clustering
{\centering \textbf{PMI Clustering}\\\citep{adaloglou2023exploring}\\Theorem \ref{thm:pmi}}
 & \(\displaystyle
   \frac{1}{k}\,\mathds{1}[\text{$j$ is $k$-NN of $i$}]
 \)
 & Intra-Cluster Uniform Probability
 \(\displaystyle
   \sum_{c=1}^m 
   \frac{p\!\bigl(f_\theta(x_i)\text{ and }f_\theta(x_j)\text{ in }c\bigr)}
        {\mathbb{E}[\text{size of cluster } c]}
 \)
\\
\hline
{\centering \textbf{Debaised InfoNCE Clustering}\\ (ours)}
 & Debiased Graph through Uniform Distribution and Neighbor Propagation
 & 
 \(\displaystyle
   \sum_{c=1}^m 
   \frac{(1-\alpha)p\bigl(f_\theta(x_i)\text{ and }f_\theta(x_j)\text{ in }c\bigr)}
        {\mathbb{E}[\text{size of cluster } c]} + \frac{\alpha}{N}
 \)
\\
\hline
\end{tabular}
}
}
\vspace{-1em}
\caption{\textbf{I-Con unifies representation learners} under different choices of $p_\theta(j|i)$ and $q_\phi(j|i)$. Proofs of the propositions in this table can be found in the supplement.}
\label{tab:choice_pq}
\end{table}

\begin{figure}[t]
    \vspace{-1em}
    \centering
    \begin{subfigure}[b]{\linewidth}
        \centering
        \includegraphics[width=0.9\linewidth]{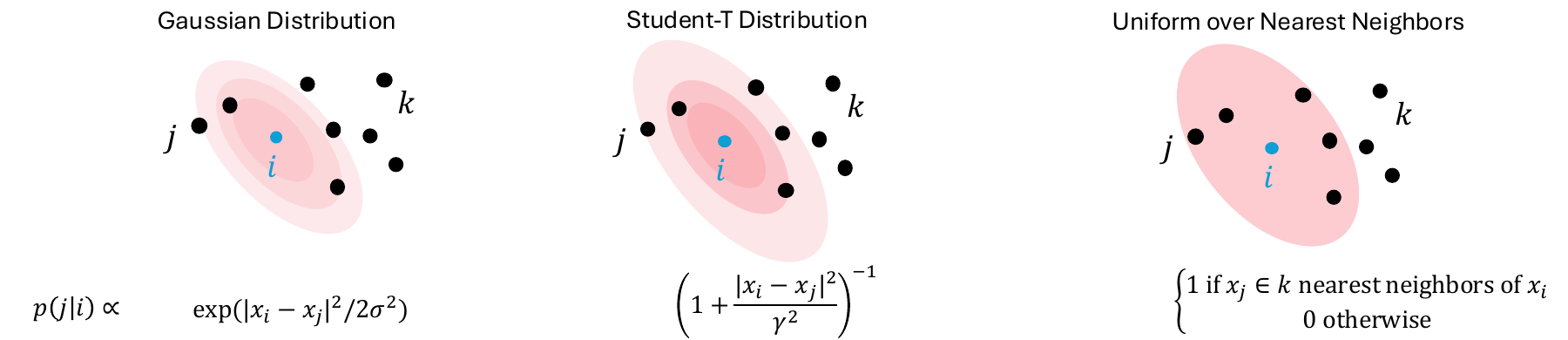}
        \caption{Continuous distance-based distributions control neighborhood width via hyperparameters.}
        \label{fig:distributions2}
    \end{subfigure}
    \begin{subfigure}[b]{\linewidth}
        \centering
        \includegraphics[width=0.9\linewidth]{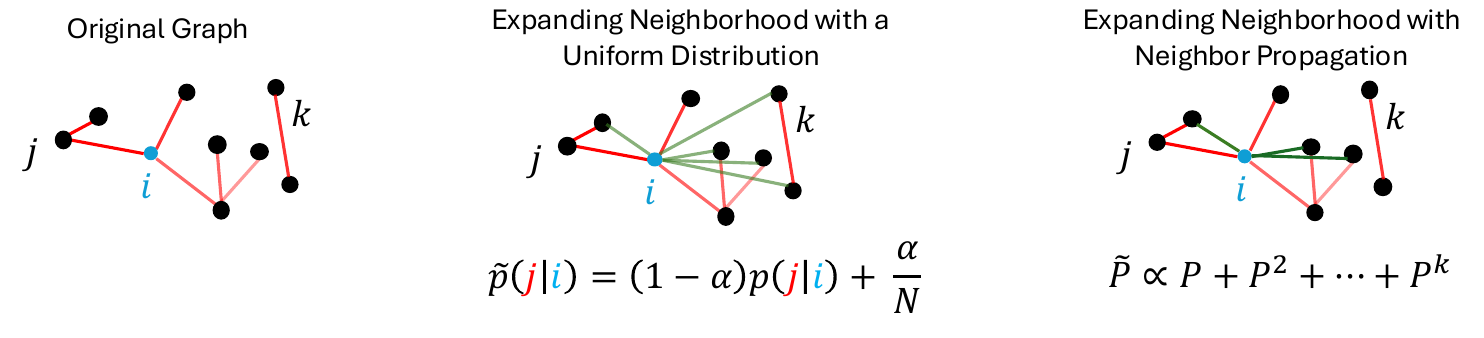}
        \caption{Graph-based distributions expand neighborhoods through structural strategies.}
        \label{fig:distributions3}
    \end{subfigure}
    \label{fig:expanding_neighborhoods}
    \caption{\textbf{Neighborhood adaptation in continuous and discrete settings.} (a) Distance-based distributions modulate neighborhood “width” via parameters such as \(\sigma\). (b) Graph-based approaches modify the connectivity directly, often via random walks or added edges, thereby broadening each node’s neighborhood.}
\end{figure}

\subsection{Creating New Representation Learners with I-Con}
The I-Con framework unifies various approaches to representation learning under a single mathematical formulation and, crucially, facilitates the transfer of techniques among different domains. For instance, a trick from contrastive learning can be applied to clustering—or vice versa. In this paper, we demonstrate how surveying modern representation methods enables the development of clustering and unsupervised classification algorithms that surpass previous performance levels. Specifically, we integrate insights from spectral clustering, t-SNE, and debiased contrastive learning \citep{chuang2020debiasedcontrastivelearning} to build a state-of-the-art unsupervised image classification pipeline.

\subsubsection{Debasing}
Debiased Contrastive Learning (DCL) addresses the mismatch caused by random negative sampling in contrastive learning, especially when the number of classes is small. Randomly chosen negatives can turn out to be positives, introducing spurious repulsive forces between similar examples. \citet{chuang2020debiasedcontrastivelearning} rectify this by subtracting out such false repulsion terms and boosting attractive forces, substantially improving representation quality. However, their method modifies the softmax itself, implying that \(\,q_{j|i}\,\) is no longer a genuine probability distribution and making it more difficult to extend the approach to clustering or supervised tasks.

Our view, grounded in the I-Con framework, suggests a simpler and more general alternative: rather than adjusting the learned distribution \(q_{j|i}\), we incorporate additional “uncertainty” directly into the supervisory distribution \(p(j|i)\). This preserves \(\,q_{j|i}\,\) as a valid distribution and keeps the method applicable to tasks beyond contrastive learning.
%\subsubsection{Debasing as a Neighborhood Expansion}
%\paragraph{Neighborhood width in I-Con.} Within I-Con, every representation-learning method can be characterized by the choice of distributions \(p\) and \(q\) and the choice of mapper. Methods such as SimCLR and Triplet Loss focus on tight neighborhoods (nearby samples), whereas PCA effectively treats all samples with equal weighting. Figure~\ref{fig:overview} illustrates this continuum: PCA’s large \(\sigma \rightarrow \infty\) aligns with a “wide” neighborhood, whereas t-SNE and SimCLR stay local.

%\paragraph{Distributional control in continuous settings.} Certain continuous methods tune this neighborhood width by switching distributions. t-SNE, for example, swaps out SNE’s Gaussian neighborhoods for Student-\(t\) distributions, whose heavier tails mitigate crowding \citep{van2008visualizing}. Similarly, \citet{hu2022your} use distributions with heavier tails to strengthen contrastive learning.

%\paragraph{Graph-based expansions for discrete settings.} In discrete or graph-like domains, neighborhoods expand via structural changes. A common trick is to add uniform “teleportation” edges, analogous to PageRank, to ensure every node is reachable. Alternatively, one may define neighborhoods through multi-step random walks, smoothing out the local adjacency structure. Under I-Con, both tactics can effectively “debias” by injecting uniform or expanded connectivity, thereby preventing any single link or node from dominating the objective.
\subsubsection{Debiasing through a Uniform Distribution}
Our first example adopts a simple uniform mixture:
\[
    \tilde{p}(j|i) \;=\; (1 - \alpha)\, p(j|i)\;+\; \frac{\alpha}{N},
\]
where \(N\) is the local neighborhood size, and \(\alpha\) specifies the degree of mixing. This approach assigns a small probability mass \(\frac{\alpha}{N}\) to each “negative” sample, thereby mitigating overconfident allocations. In supervised contexts, this is analogous to label smoothing~\citep{szegedy2016rethinking}. In contrast, \citet{chuang2020debiasedcontrastivelearning} adjust the softmax function itself while retaining one-hot labels.

Another way to view this method is through the lens of heavier-tailed or broader distributions. By adding a uniform component, we mirror the idea in t-SNE’s Student-\(t\) distribution~\citep{van2008visualizing}, which allocates greater mass to distant points. In both cases, expanding the distribution reduces the chance of overfitting to a narrowly defined set of neighbors.

Empirical results in Tables~\ref{tab:main_ablation}, Figures~\ref{fig:debiasing}, and \ref{fig:debiasing-ablation} show that this lightweight modification consistently improves performance across various tasks and batch sizes. It also “relaxes” overconfident distributions, much like label smoothing in supervised cross entropy, thereby guarding against vanishing gradients.

\subsubsection{Debiasing through Neighbor Propagation}
A second strategy applies graph-based expansions. As shown in Table~\ref{tab:choice_pq}, replacing k-Means’ Gaussian neighborhoods with degree-weighted \(k\)-nearest neighbors recovers spectral clustering, which is known for robust, high-quality solutions. Building on this idea, we train contrastive learners with KNN-based neighborhood definitions. Given the nearest-neighbor graph, we can further expand it by taking longer walks, analogous to Word-Graph2Vec or tsNET~\citep{li2023wordgraph2vecefficientwordembedding, kruiger2017graph}, a process we term \emph{neighbor propagation}.

Formally, let \(P\) be the conditional distribution matrix whose entries \(P_{ij} = p(x_j \mid x_i)\) define the probability of selecting \(x_j\) as a neighbor of \(x_i\). Interpreting \(P\) as the adjacency matrix of the training data, we can smooth it by summing powers of \(P\) up to length \(k\):

\[
\tilde{P} \;\propto\; P + P^2 + \dots + P^k.
\]

We can further simplify this by taking a uniform distribution over all points reachable within \(k\) steps, denoted by:

\[
\tilde{P}_U \;\propto\; I\bigl[\,P + P^2 + \dots + P^k \;>\; 0\bigr],
\]
where \(I[\cdot]\) is the indicator function. This walk-based smoothing broadens the effective neighborhood, allowing the model to learn from a denser supervisory signal. 

Tables~\ref{tab:main_ablation} and \ref{tab:walk-ablation} confirm that adopting such a propagation-based approach yields significant improvements in unsupervised image classification, underscoring the effectiveness of neighborhood expansion as a debiasing strategy.

\begin{figure}[t]
    \centering
    \includegraphics[width=\linewidth]{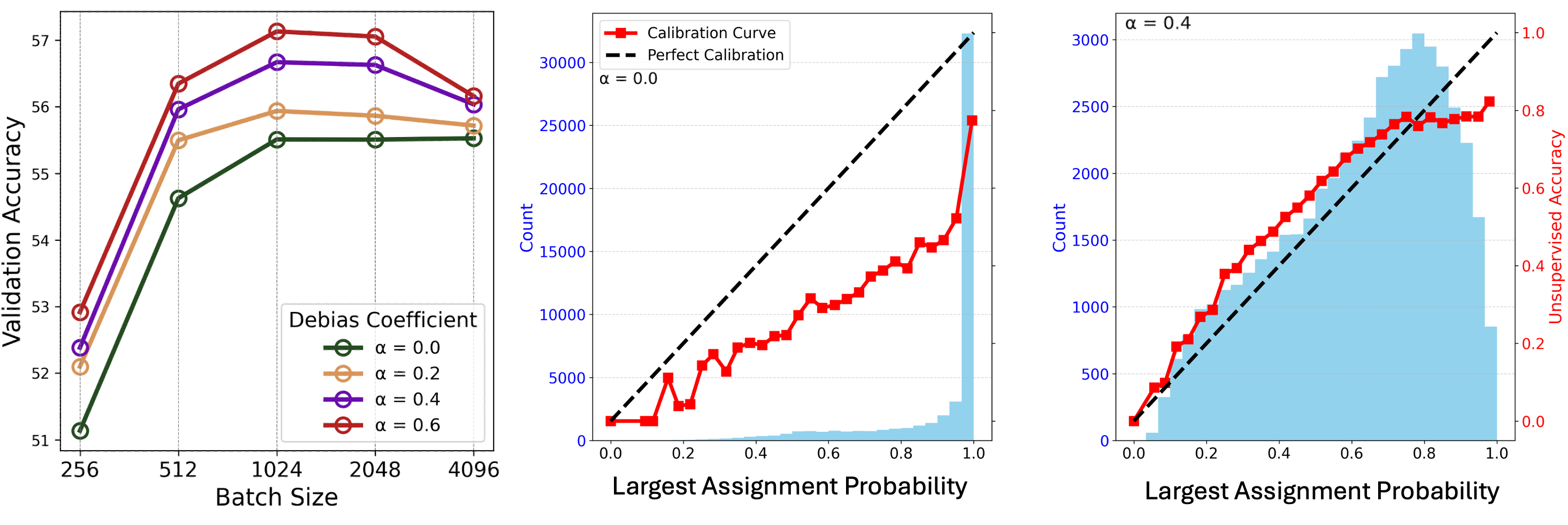}
    \caption{Left: Debiasing cluster learning improves performance on ImageNet-1K across batch sizes. Center: Distribution of maximum predicted probabilities for the biased model ($\alpha = 0$) showing poor calibration, with overconfident predictions. { Right: Distribution of maximum predicted probabilities for the debiased model ($\alpha = 0.4$), demonstrating improved probability calibration. Debiased training alleviates optimization stiffness by reducing the prevalence of saturated logits, mitigating vanishing gradient issues, and fostering more robust and well-calibrated learning dynamics.}}
    \label{fig:debiasing}
\end{figure}

\section{Experiments}
In this section, we demonstrate that the I-Con framework offers testable hypotheses and practical insights into self-supervised and unsupervised learning. Rather than aiming only for state-of-the-art performance, our goal is to show how I-Con can enhance existing unsupervised learning methods by leveraging a unified information-theoretic approach. Through this framework, we also highlight the potential for cross-pollination between techniques in varied machine learning domains, such as clustering, contrastive learning, and dimensionality reduction. This transfer of techniques, enabled by I-Con, can significantly improve existing methodologies and open new avenues for exploration.

We focus our experiments on clustering because it is relatively understudied compared to contrastive learning, and there are a variety of techniques that can now be adapted to this task. By connecting established methods such as k-Means, SimCLR, and t-SNE within the I-Con framework, we uncover a wide range of possibilities for improving clustering methods. We validate these theoretical insights experimentally, demonstrating the practical impact of I-Con.

We evaluate the I-Con framework using the ImageNet-1K dataset \citep{deng2009imagenet}, which consists of 1,000 classes and over one million high-resolution images. This dataset is considered one of the most challenging benchmarks for unsupervised image classification due to its scale and complexity. To ensure a fair comparison with prior works, we strictly adhere to the experimental protocol introduced by \citep{adaloglou2023exploring}. The primary metric for evaluating clustering performance is Hungarian accuracy, which measures the quality of cluster assignments by finding the optimal alignment between predicted clusters and ground truth labels via the Hungarian algorithm \citep{ji2019invariant}. This approach provides a robust measure of clustering performance in an unsupervised context, where direct label supervision is absent during training.

For feature extraction, we utilize the DiNO pre-trained Vision Transformer (ViT) models in three variants: ViT-S/14, ViT-B/14, and ViT-L/14 \citep{caron2021emergingpropertiesselfsupervisedvision}. These models are chosen to ensure comparability with previous work and to explore how the I-Con framework performs across varying model capacities. The experimental setup, including training protocols, optimization strategies, and data augmentations, mirrors those used in TEMI to ensure consistency in methodology.

The training process involved optimizing a linear classifier on top of the features extracted by the DiNO models. Each model was trained for 30 epochs, using ADAM \citep{kingma2017adammethodstochasticoptimization} with a batch size of 4096 and an initial learning rate of 1e-3. We decayed the learning rate by a factor of 0.5 every 10 epochs to allow for stable convergence. We do not apply additional normalization to the feature vectors. During training, we applied a variety of data augmentation techniques, including random re-scaling, cropping, color jittering, and Gaussian blurring, to create robust feature representations. Furthermore, to enhance the clustering performance, we pre-computed global nearest neighbors for each image in the dataset using cosine similarity. This allowed us to sample two augmentations and two nearest neighbors for each image in every training batch, thus incorporating both local and global information into the learned representations. We refer to our derived approach as ``InfoNCE Clusting'' in Table \ref{tab:results}. In particular, we use a supervisory neighborhood comprised of augmentations, KNNs ($k=3$), and KNN walks of length $1$. We use the ``shared cluster likelihood by cluster'' neighborhood from k-Means (See table \ref{tab:choice_pq} for a more detailed Equation) as our learned neighborhood function to drive cluster learning.

\subsection{Baselines}
We compare our method against several state-of-the-art clustering methods, including TEMI, SCAN, IIC, and Contrastive Clustering. These methods rely on augmentations and learned representations, but often require additional regularization terms or loss adjustments, such as controlling cluster size or reducing the weight of affinity losses. In contrast, our I-Con-based loss function is self-balancing and does not require such manual tuning, making it a cleaner, more theoretically grounded approach. This allows us to achieve higher accuracy and more stable convergence across three different-sized backbones.

\subsection{Results}
\begin{table}[ht]
\centering
\begin{tabular}{p{0pt}lccc}
\toprule
& Method & DiNO ViT-S/14 & DiNO ViT-B/14 & DiNO ViT-L/14 \\
\midrule
& k-Means                                & 51.84 & 52.26 & 53.36 \\
& Contrastive Clustering                 & 47.35 & 55.64 & 59.84 \\
& SCAN                                   & 49.20 & 55.60 & 60.15 \\ 
& TEMI                                   & 56.84 & 58.62 & -- \\
& \textbf{Debiased InfoNCE Clustering (Ours)}     & \textbf{57.8} $\pm$ 0.26 & \textbf{64.75} $\pm$ 0.18 & \textbf{67.52} $\pm$ 0.28 \\ 
\bottomrule
\end{tabular}
\caption{Comparison of methods on ImageNet-1K clustering with respect to Hungarian Accuracy. Debiased InfoNCE Clustering significantly outperforms the prior state-of-the-art TEMI. Note that TEMI does not report results for ViT-L.}
\label{tab:results}
\end{table}

Table \ref{tab:results} compared the Hungarian accuracy of Debiased InfoNCE Clustering across different DiNO variants (ViT-S/14, ViT-B/14, ViT-L/14) and several other modern clustering methods. The I-Con framework consistently outperforms the prior state-of-the-art method across all model sizes. Specifically, for the DiNO ViT-B/14 and ViT-L/14 models, debiased InfoNCE clustering achieves significant performance gains of +4.5\% and +7.8\% in Hungarian accuracy compared to TEMI, the prior state-of-the-art ImageNet clusterer. We attribute these improvements to two main factors:

\textbf{Self-Balancing Loss:} Unlike TEMI or SCAN, which require hand-tuned regularizations (e.g., balancing cluster sizes or managing the weight of affinity losses), I-Con's loss function automatically balances these factors without additional {regularization} hyper-parameter tuning as we are using the exact same clustering kernel used by k-Means. This theoretical underpinning leads to more robust and accurate clusters.

\textbf{Cross-Domain Insights:}  I-Con leverages insights from contrastive learning to refine clustering by looking at pairs of images based on their embeddings, treating augmentations and neighbors similarly. This approach, originally successful in contrastive learning, translates well into clustering and leads to improved performance on noisy high-dimensional image data.

\subsection{Ablations}
We conduct several ablation studies to experimentally justify the architectural improvements that emerged from analyzing contrastive clustering through the I-Con framework. These ablations focus on two key areas: the effect of incorporating debiasing into the target and embedding spaces and the impact of neighbor propagation strategies.

\begin{table}[ht]
\centering
\begin{tabular}{lcccc}
\toprule
\textbf{Method} & {DiNO ViT-S/14} & {DiNO ViT-B/14} & {DiNO ViT-L/14} \\
\midrule
Baseline                 & 55.51 & 63.03 & 65.70 \\
\hspace{2pt} + Debiasing  & 57.27$\pm$ 0.07 & 63.72 $\pm$ 0.09 & 66.87 $\pm$ 0.07 \\
\hspace{2pt} + KNN Propagation & \textbf{58.45} $\pm$ 0.23 & 64.87 $\pm$ 0.19  & 67.25 $\pm$ 0.21  \\
\hspace{2pt} + EMA   & {57.8} $\pm$ 0.26 & \textbf{64.75} $\pm$ 0.18 & \textbf{67.52} $\pm$ 0.28 \\
 %+ cluster size regularization                  & \textbf{60}  &   \\
\bottomrule
\end{tabular}
%\intodo{add error bars}
\caption{Ablation study of new techniques discovered through the I-Con framework. We compare ImageNet-1K clustering accuracy across different sized backbones. }
\label{tab:main_ablation}
\end{table}

\begin{table}[ht]
\centering
\begin{tabular}{lcccc}
\toprule
\textbf{Method} & {DiNO ViT-S/14} & {DiNO ViT-B/14} & {DiNO ViT-L/14}  \\
\midrule
Baseline                & 55.51   & 63.03 & 65.72    \\
% \textbf{Soft-Clustering on Local Neighbors}                 & 53.36 & 62.66 & 65.65     \\
\hspace{12pt}+ KNNs                 & 56.43  & {64.26} & 65.70    \\
\hspace{12pt}+ 1-walks on KNN        & \textbf{58.09}   & \textbf{64.29}   & 
{65.97}    \\
\hspace{12pt}+ 2-walks on KNN        & 57.84   & 64.27   & \textbf{67.26}     \\
\hspace{12pt}+ 3-walks on KNN         & 57.82   & 64.15   & 67.02     \\

\bottomrule
\end{tabular}
\caption{Ablation Study on Neighbor Propagation. Adding both KNNs and walks of length 1 or 2 on the KNN graph achieves the best performance. }
\label{tab:walk-ablation}
\end{table}

\begin{figure}[ht]
    \centering
    \includegraphics[width=\linewidth]{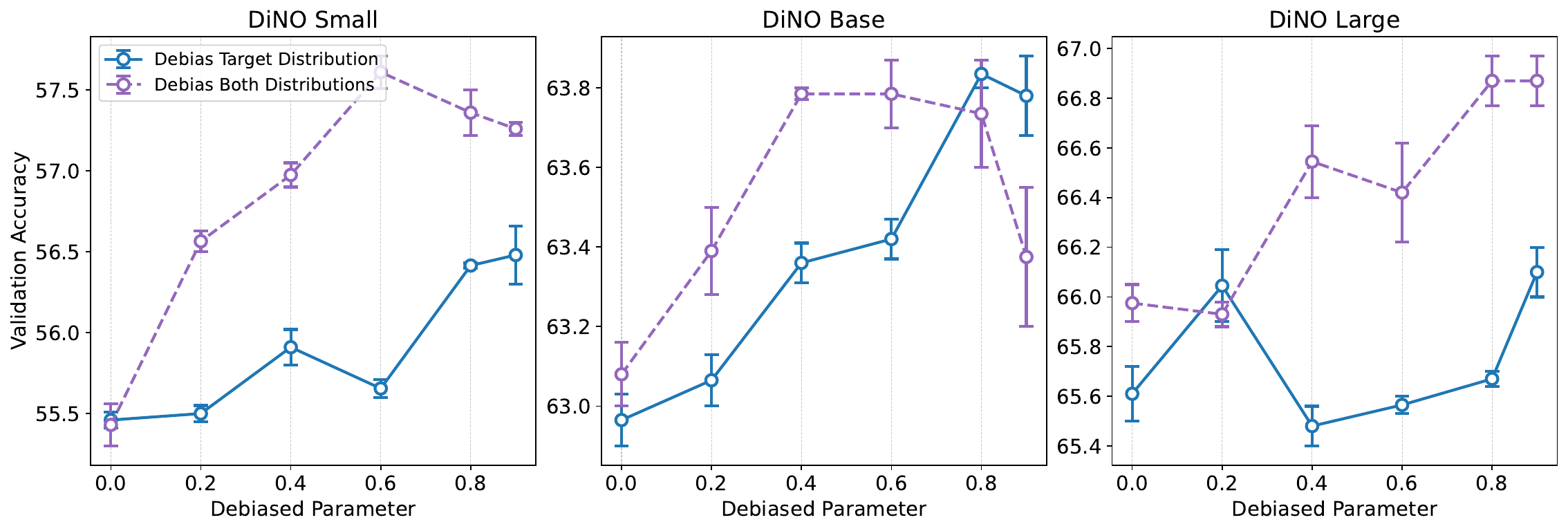}
    \caption{Effects of increasing the debias weight $\alpha$ on the supervisory neighborhood (blue line) and both the learned and supervisory neighborhood (red line). Adding some amount of debiasing helps in all cases, with a double debiasing yielding the largest improvements.}
    \label{fig:debiasing-ablation}
\end{figure}

We perform experiments with different levels of debiasing in the target distribution, denoted by the parameter \(\alpha\), and test configurations where debiasing is applied to the target side, both sides (target and learned representations), or none. As seen in Figure \ref{fig:debiasing-ablation}, adding debiasing improves performance, with the optimal value typically around \(\alpha = 0.6\) to \(\alpha = 0.8\), particularly when applied to both sides of the learning process. This method is similar to how debiasing work in contrastive learning by assuming that each negative sample has a non-zero probability ($\alpha/N$) of being incorrect. Figure \ref{fig:debiasing} shows how changing the value of $\alpha$ improves performance across different batch sizes.

In a second set of experiments, shown in Table \ref{tab:walk-ablation}, we examine the impact of neighbor propagation strategies. We evaluate clustering performance when local and global neighbors are included in the contrastive loss computation. Neighbor propagation, especially at small scales (\(s = 1\) and \(s = 2\)), significantly boosts performance across all model sizes, showing the importance of capturing local structure in the embedding space. Larger neighbor propagation values (e.g., \(s = 3\)) offer diminishing returns, suggesting that over-propagating neighbors may dilute the information from the nearest, most relevant points. Note that only DiNO-L/14 showed preference for large step size, and this is likely due to its higher k-nearest neighbor ability, so the augmented links are correct. 

Our ablation studies highlight that small adjustments in the debiasing parameter and neighbor propagation can lead to notable improvements that achieve a state-of-the-art result with a simple loss function. Additionally, sensitivity to \(\alpha\) and propagation size varies across models, with larger models generally benefiting more from increased propagation but requiring fine-tuning of \(\alpha\) for optimal performance. We recommend using \(\alpha \approx 0.6\) to \(\alpha \approx 0.8\) and limiting neighbor propagation to small values for a balance between performance and computational efficiency.

\section{Conclusion}
In summary, we have developed I-Con: a single information-theoretic equation that unifies a broad class of machine learning methods. We provide over 15 theorems that prove this assertion for many of the most popular loss functions used in clustering, spectral graph theory, supervised and unsupervised contrastive learning, dimensionality reduction, and supervised classification and regression. We not only theoretically unify these algorithms but show that our connections can help us discover new state-of-the-art methods, and apply improvements discovered for a particular method to any other method in the class. We illustrate this by creating a new method for unsupervised image classification that achieves a +8\% improvement over prior art. We believe that the results presented in this work represent just a fraction of the methods that are potentially unify-able with I-Con, and we hope the community can use this viewpoint to improve collaboration and analysis across algorithms and machine learning disciplines.

\textbf{Acknowledgments}  This research was partially sponsored by the Department of the Air Force Artificial Intelligence Accelerator and was conducted under Cooperative Agreement Number FA8750-19-2-1000, as well as NSF CIF 1955864 (Occlusion and Directional Resolution in Computational Imaging). We also acknowledge support from Quanta Computer. Additionally, we would like thank Phillip Isola, Andrew Zisserman, Yair Weiss, Justin Kay, and Shivam Duggal for valuable discussions and feedback.

\bibliography{iclr2025_conference}
\bibliographystyle{iclr2025_conference}

\newpage
\appendix
\section*{Appendix}
\addcontentsline{toc}{section}{Appendix} 
\startcontents
\printcontents{}{1}{\setcounter{tocdepth}{2}}% Adds the Appendix to the main ToC
\newpage
\section{{Additional Experiments on Debiasing Feature Learning}} \label{supplment:exps}
{ 
The following experiments aim to test the effect of our debiasing approach in feature learning. We followed the experimental setup introduced by Hu et al.~\citep{hu2022your}. The architecture consisted of a ResNet-34 backbone paired with a two-layer multilayer perceptron (MLP) feature extractor. The MLP included a hidden layer with 512 units and an output layer with 64 units, without batch normalization.

\textbf{CIFAR-10 \& CIFAR-100.} The models were trained on the CIFAR-10 dataset for 1000 epochs using the AdamW optimizer with the following hyperparameters: $\beta_1 = 0.9$, $\beta_2 = 0.999$, a learning rate of $1 \times 10^{-3}$, a batch size of 1024, and a weight decay of $1 \times 10^{-5}$. The learned kernel was either Gaussian or Student's t-distribution with degrees of freedom $d_f = 2$.

For evaluation, we used two methods: (1) linear probing on the 512-dimensional embeddings from the MLP's hidden layer, and (2) $k$-nearest neighbors ($k=3$) classification based on the same embeddings for CIFAR-10 (in-distribution) and CIFAR-100 (out-of-distribution).

\textbf{STL-10 \& Oxford-IIIT Pet.} With a similar setup, the models were trained contrastively on STL-10 (in distribution) without labels using the same hyperparameters as in the CIFAR experiments. For evaluation, we performed (1) linear probing for the STL-10 classification task and Oxford-IIIT Pet binary classification, and (2) $k$-nearest neighbors classification based on the same embeddings for STL-10 and Oxford-IIIT Pet with $k=10$.}

\begin{table}[ht]
\centering
\begin{tabular}{lcccc}
\toprule
\textbf{Method} & \multicolumn{2}{c}{\textbf{CIFAR10 (in distribution)}} & \multicolumn{2}{c}{\textbf{CIFAR100 (out of distribution)}} \\ \cline{2-5}
 & Linear Probing & KNN & Linear Probing & KNN \\ \midrule
\multicolumn{5}{c}{$q_\phi$ is a Gaussian Distribution} \\ \midrule

SimCLR \citep{chen2020simple} & 77.79 & 80.02 & 31.82 & 40.27 \\ 

DCL \citep{chuang2020debiasedcontrastivelearning} & 78.32 & 83.11 & 32.44 & 42.10 \\ 

\textbf{Our Debiasing $\alpha = 0.2$} & 79.50 & 84.07 & 32.53 & 43.19 \\ 

\textbf{Our Debiasing $\alpha = 0.4$} & 79.07 & 85.06 & 32.53 & \textbf{43.29} \\ 

\textbf{Our Debiasing $\alpha = 0.6$} & 79.32 & 85.90 & 30.67 & 29.79 \\ \midrule
\multicolumn{5}{c}{$q_\phi$ is a Student's t-distribution} \\ \midrule

t-SimCLR\citep{hu2022your} & 90.97 & 88.14 & 38.96 & 30.75 \\

DCL \citep{chuang2020debiasedcontrastivelearning} & Diverges & Diverges & Diverges & Diverges \\

\textbf{Our Debiasing $\alpha = 0.2$} & 91.31 & 88.34 & 41.62 & 32.88 \\ 

\textbf{Our Debiasing $\alpha = 0.4$} & 92.70 & 88.50 & \textbf{41.98} & 34.26 \\ 

\textbf{Our Debiasing $\alpha = 0.6$} & \textbf{92.86} & \textbf{88.92} & 38.92 & 32.51 \\ \bottomrule
\end{tabular}
\caption{{ Contrastive feature learning evaluation results for CIFAR10 and CIFAR100 datasets with various debasing  $\alpha$ factors. Adding some amount of debasing helps raising accuracy in both linear probing and KNN classification.}}
\end{table}

\begin{table}[ht]
\centering
\begin{tabular}{lcccc}
\toprule
\textbf{Method} & \multicolumn{2}{c}{\textbf{STL-10 (in distribution)}} & \multicolumn{2}{c}{\textbf{Oxford-IIIT Pet (out of distribution)}} \\ \cline{2-5}
 & Linear Probing & KNN & Logistic Regression & KNN \\ \midrule
 
 SimCLR \citep{chen2020simple} & 77.71 & 74.92 & 74.80 & 71.48 \\
 
 DCL \citep{chuang2020debiasedcontrastivelearning} & 78.32 & 75.03 & 74.41 & 70.22 \\
\midrule
\multicolumn{5}{c}{$q_\phi$ is a Student's t-distribution} \\ \midrule

t-SimCLR\citep{hu2022your} & 85.11 & 83.05 & 83.40 & 81.41 \\

\textbf{Our Debiasing $\alpha = 0.2$} & 85.94 & 83.15 & 84.11 & 81.15 \\ 

\textbf{Our Debiasing $\alpha = 0.4$} & 86.13 & \textbf{84.14} & 84.07 & \textbf{84.13} \\

\textbf{Our Debiasing $\alpha = 0.6$} & \textbf{87.18} & 83.58 & \textbf{84.51} & 83.04 \\ \bottomrule
\end{tabular}
\caption{{
Contrastive feature learning evaluation results for STL10 (in distribution) and Oxford-IIIT Pet (out of distribution) with various debasing  $\alpha$ factors. Similar to the other experiments, our debasing helps raising accuracy in both linear probing and KNN classification.}}
\end{table}

\begin{figure}[ht!]
    \centering
    % Top row: (a) and (b)
    \begin{subfigure}[b]{0.5\linewidth}
        \centering
        \includegraphics[height=3.5cm]{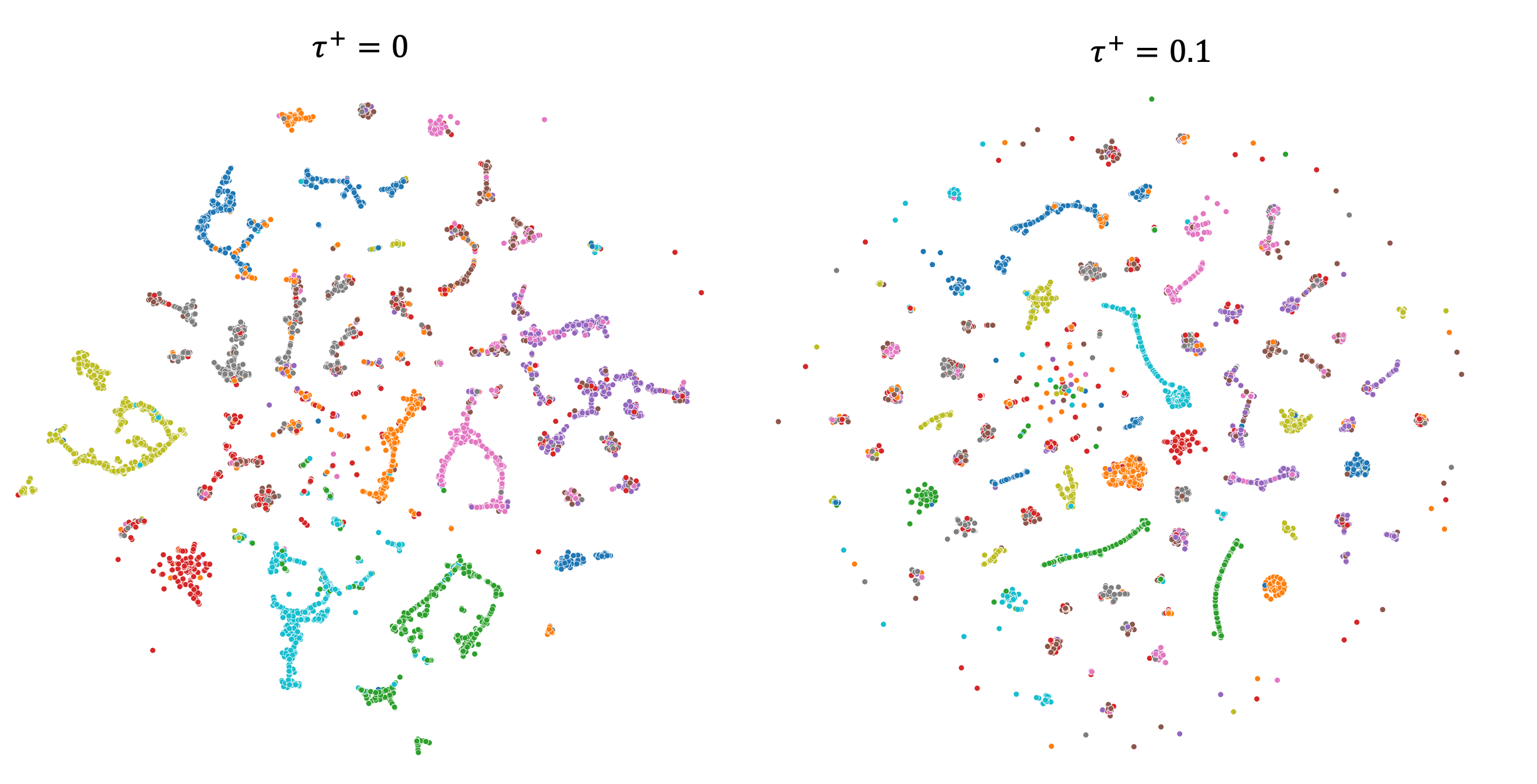}
        \caption{STL-10 embeddings for SimCLR \& DCL}
        \label{fig:gaussian_embeddings}
    \end{subfigure}%
    \begin{subfigure}[b]{0.5\linewidth}
        \centering
        \includegraphics[height=3.5cm]{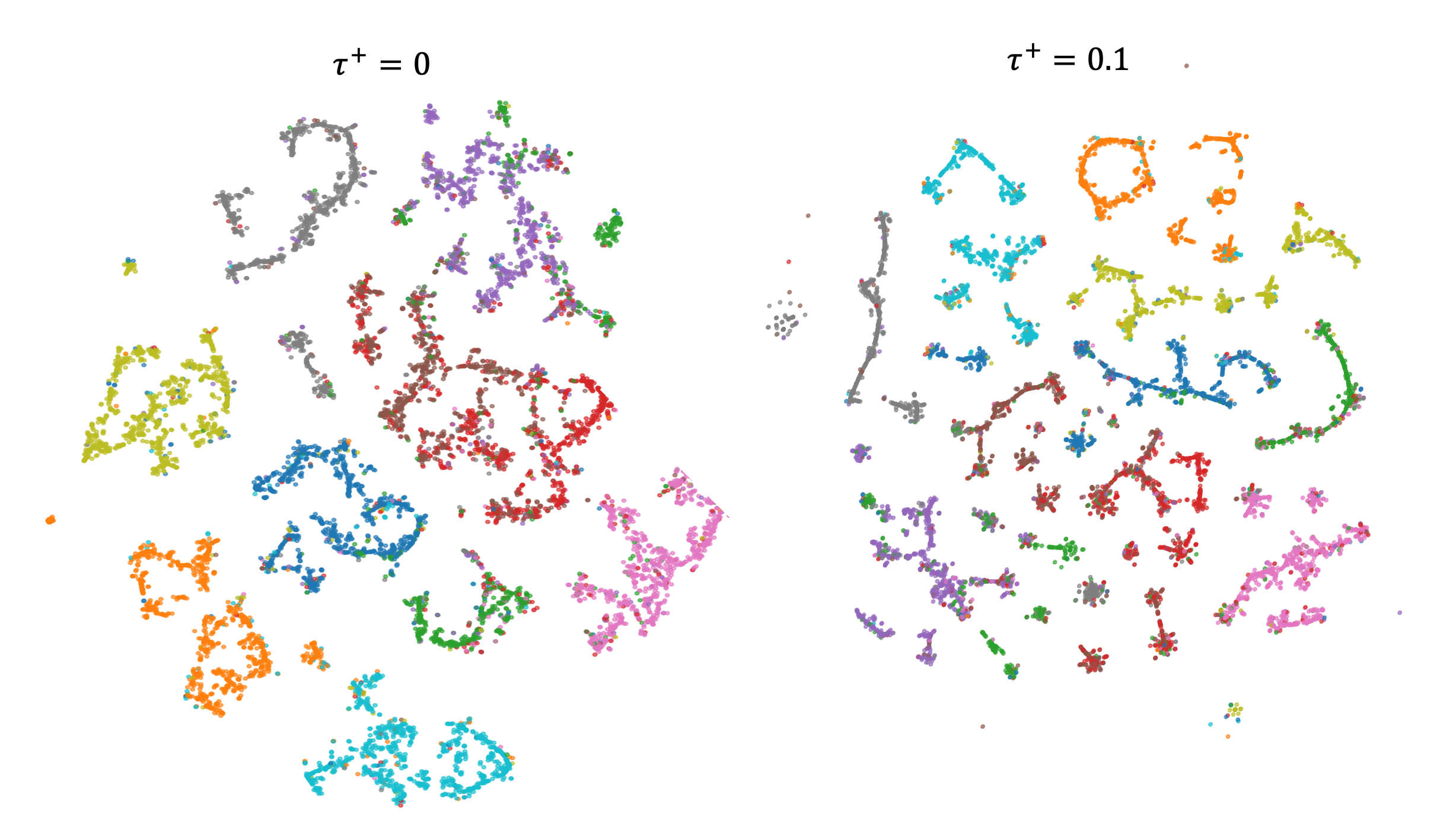}
        \caption{CIFAR-10 embeddings for SimCLR \& DCL}
        \label{fig:dcl}
    \end{subfigure}
    \vskip\baselineskip
    % Middle row: (c)
    \begin{subfigure}[b]{0.7\linewidth}
        \centering
        \includegraphics[width=\linewidth]{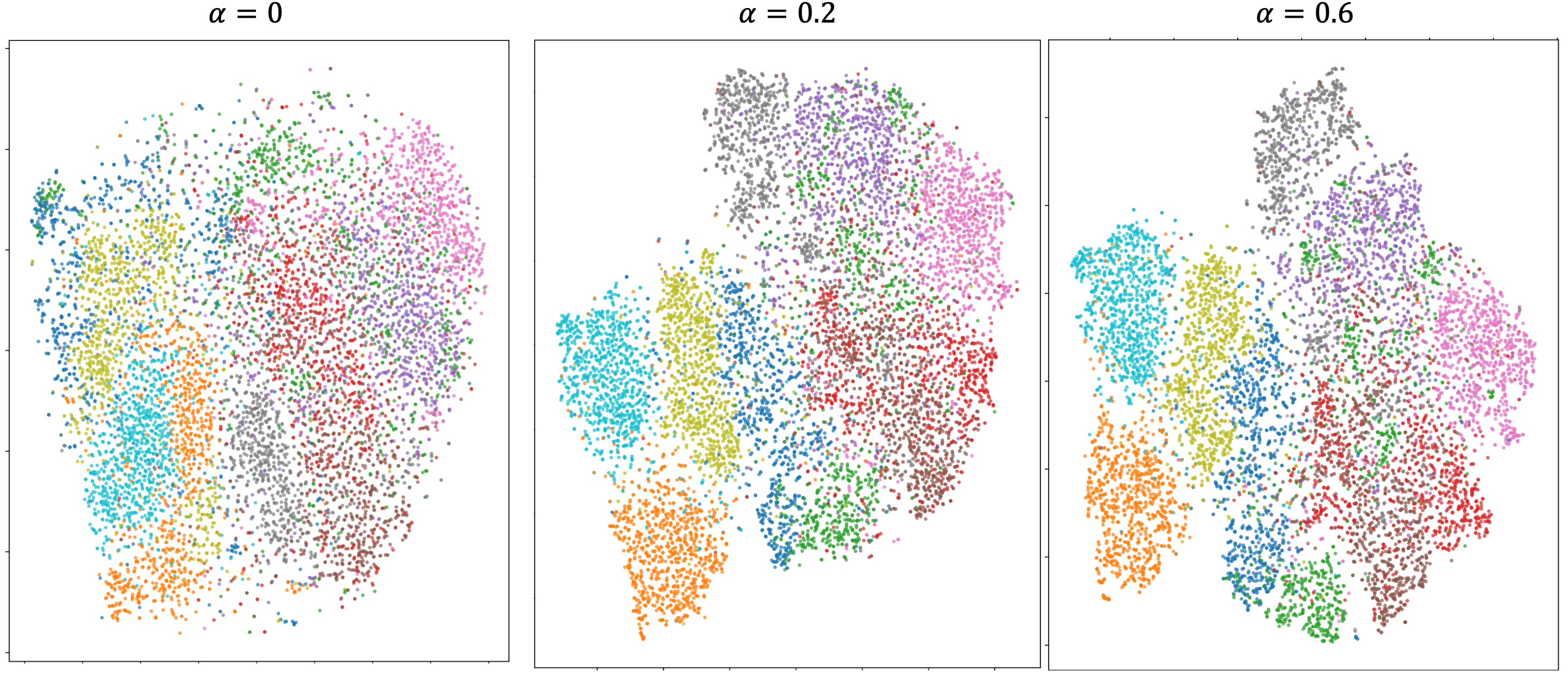}
        \caption{CIFAR10 embeddings for models trained on with Gaussian distribution $q_\phi$}
        \label{fig:t_embeddings_cifar}
    \end{subfigure}
    \vskip\baselineskip
    % Middle row: (c)
    \begin{subfigure}[b]{0.7\linewidth}
        \centering
        \includegraphics[width=\linewidth]{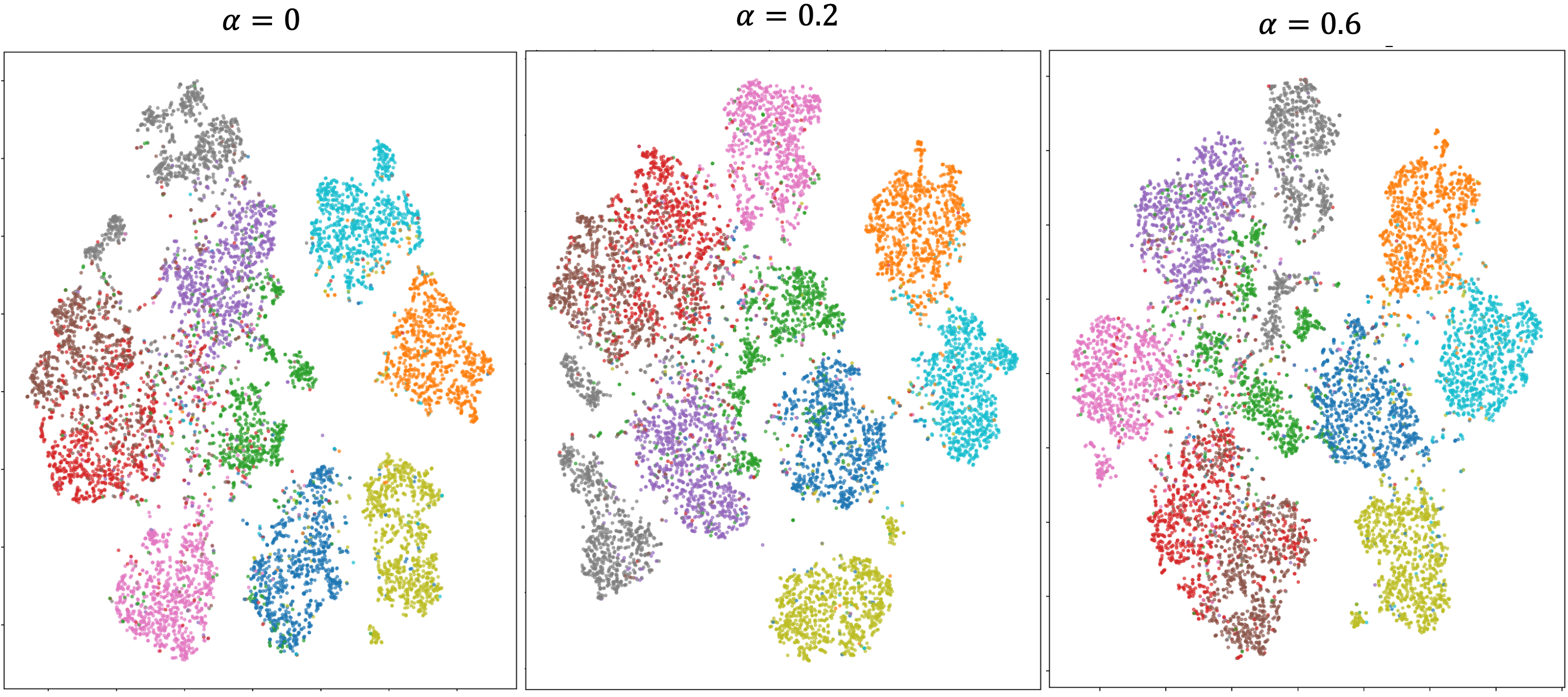}
        \caption{CIFAR10 features for models trained with Student's t-distribution $q_\phi$}
        \label{fig:t_embeddings_cifar2}
    \end{subfigure}
    \vskip\baselineskip
    % Bottom row: (d)
    \begin{subfigure}[b]{\linewidth}
        \centering
        \includegraphics[width=\linewidth]{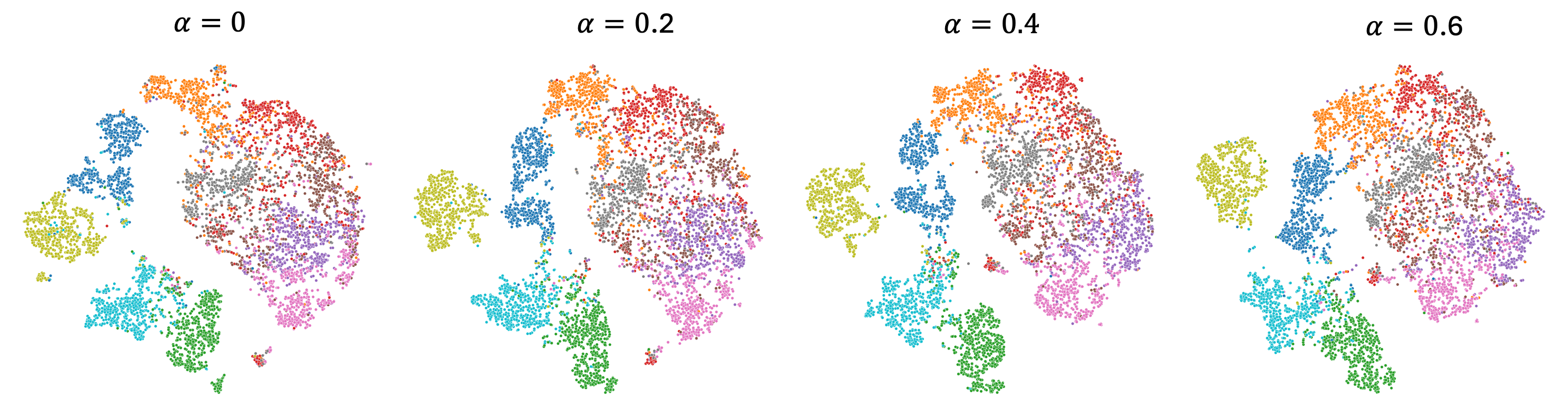}
        \caption{STL-10 features for models trained with Student's t-distribution $q_\phi$}
        \label{fig:t_embeddings_stl}
    \end{subfigure}
    \caption{{t-SNE visualizations of learned embeddings on CIFAR10 and STL10 datasets. (a) and (b) display embeddings from the DCL \citep{chuang2020debiasedcontrastivelearning} method before and after applying debiasing, showing a tendency to heavily cluster data points, which may hinder out-of-distribution generalization \citep{hu2022your}. (c) and (d) show embeddings with Student's t-distribution, where the debiasing factor \( \alpha \) enhances clustering and separation, resulting in improved data representation.}}
    \label{fig:combined_visualization}
\end{figure}

\newpage
\newpage
\section{Proofs for Unifying Dimensionality Reduction Methods}

We begin by defining the setup for dimensionality reduction methods in the context of I-Con. Let \( x_i \in \mathbb{R}^d \) represent high-dimensional data points, and \( \phi_i \in \mathbb{R}^m \) represent their corresponding low-dimensional embeddings, where \( m \ll d \). The goal of dimensionality reduction methods, such as Stochastic Neighbor Embedding (SNE) and t-Distributed Stochastic Neighbor Embedding (t-SNE), is to learn these embeddings such that neighborhood structures in the high-dimensional space are preserved in the low-dimensional space. In this context, the low-dimensional embeddings \( \phi_i \) can be interpreted as the outputs of a mapping function \( f_\theta(x_i) \), where \( f_\theta \) is essentially an embedding matrix or look-up table. The I-Con framework is well-suited to express this relationship through a KL divergence loss between two neighborhood distributions: one in the high-dimensional space and one in the low-dimensional space.

\begin{theorem}\label{thm:sne} 
Stochastic Neighbor Embedding (SNE) \citep{hinton2002stochastic} is an instance of the I-Con framework.
\end{theorem}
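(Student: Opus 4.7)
The plan is to directly match the SNE objective against the I-Con loss in Equation \ref{eqn:icon-loss}, so the proof is essentially an exercise in identifying the right neighborhood distributions and then invoking the definition of KL divergence. Since SNE operates on a finite dataset, I would first instantiate the I-Con loss with sums rather than integrals over $\mathcal{X}$, so that $\mathcal{L}(\theta,\phi) = \sum_{i} \sum_{j} p(j|i) \log \frac{p(j|i)}{q_\phi(j|i)}$ up to a constant (the implicit uniform measure on data points).

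Next I would make the I-Con choices explicit. Following the SNE row of Table \ref{tab:choice_pq}, I would take the supervisory distribution to be the Gaussian neighborhood in the input space,
\[
p(j|i) \;=\; \frac{\exp(-\|x_i - x_j\|^2 / 2\sigma_i^2)}{\sum_{k \neq i}\exp(-\|x_i - x_k\|^2 / 2\sigma_i^2)},
\]
with $p(i|i)=0$, and the learned distribution to be the analogous Gaussian neighborhood in the embedding space with unit bandwidth,
\[
q_\phi(j|i) \;=\; \frac{\exp(-\|\phi_i - \phi_j\|^2)}{\sum_{k \neq i}\exp(-\|\phi_i - \phi_k\|^2)}.
\]
Here the parameters $\phi$ are simply the embedding coordinates, which can be viewed as a trivial lookup-table map $f_\phi(x_i)=\phi_i$, and $p$ is non-parametric so $\theta$ plays no role.

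With these choices in place, I would substitute directly into Equation \ref{eqn:icon-loss} and compare to the canonical SNE objective as stated by Hinton and Roweis. The sum over $i$ of KL divergences coincides term-for-term with the SNE loss, so the equality is immediate once the distributions are plugged in. I would finish by observing that the entropy $\sum_j p(j|i)\log p(j|i)$ depends only on the fixed $p$, so minimizing the KL divergence is equivalent to minimizing the cross-entropy term that drives the embedding updates in SNE, confirming the two optimization problems are identical.

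Because this theorem is really a dictionary check rather than a derivation, there is no genuine obstacle; the only care needed is notational, namely making clear that the I-Con integrals collapse to sums in the finite-sample setting and that the perplexity-determined bandwidths $\sigma_i$ are absorbed into the definition of $p$ rather than treated as learnable parameters. I would state these conventions up front so that the identification of the two losses reads cleanly.
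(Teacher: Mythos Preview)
Your proposal is correct and matches the paper's proof essentially line for line: both simply identify the high-dimensional Gaussian neighborhood as $p$, the low-dimensional Gaussian neighborhood as $q_\phi$ with $\phi$ acting as a lookup table, and observe that the SNE objective is by definition the sum of KL divergences in Equation~\ref{eqn:icon-loss}. The paper's version is slightly terser (it does not spell out the sums-versus-integrals or entropy-versus-cross-entropy remarks), but the argument is the same dictionary check you describe.
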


\begin{proof}
This is one of the most straightforward proofs in this paper, essentially based on the definition of SNE. The target distribution (supervised part), described by the neighborhood distribution in the high-dimensional space, is given by:
\[
p_\theta(j|i) = \frac{\exp\left(-\|x_i - x_j\|^2 / 2\sigma_i^2\right)}{\sum_{k \neq i} \exp\left(-\|x_i - x_k\|^2 / 2\sigma_i^2\right)},
\]
while the learned low-dimensional neighborhood distribution is:
\[
q_\phi(j|i) = \frac{\exp\left(-\|\phi_i - \phi_j\|^2\right)}{\sum_{k \neq i} \exp\left(-\|\phi_i - \phi_k\|^2\right)}.
\]
The objective is to minimize the KL divergence between these distributions:
\[
\mathcal{L} = \sum_i D_{\text{KL}}(p_\theta(\cdot | i) \| q_\phi(\cdot | i)) = \sum_i \sum_j p_\theta(j | i) \log \frac{p_\theta(j | i)}{q_\phi(j | i)}.
\]
The embeddings \( \theta_i \) are learned implicitly by minimizing \( \mathcal{L} \). The mapper is an embedding matrix, as SNE is a non-parametric optimization. Therefore, SNE is a special case of the I-Con framework, where \( p_\theta(j|i) \) and \( q_\phi(j|i) \) represent the neighborhood probabilities in the high- and low-dimensional spaces, respectively.
\end{proof}

\begin{corollary}[t-SNE \citep{van2008visualizing}]\label{thm:tsne} 
t-SNE is an instance of the I-Con framework.
\end{corollary}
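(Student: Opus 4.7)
The plan is to mirror the proof of Theorem~\ref{thm:sne} almost line-for-line, since t-SNE differs from SNE only in the choice of the learned conditional $q_\phi(j|i)$. First I would fix the supervisory distribution $p_\theta(j|i)$ to be the same Gaussian conditional already used by SNE, so that the high-dimensional half of the construction is inherited directly. Then I would substitute the Cauchy (Student-$t$ with one degree of freedom) kernel
\[
q_\phi(j|i) = \frac{(1+\|\phi_i-\phi_j\|^2)^{-1}}{\sum_{k\neq i}(1+\|\phi_i-\phi_k\|^2)^{-1}}
\]
into the I-Con loss (Equation~\ref{eqn:icon-loss}) and verify that the resulting objective coincides with the t-SNE loss of \citep{van2008visualizing}.

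The verification reduces to expanding the KL divergence into an entropy term plus a cross-entropy term,
\[
\mathcal{L}(\phi) = -\sum_i H\bigl(p_\theta(\cdot|i)\bigr) - \sum_i \sum_{j\neq i} p_\theta(j|i)\,\log q_\phi(j|i),
\]
noting that the first summand is independent of $\phi$, and then matching the remaining cross-entropy against the expression given for t-SNE. Since the low-dimensional kernel $(1+\|\phi_i-\phi_j\|^2)^{-1}$ is exactly the one employed by t-SNE, the two objectives agree on their $\phi$-dependent parts, which is all that is needed. No new algebraic identities are required beyond what already appears in Theorem~\ref{thm:sne}; essentially, only the kernel is swapped.

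The one subtlety I would flag explicitly is the familiar distinction between the asymmetric (conditional) and symmetric (joint) formulations of t-SNE: the version captured by Table~\ref{tab:choice_pq} is the conditional one, whereas the most frequently cited form uses a symmetrized joint $p_{ij}=(p_{j|i}+p_{i|j})/(2n)$ together with a globally normalized $q_{ij}$. I would handle this by treating the conditional form as the direct I-Con instantiation, and then noting that the symmetric variant is either an average of two conditional KL losses or, equivalently, an I-Con loss applied to a symmetrized neighborhood, so no additional machinery is needed. This is the only step where a reader might expect more detail, and it is the main (mild) obstacle in an otherwise routine specialization of Theorem~\ref{thm:sne}.
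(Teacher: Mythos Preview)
Your proposal is correct and follows essentially the same approach as the paper: keep the Gaussian supervisory distribution $p_\theta(j|i)$ from SNE, swap the low-dimensional kernel for the Cauchy form, and observe that the objective is still the I-Con KL divergence. In fact you give considerably more detail than the paper's two-line proof---your entropy/cross-entropy decomposition and your discussion of the conditional versus symmetric t-SNE formulation are both sound additions that the paper omits entirely.
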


\begin{proof}
The proof is similar to the one for SNE. While the high-dimensional target distribution \( p_\theta(j|i) \) remains unchanged, t-SNE modifies the low-dimensional distribution to a Student's t-distribution with one degree of freedom (Cauchy distribution):
\[
q_\phi(j|i) = \frac{(1 + \|\phi_i - \phi_j\|^2)^{-1}}{\sum_{k \neq i} (1 + \|\phi_i - \phi_k\|^2)^{-1}}.
\]
The objective remains to minimize the KL divergence. Therefore, t-SNE is an instance of the I-Con framework.
\end{proof}
{

\begin{proposition}\label{prop:var}
    Let $X:= \{x_i\}_{i=1}^n$, then the following cohesion variance loss
    $$\mathcal{L}_\text{cohesion-var} =  \frac{1}{n}\sum_{ij} w_{ij} \| f_\phi(x_i) - f_\phi(x_j) \|^2 - 2\text{Var}(X)$$
    is an instance of $I-Con$ in the special case 
    $w_{ij}= p(j|i)$ and $q_\phi$ is Gaussian as with a large width as $\sigma \rightarrow \infty$.
\end{proposition}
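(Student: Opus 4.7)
The plan is to expand the I-Con loss in powers of $1/\sigma^2$ and identify the surviving $\phi$-dependent terms with the cohesion-variance objective. Starting from $\mathcal{L}(\phi) = \sum_i \mathrm{KL}(p(\cdot|i)\,\|\,q_\phi(\cdot|i))$, drop the entropy of $p$ (it is constant in $\phi$) so minimization reduces to the cross-entropy $-\sum_i \sum_j p(j|i)\log q_\phi(j|i)$. Substituting the stated Gaussian form $q_\phi(j|i) = \exp(-\|f_\phi(x_i)-f_\phi(x_j)\|^2/2\sigma^2)/Z_i$ splits this cross-entropy cleanly into an attractive piece $\frac{1}{2\sigma^2}\sum_{ij} w_{ij}\|f_\phi(x_i)-f_\phi(x_j)\|^2$ and a repulsive piece $\sum_i \log Z_i$.

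Next I would Taylor-expand $\log Z_i$ in $1/\sigma^2$. Using $\exp(-a/2\sigma^2) = 1 - a/(2\sigma^2) + O(\sigma^{-4})$ gives $Z_i = (n-1) - \frac{1}{2\sigma^2}\sum_{k\neq i}\|f_\phi(x_i)-f_\phi(x_k)\|^2 + O(\sigma^{-4})$, and hence
\[
\log Z_i = \log(n-1) - \frac{1}{2\sigma^2(n-1)}\sum_{k\neq i}\|f_\phi(x_i)-f_\phi(x_k)\|^2 + O(\sigma^{-4}).
\]
Dropping the additive constant $\log(n-1)$ and rescaling by $2\sigma^2$ (monotone, so the minimizer is unchanged), the leading $\phi$-dependent part of the I-Con loss is
\[
\sum_{ij} w_{ij}\|f_\phi(x_i)-f_\phi(x_j)\|^2 \;-\; \frac{1}{n-1}\sum_{i}\sum_{k\neq i}\|f_\phi(x_i)-f_\phi(x_k)\|^2 + O(\sigma^{-2}).
\]

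Finally I would convert the pairwise sum to a variance via the standard identity $\sum_{i,j}\|y_i-y_j\|^2 = 2n\sum_i\|y_i-\bar y\|^2 = 2n^2\,\mathrm{Var}(Y)$ applied to $y_i = f_\phi(x_i)$. This turns the second term into $\frac{2n^2}{n-1}\mathrm{Var}(f_\phi(X))$, which, after the $1/n$ normalization implicit in the expectation over $i$, matches the $-2\,\mathrm{Var}(X)$ term of $\mathcal{L}_{\text{cohesion-var}}$ (with ``$\mathrm{Var}(X)$'' here understood as the variance of the embedded points; under a fixed input dataset the $x$-variance is a constant and can be absorbed).

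The main obstacle I anticipate is bookkeeping rather than conceptual: carefully handling the $n/(n-1)$ factor that arises because $Z_i$ sums over $k\neq i$ while the variance is defined symmetrically, and justifying that the $\sigma\to\infty$ limit is taken \emph{after} rescaling by $2\sigma^2$ so that the $O(\sigma^{-2})$ remainder vanishes and the leading cohesion-variance term is what actually drives optimization. Once those normalizations are aligned, the equivalence (up to constants and a positive multiplicative factor) is immediate.
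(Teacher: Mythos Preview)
Your proposal is correct and follows the same conceptual route as the paper: split the cross-entropy into the attractive term $\tfrac{1}{2\sigma^{2}}\sum_{ij}w_{ij}\|f_\phi(x_i)-f_\phi(x_j)\|^{2}$ and the repulsive $\sum_i\log Z_i$, then linearize $\log Z_i$ in the small parameter $1/\sigma^{2}$ and recognize the resulting uniform pairwise sum as $2\,\mathrm{Var}(f_\phi(X))$.

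The only technical difference is how the linearization is justified. You Taylor-expand $\log Z_i$ directly and carry an $O(\sigma^{-4})$ remainder; the paper instead sandwiches $\log\sum_k e^{-z_k}$ between two bounds, using AM--GM for the upper side ($\log\sum_k e^{-z_k}\ge -\tfrac{1}{n}\sum_k z_k+\log n$) and the second-order inequalities $e^{-z}\le 1-z+z^{2}/2$ and $\log(1+u)\ge u-u^{2}/2$ for the lower side, then argues the gap closes as $\sigma\to\infty$. Your direct expansion is arguably cleaner and makes the ``rescale by $2\sigma^{2}$, then let $\sigma\to\infty$'' step more transparent; the paper's inequality route has the advantage of giving a non-asymptotic one-sided bound for every finite $\sigma$. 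The $n$ versus $n-1$ normalization you flag is present in both arguments (the paper silently uses $n$ in the AM--GM step even though $Z_i$ sums over $k\neq i$), so your explicit acknowledgment of it is appropriate.
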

\begin{proof}
    
By using AM-GM inequality, we have

\[
\frac{1}{n}\sum_{k=1}^n e^{-z_k} \geq (\Pi_{k=1}^n e^{-z_k})^{\frac{1}{n}} \implies \frac{1}{n}\sum_{k=1}^n e^{-z_k} \geq (e^{-\sum_{k=1}^n z_k})^{\frac{1}{n}}
\]

which implies that
\[
\log\sum_{k=1}^n e^{-z_k}-\log n \geq \log \left (e^{-\sum_{k=1}^n z_k}\right )^{\frac{1}{n}} \implies \log\sum_{k=1}^n e^{-z_k} \geq - \frac{1}{n}\sum_{k=1}^n z_k + \log(n)
\]

Alternatively, this can be written as 
\[-\log\sum_{k=1}^n e^{-z_k} \leq  \frac{1}{n}\sum_{k=1}^n z_k - \log(n)
\]
Now assume that we have a Gaussian Kernel $q_\phi$
\[
q_\phi(j|i) = \frac{\exp\left( - {\| f_\phi(x_i) - f_\phi(x_j) \|^2}/{\sigma^2} \right)}{\sum_{k \neq i} \exp\left( - {\| f_\phi(x_i) - f_\phi(x_k) \|^2}/{\sigma^2} \right)},
\]
Therefore, given the inequality of exp-sum that we showed above, we have
\begin{align*}
\log q_\phi(j|i) &=  - \frac{\| f_\phi(x_i) - f_\phi(x_j) \|^2}{\sigma^2}  - \log{\sum_{k \neq i} \exp\left( - \frac{\| f_\phi(x_i) - f_\phi(x_k) \|^2}{\sigma^2} \right)}\\
& \leq - \frac{1}{\sigma^2}\| f_\phi(x_i) - f_\phi(x_j) \|^2+ \frac{1}{n\sigma^2}{\sum_{k \neq i} \| f_\phi(x_i) - f_\phi(x_k) \|^2} - \log(n)\\
&= - \frac{1}{\sigma^2} (-\| f_\phi(x_i) - f_\phi(x_j) \|^2 + \frac{1}{n}{\sum_{k \neq i} \| f_\phi(x_i) - f_\phi(x_k) \|^2}) - \log(n)\\
\end{align*}
Therefore, the cross entropy $H(p_\theta, q_\phi)$, is bounded by
\begin{align*}
    H(p_\theta, q_\phi) &= -\frac{1}{n}\sum_i \sum_j p(j|i) \log q_\phi(j|i)\\
    &\leq \frac{1}{n} \sum_i \sum_j p(j|i) \left(\frac{1}{\sigma^2} (-\| f_\phi(x_i) - f_\phi(x_j) \|^2 + \frac{1}{n}{\sum_{k \neq i} \| f_\phi(x_i) - f_\phi(x_k) \|^2}) - \log(n) \right)\\
    &= \frac{1}{\sigma^2}\left(\frac{1}{n}\sum_{ij} p(j|i) \| f_\phi(x_i) - f_\phi(x_j) \|^2 - \frac{1}{n^2} \sum_{ijk} p(j|i) \| f_\phi(x_i) - f_\phi(x_k) \|^2\right)- \log(n) \\
    &= \frac{1}{\sigma^2}\left(\frac{1}{n}\sum_{ij} p(j|i) \| f_\phi(x_i) - f_\phi(x_j) \|^2 - 2\text{Var}(X)\right)+ \log(n) \\
    &= \frac{1}{\sigma^2}\left(\frac{1}{n}\sum_{ij} p(j|i) \| f_\phi(x_i) - f_\phi(x_j) \|^2 - 2\text{Var}(X)\right)+ \log(n) \\
    &=  \frac{1}{\sigma^2} \mathcal{L}_\text{cohesion-var} + \log(n) \\
\end{align*}
On the other hand, the L.H.S. can be upper bounded by using second order bound $ e^{-z} \leq 1-z+z^2/2$, which implies that
\[-\log\sum_{k=1}^n e^{-z_k} \geq  \log(1- \frac{1}{n}\sum_{k=1}^n z_k + \frac{1}{n}\sum_{k=1}^n z_k^2) - \log(n)
\]

On the other hand, $\log(1+u) \geq u-u^2/2$, therefore,
\[-\log\sum_{k=1}^n e^{-z_k} \geq  (1- \frac{1}{n}\sum_{k=1}^n z_k + \frac{1}{n}\sum_{k=1}^n z_k^2) - \frac{1}{2}(1- \frac{1}{n}\sum_{k=1}^n z_k + \frac{1}{n}\sum_{k=1}^n z_k^2)^2 - \log(n)
\]
Therefore, in the limit $\sigma \rightarrow \infty$, the bounds become tighter and the I-Con loss approaches the cohesion variance loss.  
\end{proof}

\begin{theorem}
\label{thm:pca}
Principal Component Analysis (PCA) is an asymptotic instance of the I-Con.
\end{theorem}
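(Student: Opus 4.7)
The plan is to derive PCA as an asymptotic limit of the I-Con loss by combining the wide-Gaussian bound from Proposition \ref{prop:var} with a self-concentrated supervisory distribution. With $q_\phi$ taken to be a Gaussian on the learned features $f_\phi(x_i) = W x_i$ in the limit $\sigma \to \infty$, Proposition \ref{prop:var} tells us that minimizing the I-Con loss is asymptotically equivalent, up to an additive constant and an overall positive scaling, to minimizing the cohesion-variance objective $\mathcal{L}_\text{cohesion-var} = \tfrac{1}{n}\sum_{ij} p(j|i)\,\|f_\phi(x_i) - f_\phi(x_j)\|^2 - 2\,\mathrm{Var}(f_\phi(X))$. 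This reduces the problem to analyzing what the cohesion-variance objective becomes for the PCA choice of $p$.

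The next step is to substitute the PCA choice $p(j|i) = \mathds{1}[i=j]$ from Table \ref{tab:choice_pq}. The cohesion sum collapses immediately since $\|f_\phi(x_i) - f_\phi(x_i)\|^2 = 0$ for every $i$, leaving the objective $-2\,\mathrm{Var}(f_\phi(X))$. Minimizing this is equivalent to maximizing the variance of the projected data. Writing $f_\phi(x) = Wx$, this variance equals $\mathrm{tr}(W \Sigma W^\top)$, where $\Sigma$ is the empirical covariance of $\{x_i\}$. Restricting $W$ to have orthonormal rows — otherwise the variance is trivially unbounded by rescaling — yields the classical PCA program $\max_W \mathrm{tr}(W\Sigma W^\top)$ subject to $W W^\top = I_m$, whose maximizer consists of the top-$m$ eigenvectors of $\Sigma$, which is exactly PCA.

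The main obstacle is the singularity at $p(j|i) = \mathds{1}[i=j]$: since the $q_\phi$ specified in the table normalizes over $k \neq i$, the diagonal mass of $q$ is zero and the literal KL divergence is infinite. I would address this in the same spirit as Proposition \ref{prop:var} by working with the cross-entropy upper bound rather than the KL directly, or equivalently by viewing $\mathds{1}[i=j]$ as the $\tau \to 0$ limit of a narrow Gaussian on $x_i$ whose normalization includes the self term; in either formulation the cohesion contribution vanishes and the PCA connection is recovered asymptotically, which is exactly the sense in which the theorem asserts PCA is an \emph{asymptotic} instance of I-Con. A secondary subtlety, which is standard but worth flagging, is that the orthonormality constraint on $W$ is imposed as a side condition on the hypothesis class of feature maps rather than being intrinsic to the I-Con loss itself; without it, maximizing $\mathrm{Var}(WX)$ is ill-posed and the correspondence with classical PCA breaks.
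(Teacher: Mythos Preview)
Your proposal is correct and follows essentially the same approach as the paper: invoke Proposition~\ref{prop:var} to reduce the wide-Gaussian I-Con loss to the cohesion-variance objective, substitute $p(j\mid i)=\mathds{1}[i=j]$ so the cohesion term vanishes, leaving $-2\,\mathrm{Var}(f_\phi(X))$, and then restrict $f_\phi$ to linear projections to recover PCA. You are in fact more careful than the paper in flagging the diagonal singularity of $q_\phi$ and the need for an orthonormality constraint on $W$; the paper's own proof simply states ``restrict $f_\phi$ to be a linear projection map'' without addressing either subtlety.
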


\begin{proof}
By using Proposition \ref{prop:var}. When $p_{j|i}= \mathbf{1}[i=j]$, we have the following expression for $\mathcal{L}$
\begin{align*}
    \mathcal{L} &=  \frac{1}{n}\sum_{ij}  p_{j|i} \| f_\phi(x_i) - f_\phi(x_j) \|^2 - 2\text{Var}(X)\\
    &=  \frac{1}{n}\sum_{i}  \| f_\phi(x_i) - f_\phi(x_i) \|^2 - 2\text{Var}(X)\\
    &= - 2\text{Var}(X)\\
\end{align*}
Therefore, minimizing $\mathcal{L}$ is equivalent to maximizing the variance which is the equivalent of the PCA objective. Intuitivily, the KL divergence is asking $-\|f_\phi(x_i) - f_\phi(x_i)\|^2=0$ to be the maximum in comparison to $-\|f_\phi(x_i) - f_\phi(x_j)\|^2$ to match the supervisory indicator function, which implies the minimization of the sum of $-\|f_\phi(x_i) - f_\phi(x_j)\|^2$, which is maximizing the variance. If we restrict  $f_\phi$ to be a linear projection map, then minimizing $\mathcal{L}$ would be equivalent to PCA.
\end{proof}

\section{Proofs for Unifying Feature Learning Methods}

We now extend the I-Con framework to feature learning methods commonly used in contrastive learning. Let \( x_i \in \mathbb{R}^d \) be the input data points, and \( f_\phi(x_i) \in \mathbb{R}^m \) be their learned feature embedding. In contrastive learning, the goal is to learn these embeddings such that similar data points (positive pairs) are close in the embedding space, while dissimilar points (negative pairs) are far apart. This setup can be expressed using a neighborhood distribution in the original space, where "neighbors" are defined not by proximity in Euclidean space, but by predefined relationships such as data augmentations or class membership. The learned embeddings \( f_\phi(x_i) \) define a new distribution over neighbors, typically using a Gaussian kernel in the learned feature space. We show that InfoNCE is a natural instance of the I-Con framework, and many other methods, such as SupCon, CMC, and Cross Entropy, follow from this.

\begin{theorem}[InfoNCE \citep{bachman2019learning}] \label{thm:infonce}
InfoNCE is an instance of the I-Con framework.
\end{theorem}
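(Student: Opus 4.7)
The plan is to mirror the structure of the SNE proof: choose $p$ to be a normalized indicator supported on positive pairs, choose $q_\phi$ to be the softmax over inner products of normalized features as listed in Table \ref{tab:choice_pq}, and show that the resulting average KL divergence equals the standard InfoNCE loss up to an additive constant in $\phi$.

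Concretely, for each anchor $i$ let $\mathcal{P}_i$ denote the positive set (e.g.\ augmentations of $x_i$) and set $p(j|i) = \frac{1}{|\mathcal{P}_i|}\mathds{1}[j \in \mathcal{P}_i]$, while $q_\phi(j|i) = \frac{\exp(f_\phi(x_i)\cdot f_\phi(x_j))}{\sum_{k\neq i}\exp(f_\phi(x_i)\cdot f_\phi(x_k))}$. Expanding the I-Con loss,
\begin{equation*}
\mathcal{L}(\phi) = \sum_i D_{\text{KL}}(p(\cdot|i)\,\|\,q_\phi(\cdot|i)) = -\sum_i H(p(\cdot|i)) - \sum_i \sum_j p(j|i)\log q_\phi(j|i),
\end{equation*}
the entropy term depends only on $|\mathcal{P}_i|$ and is therefore constant in $\phi$. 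Substituting $p$ and $q_\phi$ into the cross-entropy term gives
\begin{equation*}
-\sum_i \frac{1}{|\mathcal{P}_i|}\sum_{j\in\mathcal{P}_i}\left[f_\phi(x_i)\cdot f_\phi(x_j) - \log \sum_{k\neq i}\exp(f_\phi(x_i)\cdot f_\phi(x_k))\right],
\end{equation*}
which is precisely the InfoNCE objective (with temperature absorbed into $f_\phi$) averaged over anchors and their positives. For the canonical ``one positive per anchor'' case $|\mathcal{P}_i|=1$ this reduces directly to the familiar form of InfoNCE.

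The steps are therefore: (i) write down the explicit $p$ and $q_\phi$; (ii) expand KL into entropy plus cross-entropy; (iii) identify the entropy term as $\phi$-independent; (iv) simplify the cross-entropy to recover the log-softmax form of InfoNCE; (v) note that arbitrary temperature $\tau$ can be folded into the feature map $f_\phi$. The main subtlety, rather than any technical obstacle, is bookkeeping around the positive set: one must specify that the sum over $j$ in KL is restricted to $j\neq i$ to match the InfoNCE denominator, and that when multiple positives exist, the normalization $1/|\mathcal{P}_i|$ in $p$ exactly produces the ``average over positives'' variant (equivalent to SupCon-style aggregation in the single-class limit). Beyond this normalization choice, the proof is essentially a direct substitution and the result follows.
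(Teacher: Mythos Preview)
Your proposal is correct and follows essentially the same approach as the paper: both choose $p$ uniform over positives and $q_\phi$ as the softmax over (temperature-scaled) inner products of normalized features, then observe that the KL divergence differs from the InfoNCE cross-entropy only by the $\phi$-independent entropy $H(p(\cdot|i))$. The paper presents this in the reverse direction---starting from the standard InfoNCE expression and recognizing it as $H(p_\theta,q_\phi)$, then noting that minimizing cross-entropy with fixed $p_\theta$ is equivalent to minimizing KL---but the logical content is identical.
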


\begin{proof}
InfoNCE aims to maximize the similarity between positive pairs while minimizing it for negative pairs in the learned feature space. In the I-Con framework, this can be interpreted as minimizing the divergence between two distributions: the neighborhood distribution in the original space and the learned distribution in the embedding space.

The neighborhood distribution \( p_\theta(j|i) \) is uniform over the positive pairs, defined as:
\[
p_\theta(j|i) =  \begin{cases} 
    \frac{1}{k} & \text{if } {x}_{j} \text{ is among the } k \text{ positive views of } {x}_{i}, \\
    0 & \text{otherwise}.
    \end{cases}
\]
where \( k \) is the number of positive pairs for \( x_i \).

The learned distribution \( q_\phi(j|i) \) is based on the similarities between the embeddings \( f_\phi(x_i) \) and \( f_\phi(x_j) \), constrained to unit norm (\( \|f_\phi(x_i)\| = 1 \)). Using a temperature-scaled Gaussian kernel, this distribution is given by:

\[
q_\phi(j|i) = \frac{\exp\left( f_\phi(x_i) \cdot f_\phi(x_j) / \tau \right)}{\sum_{k \neq i} \exp\left( f_\phi(x_i) \cdot f_\phi(x_k) / \tau \right)},
\]
where \( \tau \) is the temperature parameter controlling the sharpness of the distribution. Since \( \|f_\phi(x_i)\| = 1 \), the Euclidean distance between \( f_\phi(x_i) \) and \( f_\phi(x_j) \) is \(2 - 2(f_\phi(x_i) \cdot f_\phi(x_j))\).

The InfoNCE loss can be written in its standard form:
\[
\mathcal{L}_{\text{InfoNCE}} = - \sum_i \log \frac{\exp\left( f_\phi(x_i) \cdot f_\phi(x_i^+) / \tau \right)}{\sum_{k} \exp\left( f_\phi(x_i) \cdot f_\phi(x_k) / \tau \right)},
\]
where \( j^+ \) is the index of a positive pair for \( i \). Alternatively, in terms of cross-entropy, the loss becomes:
\[
\mathcal{L}_{\text{InfoNCE}} \propto \sum_i \sum_j p_\theta(j | i) \log q_\phi(j | i) = H(p_\theta, q_\phi),
\]
where \( H(p_\theta, q_\phi) \) denotes the cross-entropy between the two distributions. Since \( p_\theta(j|i) \) is fixed, minimizing the cross-entropy \( H(p_\theta, q_\phi) \) is equivalent to minimizing the KL divergence \( D_{\text{KL}}(p_\theta \| q_\phi) \). By aligning the learned distribution \( q_\phi(j|i) \) with the target distribution \( p_\theta(j|i) \), InfoNCE operates within the I-Con framework, where the neighborhood structure in the original space is preserved in the embedding space. Thus, InfoNCE is a direct instance of I-Con, optimizing the same divergence-based objective.
\end{proof}

\begin{corollary}
    \label{cor:tsmiclr} t-SimCLR and t-SimCNE \citep{hu2022your, bohm2022unsupervised} are instances of the I-Con framework.
\end{corollary}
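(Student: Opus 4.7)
The plan is to mirror the proof of Theorem \ref{thm:infonce} essentially verbatim, only swapping out the Gaussian kernel in $q_\phi(j|i)$ for a Student-T kernel, and then check that the resulting cross-entropy coincides (up to additive and multiplicative constants that do not depend on $\phi$) with the losses defined by Hu et al.\ and B\"ohm et al. First I would keep the supervisory distribution exactly as in InfoNCE: set $p_\theta(j|i) = \tfrac{1}{k}$ when $x_j$ is one of the $k$ positive views of $x_i$ and $0$ otherwise, reflecting the ``augmentations of the same image are neighbors'' semantics that t-SimCLR and t-SimCNE inherit from SimCLR.

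Next I would define the learned distribution using the Student-T kernel with $\nu$ degrees of freedom,
\[
    q_\phi(j|i) \;=\; \frac{(1 + \|\phi_i - \phi_j\|^2 / \nu)^{-(\nu+1)/2}}{\sum_{k\neq i}(1 + \|\phi_i - \phi_k\|^2 / \nu)^{-(\nu+1)/2}},
\]
where $\phi_i = f_\phi(x_i)$ denotes the learned embedding. I would then write down the published t-SimCLR / t-SimCNE objective and rearrange it as a sum over anchor points of $-\log q_\phi(j^+|i)$ for each positive pair $j^+$, exactly paralleling the InfoNCE derivation. Averaging over the $k$ positive pairs per anchor introduces the factor $\tfrac{1}{k}$ that matches the chosen $p_\theta(j|i)$, so the loss becomes $\sum_i \sum_j p_\theta(j|i)\,[-\log q_\phi(j|i)] = \sum_i H(p_\theta(\cdot|i), q_\phi(\cdot|i))$.

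Finally I would invoke the standard identity $H(p,q) = D_{\mathrm{KL}}(p \| q) + H(p)$: because $p_\theta$ is fixed (it depends only on the pre-specified positive-pair structure, not on $\phi$), its entropy $H(p_\theta)$ is a constant with respect to optimization, and minimizing the cross-entropy is equivalent to minimizing the KL divergence. This places t-SimCLR/t-SimCNE inside Equation~\ref{eqn:icon-loss}, completing the proof. The only real obstacle I anticipate is bookkeeping the exact normalization conventions of the two cited papers (the presence or absence of the temperature, the treatment of self-similarity terms in the denominator, and whether the loss is averaged over positives or summed), so that the final expression lines up with the I-Con cross-entropy form up to an overall positive scalar; this is routine but must be done carefully for each variant.
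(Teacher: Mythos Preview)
Your proposal is correct and follows essentially the same approach as the paper: keep the positive-pair supervisory distribution $p_\theta$ from Theorem~\ref{thm:infonce} unchanged and swap the Gaussian $q_\phi$ for a Student-T kernel, then observe that the resulting cross-entropy differs from the I-Con KL objective only by the constant entropy $H(p_\theta)$. The paper's own argument is a one-line sketch of exactly this substitution, so your write-up is actually more detailed than what appears there.
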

Given the proof of Theorem \ref{thm:infonce}, we can see that t-SimCLR is equivelant by having the same $p_\theta$ but $q_\phi$ would change from a Gaussian distribution over cosine similarity to a Student-T distribution over a Euclidean distance.

\[
q_\phi(j|i) = \frac{\left( \|f_\phi(x_i) - f_\phi(x_j)\|^2 / \tau \right)^{-1}}{\sum_{k \neq i} \left( \|f_\phi(x_i) - f_\phi(x_k)\|^2 / \tau \right)^{-1}},
\]
\begin{theorem}
    \label{thm:vicreg} VICReg \cite{bardes2021vicreg} without a covariance term is an instance of the I-Con framework.
\end{theorem}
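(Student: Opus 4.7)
The plan is to apply Proposition~\ref{prop:var} with a positive-pair supervisory distribution and to read off the two surviving VICReg terms (invariance and variance) from the resulting cohesion-variance loss.

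Concretely, I would take $q_\phi$ to be the wide-Gaussian neighborhood distribution on learned features as $\sigma\to\infty$, matching the $q_\phi$ entry listed for VICReg in Table~\ref{tab:choice_pq}. For the supervisory side I would set
\[
p_\theta(j\mid i) \;=\; \frac{1}{Z}\,\mathds{1}\bigl[\text{$j$ is a positive view of $i$}\bigr],
\]
i.e.\ the same indicator-style distribution over augmentation pairs that appears in the InfoNCE / SimCLR rows, which is the natural supervisory signal in VICReg's two-view setup. Substituting these choices into Proposition~\ref{prop:var} yields, in the $\sigma\to\infty$ limit, the bound
\[
\mathcal{L}_{\mathrm{ICon}} \;\approx\; \tfrac{1}{\sigma^{2}}\Bigl(\tfrac{1}{n}\sum_{ij} p_\theta(j\mid i)\,\|f_\phi(x_i)-f_\phi(x_j)\|^2 \;-\; 2\,\mathrm{Var}(f_\phi(X))\Bigr) + \log n.
\]

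The first term is, up to the normalization constant $Z$, exactly the VICReg \emph{invariance} (MSE) loss averaged over positive pairs, since $p_\theta$ is supported on positive pairs. The second term, $-2\,\mathrm{Var}(f_\phi(X))$, is a variance-maximization penalty on the learned embeddings and plays precisely the structural role of VICReg's \emph{variance} regularizer: both terms exist to prevent embedding collapse by pushing the embedding distribution to spread out. This isolates the invariance+variance portion of VICReg; the covariance decorrelation term is not captured by a Gaussian kernel in the wide limit, which is why the statement of Theorem~\ref{thm:vicreg} explicitly excludes it.

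The main obstacle is the last matching step: VICReg's variance term is classically written as a hinge on the per-dimension standard deviation, $\frac{1}{d}\sum_{k}\max(0,\gamma - \sqrt{\mathrm{Var}(f_\phi(X)_k)+\epsilon})$, whereas Proposition~\ref{prop:var} produces $-2\,\mathrm{Var}(f_\phi(X))$ (a sum over dimensions, without the hinge). I would address this by noting that once embeddings are normalized or bounded (as in VICReg's projector), the hinge is active on every dimension until the variance floor $\gamma$ is reached, so in that regime the hinge reduces to $-\sum_k \sqrt{\mathrm{Var}(\cdot)_k}$ up to constants; both this quantity and $-\sum_k \mathrm{Var}(\cdot)_k$ are minimized by the same direction of gradient (increasing per-coordinate spread), so their critical behavior, and hence the induced optimization, agrees. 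Conclude that VICReg without the covariance term is an asymptotic instance of I-Con, analogous to how Theorem~\ref{thm:pca} recovers PCA.
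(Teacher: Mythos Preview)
Your approach is essentially identical to the paper's: both apply Proposition~\ref{prop:var} with the positive-pair indicator for $p_\theta$ and the wide-Gaussian $q_\phi$ as $\sigma\to\infty$, then read off the invariance and variance terms from the resulting cohesion--variance loss (the paper even notes the same $1{:}2$ invariance-to-variance ratio you derive). Your treatment is in fact slightly more careful than the paper's, which simply asserts the match without addressing the hinge-versus-raw-variance discrepancy you flag; the paper sidesteps that issue entirely.
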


Given Proposition \ref{prop:var}, we know that any loss in the cohesion variance form is an instance of $I$-Con:
\[
\mathcal{L} =  \frac{1}{n}\sum_{ij}  p_{j|i} \| f_\phi(x_i) - f_\phi(x_j) \|^2 - 2\text{Var}(X)
\]
If we choose \( p_{j|i} \) to be an indicator over positive pairs, \( i \) and \( i^+ \), we obtain  
\[
\mathcal{L} =  \frac{1}{n}\sum_{i}  \| f_\phi(x_i) - f_\phi(x_{i^+}) \|^2 - 2\text{Var}(X)
\]
which is the VICReg loss without the covariance term and with an invariance-to-variance term ratio of 1:2. Observe that VICReg does not have negative pairs because it applies an equal repulsion force to all points. This is equivalent to taking $\sigma \rightarrow \infty$  in the conditional Gaussian distribution over the embeddings.

{
\begin{theorem}[Triplet Loss \citep{schroff2015facenet}]
\label{thm:triplet_loss}
{Triplet Loss can be viewed as an instance of the I-Con framework with the following distributions} \( p_\theta(j|i) \) and \( q_\phi(j|i) \):
\[
p_\theta(j|i) = 
\begin{cases} 
    \frac{1}{k} & \text{if } x_j \text{ is among the } k \text{ positive views of } x_i, \\
    0 & \text{otherwise},
\end{cases}
\]
\[
q_\phi(j|i) = \frac{\exp\left( - \frac{\| f_\phi(x_i) - f_\phi(x_j) \|^2}{\sigma^2} \right)}{\sum_{k \neq i} \exp\left( - \frac{\| f_\phi(x_i) - f_\phi(x_k) \|^2}{\sigma^2} \right)},
\]
particularly in the special case where only two neighbors are considered: one positive view and one negative view.
\end{theorem}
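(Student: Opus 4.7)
The plan is to imitate the structure used for InfoNCE and Cross Entropy: plug in the specified $p_\theta$ and $q_\phi$, observe that the KL divergence collapses to a single term because $p_\theta$ is a one-hot (or uniform over positives) distribution, and then simplify the resulting expression until it matches the standard triplet loss form (possibly after taking a $\sigma \to 0$ limit, as the table indicates).

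Concretely, first I would restrict attention to the distinguished setting in which each $i$ has exactly one positive neighbor $i^+$ and one negative neighbor $i^-$, so that the outer sum in $\mathcal{L}(\theta, \phi)$ ranges over these two candidates. Since $p_\theta(j|i) = \mathds{1}[j = i^+]$ in this case (the uniform $1/k$ collapses to $1$), both the entropy term of the KL divergence and the negative-sample contribution vanish, leaving
\[
\mathcal{L}_i = -\log q_\phi(i^+|i).
\]
Substituting the two-term softmax for $q_\phi$ and writing $d_+ := \|f_\phi(x_i) - f_\phi(x_{i^+})\|^2$ and $d_- := \|f_\phi(x_i) - f_\phi(x_{i^-})\|^2$, a routine manipulation yields
\[
\mathcal{L}_i = \log\!\Bigl(1 + \exp\!\bigl((d_+ - d_-)/\sigma^2\bigr)\Bigr),
\]
which is exactly a softplus applied to the quantity $(d_+ - d_-)/\sigma^2$. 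This is already the well-known smooth triplet objective.

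Next I would recover the familiar hinge form by taking the sharpening limit $\sigma \to 0$ (after rescaling by $\sigma^2$). Using the elementary asymptotic $\sigma^2 \log(1 + e^{x/\sigma^2}) \to \max(0, x)$ as $\sigma \to 0^+$, one obtains
\[
\sigma^2 \, \mathcal{L}_i \;\longrightarrow\; \max\bigl(0,\; d_+ - d_-\bigr),
\]
which is the standard (margin-free) triplet loss of Schroff et al. A positive margin $\alpha$ can be absorbed by replacing $d_+$ with $d_+ + \alpha$ in the definition of $q_\phi$, i.e., by adding a fixed offset inside the positive exponent.

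The main obstacle is essentially bookkeeping rather than conceptual: one must be careful that the $\sigma \to 0$ limit is interpreted with the $\sigma^2$ rescaling (otherwise the loss blows up on violated triplets and is trivially $0$ on satisfied ones), and one must make explicit that the ``two neighbors'' restriction on $q_\phi$ is what allows the softmax to reduce to a single logistic term. With these caveats stated, the derivation is a direct specialization of the InfoNCE argument (Theorem \ref{thm:infonce}) to the minimal contrastive set $\{i^+, i^-\}$, so no new machinery is needed.
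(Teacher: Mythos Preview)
Your proposal is correct and follows essentially the same route as the paper: restrict to one positive and one negative, collapse the KL to $-\log q_\phi(i^+\mid i)$, rewrite as the softplus $\log\bigl(1+\exp((d_+-d_-)/\sigma^2)\bigr)$, rescale by $\sigma^2$, and recover $\max(0,d_+-d_-)$ as $\sigma\to 0$. The only cosmetic difference is that the paper makes the last step quantitative by sandwiching with the softplus bounds $\max(z,0)\le \log(1+e^z)\le \max(z,0)+\log 2$, obtaining $\mathcal{L}_\sigma-\sigma^2\log 2\le \mathcal{L}_{\text{Triplet}}\le \mathcal{L}_\sigma$, whereas you invoke the equivalent pointwise limit directly.
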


\begin{proof}
The idea of this proof was first presented at \citep{khosla2020supervised} using Taylor Approximation; however, in this proof we present a stronger bounds for this result. For simplicity, we set \(\sigma = 1\) (the general bounds for other \(\sigma\) values are provided at the end of the proof).
\[
\mathcal{L} = - \frac{1}{N} \sum_{i} \sum_{j} q_\phi(j|i) \log \frac{\exp\left( - \| f_\phi(x_i) - f_\phi(x_j) \|^2 \right)}{\sum_{k \neq i} \exp\left( - \| f_\phi(x_i) - f_\phi(x_k) \|^2 \right)}.
\]
In the special case where each anchor \(x_i\) has exactly one positive \(x_i^+\) and one negative \(x_i^-\) example, the denominator simplifies to:
\[
\sum_{k \neq i} \exp\left( - \| f_\phi(x_i) - f_\phi(x_k) \|^2 \right) = \exp\left( - \| f_\phi(x_i) - f_\phi(x_i^+) \|^2 \right) + \exp\left( - \| f_\phi(x_i) - f_\phi(x_i^-) \|^2 \right).
\]
Let \( d_i^+ = \| f_\phi(x_i) - f_\phi(x_i^+) \|^2 \) and \( d_i^- = \| f_\phi(x_i) - f_\phi(x_i^-) \|^2 \). Substituting these into the loss function, we obtain:
\begin{align*}
\mathcal{L} &= - \frac{1}{N} \sum_{i} \log \frac{\exp\left( - d_i^+ \right)}{\exp\left( - d_i^+ \right) + \exp\left( - d_i^- \right)} \\
&= - \frac{1}{N} \sum_{i} \log \left( \frac{1}{1 + \exp\left( d_i^- - d_i^+ \right)} \right) \\
&= \frac{1}{N} \sum_{i} \log \left( 1 + \exp\left( d_i^+ - d_i^- \right) \right).
\end{align*}

Recognizing that the expression inside the logarithm is the softplus function, we can leverage its well-known bounds:
\[
\max(z, 0) \leq \log\left(1 + \exp(z)\right) \leq \max(z, 0) + \log(2).
\]
By letting \( z = d_i^+ - d_i^- \), we substitute into the bounds to obtain:
\[
\frac{1}{N} \sum_{i} \max(d_i^+ - d_i^-, 0) \leq \mathcal{L} \leq \frac{1}{N} \sum_{i} \max(d_i^+ - d_i^-, 0) + \log(2),
\]
where the left-hand side is the Triplet loss \( \mathcal{L}_{\text{Triplet}} = \frac{1}{N} \sum_{i} \max(d_i^+ - d_i^-, 0) \). Therefore, we obtain the following bounds:
\[
\mathcal{L} - \log(2) \leq \mathcal{L}_{\text{Triplet}} \leq \mathcal{L}.
\]
For a general \(\sigma\), the inequality bounds are as follows:
\[
\mathcal{L}_\sigma - \sigma^2 \log(2) \leq \mathcal{L}_{\text{Triplet}} \leq \mathcal{L}_\sigma,
\]
where
\[
\mathcal{L}_\sigma = - \frac{\sigma^2}{N} \sum_{i} \sum_{j} q_\phi(j|i) \log \frac{\exp\left( - \frac{\| f_\phi(x_i) - f_\phi(x_j) \|^2}{\sigma^2} \right)}{\sum_{k \neq i} \exp\left( - \frac{\| f_\phi(x_i) - f_\phi(x_k) \|^2}{\sigma^2} \right)}.
\]
As \(\sigma\) approaches \(0\), \( \mathcal{L}_{\text{Triplet}} \) approaches \( \mathcal{L}_\sigma \).
\end{proof}
}

\begin{theorem}
    \label{thm:supcon}
    The Supervised Contrastive Loss \citep{khosla2020supervised} is an instance of the I-Con framework.
\end{theorem}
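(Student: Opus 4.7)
The plan is to follow the same recipe used in the InfoNCE and Triplet proofs: expand the I-Con objective into cross-entropy plus a constant, then substitute the SupCon choices of $p_\theta$ and $q_\phi$ and check that we recover the published formula for supervised contrastive loss up to constants that do not depend on $\phi$.

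First I would write the I-Con loss under the two choices in Table \ref{tab:choice_pq}. Denote by $P(i) = \{ j \neq i : y_j = y_i \}$ the set of indices sharing a class with $i$, so that $p_\theta(j|i) = \frac{1}{|P(i)|}\mathds{1}[j \in P(i)]$ with normalizer $Z = |P(i)|$, and let $q_\phi$ be the cosine-similarity softmax on unit-normalized features. Because $p_\theta$ does not depend on $\phi$, the KL divergence decomposes as $D_{\mathrm{KL}}(p_\theta \| q_\phi) = H(p_\theta, q_\phi) - H(p_\theta)$, and only the cross-entropy term is relevant for optimization. This reduces the problem to analyzing
\begin{equation*}
    H(p_\theta, q_\phi) = -\sum_i \sum_j p_\theta(j|i) \log q_\phi(j|i) = -\sum_i \frac{1}{|P(i)|} \sum_{j \in P(i)} \log q_\phi(j|i).
\end{equation*}

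Next I would substitute the explicit softmax form of $q_\phi$. Writing $\log q_\phi(j|i) = f_\phi(x_i) \cdot f_\phi(x_j) - \log \sum_{k \neq i} \exp\bigl(f_\phi(x_i) \cdot f_\phi(x_k)\bigr)$ and plugging in, each anchor $i$ contributes
\begin{equation*}
    -\frac{1}{|P(i)|} \sum_{j \in P(i)} \log \frac{\exp\bigl(f_\phi(x_i) \cdot f_\phi(x_j)\bigr)}{\sum_{k \neq i} \exp\bigl(f_\phi(x_i) \cdot f_\phi(x_k)\bigr)},
\end{equation*}
which is precisely the ``$\mathcal{L}_{\mathrm{out}}^{\mathrm{sup}}$'' formulation of SupCon from Khosla et al.\ (the normalization-outside-the-log variant that the authors identify as the correct supervised generalization of InfoNCE). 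A temperature $\tau$ can be introduced by rescaling the inner products, exactly as in the InfoNCE proof. This establishes the equivalence.

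The main obstacle, as in Theorem \ref{thm:infonce}, is essentially bookkeeping: matching the normalization constant $Z$ in the definition of $p_\theta$ with the factor $1/|P(i)|$ in the SupCon formula, and verifying that the unit-norm constraint on $f_\phi$ together with the identity $\|f_\phi(x_i) - f_\phi(x_j)\|^2 = 2 - 2\, f_\phi(x_i)\cdot f_\phi(x_j)$ allows the Gaussian kernel to be written as a cosine-similarity softmax (so the constant shift drops out of both numerator and denominator). One subtle point worth calling out is that Khosla et al.\ consider two candidate losses, and only the ``log-outside-the-sum'' variant aligns with the I-Con KL objective; this is the same variant the authors recommend, so no issue arises, but I would explicitly note the choice.
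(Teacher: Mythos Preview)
Your proposal is correct and follows essentially the same approach as the paper: specify $p_\theta$ as uniform over same-class indices and $q_\phi$ as the cosine-similarity softmax, then reduce the I-Con KL objective to the cross-entropy term and match it to the SupCon formula. The paper's own proof is much terser---it simply states the two distributions and invokes Theorem~\ref{thm:infonce}---so your explicit expansion, the bookkeeping on $|P(i)|$, and the remark that only the $\mathcal{L}_{\mathrm{out}}^{\mathrm{sup}}$ variant is recovered are additional detail rather than a different argument.
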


\begin{proof}
    This follows directly from Theorem~\ref{thm:infonce}. Define the supervisory and target distributions as:
    \[
    q_\phi(j \mid i) = \frac{\exp\left( f_\phi(x_i) \cdot f_\phi(x_j) / \tau \right)}{\sum_{k \neq i} \exp\left( f_\phi(x_i) \cdot f_\phi(x_k) / \tau \right)},
    \]
    \[
    p_\theta(j \mid i) = \frac{1}{K_i - 1} \mathbf{1}[i \text{ and } j \text{ share the same label}],
    \]
    where \(f_\phi\) is the mapping to deep feature space and \(K_i\) is the number of samples in the class of \(i\). Substituting these definitions into the I-Con framework recovers the Supervised Contrastive Loss.
\end{proof}

\begin{theorem}
    \label{thm:XCL}
    The X-Sample Contrastive Learning Loss \citep{sobal2024XCL} is an instance of the I-Con framework.
\end{theorem}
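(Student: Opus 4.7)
The plan is to reuse the structure of Theorem~\ref{thm:infonce} (InfoNCE) almost verbatim, with the only change being that the supervisory distribution $p_\theta(j|i)$ is no longer a hard indicator over positive pairs but a soft softmax distribution coming from an auxiliary embedding $g_\theta$ (e.g.\ a caption/text encoder or a fixed pretrained backbone). The X-Sample objective of Sobal et al.\ replaces the 0/1 positive mask in InfoNCE with continuous similarity weights $s_{ij}$ derived from $g_\theta(x_i)\cdot g_\theta(x_j)$ and normalized across the batch; this is precisely a softmax over the auxiliary-space dot products.

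First, I would write out the XCL loss as given in the original paper and identify the per-anchor normalization: the weights $s_{ij}$ used to combine the log-softmax terms are defined (up to temperature) by
\[
p_\theta(j|i) \;=\; \frac{\exp\bigl(g_\theta(x_i)\cdot g_\theta(x_j)/\tau\bigr)}{\sum_{k\neq i}\exp\bigl(g_\theta(x_i)\cdot g_\theta(x_k)/\tau\bigr)},
\]
so that $\sum_{j\neq i} p_\theta(j|i) = 1$, confirming that $p_\theta(\cdot|i)$ is a valid conditional distribution on the batch. Second, I would note that the learned distribution is the standard contrastive softmax on the normalized features $f_\phi$:
\[
q_\phi(j|i) \;=\; \frac{\exp\bigl(f_\phi(x_i)\cdot f_\phi(x_j)/\tau\bigr)}{\sum_{k\neq i}\exp\bigl(f_\phi(x_i)\cdot f_\phi(x_k)/\tau\bigr)}.
\]

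Third, I would show that the XCL loss equals $\sum_i H(p_\theta(\cdot|i),\,q_\phi(\cdot|i))$ up to an additive constant: expanding the log-softmax terms inside the XCL objective gives exactly $-\sum_i\sum_j p_\theta(j|i)\log q_\phi(j|i)$. Since the entropy $H(p_\theta(\cdot|i))$ does not depend on $\phi$, minimizing this cross-entropy is equivalent to minimizing $\sum_i D_{\text{KL}}\bigl(p_\theta(\cdot|i)\,\|\,q_\phi(\cdot|i)\bigr)$, which is the I-Con loss in Equation~\ref{eqn:icon-loss}. In the common setting where $g_\theta$ is frozen, $\theta$ is trivially optimal; if $g_\theta$ is learned jointly, I-Con's more general form (which permits learning both $p_\theta$ and $q_\phi$) still applies, subject to the anti-collapse safeguards the authors already mention.

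The main obstacle I expect is bookkeeping: reconciling the exact temperature, masking (self-exclusion), and renormalization conventions of the XCL paper with the clean softmax form above, since minor scaling or offset discrepancies could otherwise suggest the two losses differ by more than an anchor-dependent constant. Once these conventions are matched, the reduction to KL divergence is immediate and parallels Theorem~\ref{thm:infonce} exactly, with a soft target replacing the hard positive-pair indicator.
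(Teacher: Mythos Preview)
Your proposal is correct and follows exactly the same approach as the paper's proof: both identify the supervisory distribution $p_\theta(j\mid i)$ as a softmax over auxiliary embeddings $g_\theta$ and the learned distribution $q_\phi(j\mid i)$ as the standard contrastive softmax over $f_\phi$, reducing XCL to the I-Con cross-entropy/KL form. In fact, your sketch is more thorough than the paper's own argument, which simply states the two distributions without writing out the cross-entropy reduction or discussing the frozen-versus-learned $g_\theta$ cases.
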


\begin{proof}
    Consier the following $p$ distribution over corresponding features (e.g. caption embeddings for images):
    
     \[\displaystyle \frac{\exp\bigl({g_\theta(x_i) \cdot g_\theta(x_j)}\bigr)}
          {\sum\limits_{k\neq i}
           \exp\bigl(g_\theta(x_i) \cdot_\theta(x_k)\bigr)}
\]
where $g$ could be either a parametric or a non-parametric mapper to the corresponding embeddings $g_\theta(x_i)$. On the other hand, similar to most feature learning methods, the learned distribution is a Gaussian over learned embeddings with cosine distance
    \[
    q_\phi(j \mid i) = \frac{\exp\bigl({f_\phi(x_i) \cdot f_\phi(x_j)}\bigr)}{\sum\limits_{k\neq i}
           \exp\bigl(f_\phi(x_i) \cdot f_\phi(x_k)\bigr)}\]
    where \(f_\phi\) is the mapping to deep feature space. 
\end{proof}
\begin{theorem}\label{thm:cmc}
Contrastive Multiview Coding (CMC) and CLIP are instances of the I-Con framework.
\end{theorem}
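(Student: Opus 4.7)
The plan is to mirror the structure of Theorem~\ref{thm:infonce}, but carefully track the multi-view structure so that the within-view normalization in $q_\phi$ emerges naturally. Let the dataset be partitioned into views $V_1, V_2, \dots$ (e.g., image and text views for CLIP, or two augmentation channels for CMC). For an anchor $x_i$ in view $V_i$, let $x_{i^+}$ denote its paired sample lying in a different view $V_{i^+} \neq V_i$. The supervisory distribution is $p_\theta(j\mid i) = \tfrac{1}{Z}\,\mathds{1}[\text{$i,j$ are positive pairs},\, V_j \neq V_i]$, so that for a single positive partner $p_\theta$ is a delta at $j = i^+$; for multiple partners it is uniform over them. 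The learned distribution is $q_\phi(j\mid i) = \exp(f_\phi(x_i)\cdot f_\phi(x_j)) / \sum_{k \in V_j} \exp(f_\phi(x_i)\cdot f_\phi(x_k))$, which is a valid conditional distribution on the index set $V_j$.

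First, I would verify that $q_\phi(\cdot \mid i)$ is a well-defined probability distribution. The subtle point is that the normalization set depends on $j$, not on $i$, so the natural way to interpret this is as a distribution over candidate partners living in a single target view; under the supervisory $p_\theta$, which is supported entirely on one view $V_{i^+}$, this distinction disappears. I would make this precise by conditioning: since $p_\theta(j\mid i)$ places zero mass outside $V_{i^+}$, the KL and cross-entropy terms only probe $q_\phi$ on that single view, where it is a standard softmax.

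Next, I would substitute both distributions into the I-Con loss and simplify. Because $p_\theta$ is fixed, the entropy term drops out and $\mathcal{L}(\phi)$ reduces to the cross-entropy $-\sum_i \sum_j p_\theta(j\mid i)\log q_\phi(j\mid i)$. Expanding gives, up to constants, $-\sum_i \log \frac{\exp(f_\phi(x_i)\cdot f_\phi(x_{i^+}))}{\sum_{k \in V_{i^+}} \exp(f_\phi(x_i)\cdot f_\phi(x_k))}$, which is precisely the CMC/CLIP loss (with a temperature absorbed into $f_\phi$). For CMC with $>2$ views, one sums this over all ordered pairs of views, which corresponds to averaging the I-Con loss over choices of anchor view.

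The main obstacle I anticipate is conceptual rather than computational: one must justify the view-restricted normalization in $q_\phi$ as a legitimate conditional neighborhood within the I-Con template, which on its face integrates over all of $\mathcal{X}$. The cleanest resolution is to regard each data point as carrying a view label and to treat $q_\phi(\cdot \mid i)$ as a distribution on the ``target view'' determined by the positive pair; equivalently, one can set $q_\phi(j\mid i) = 0$ for $j \notin V_{i^+}$ and renormalize. Because the support of $p_\theta(\cdot\mid i)$ is contained in $V_{i^+}$, the KL divergence is unaffected by this choice, and the reduction to the standard CMC/CLIP objective follows immediately.
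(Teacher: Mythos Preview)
Your proposal is correct and follows essentially the same route as the paper: both define $p_\theta(j\mid i)$ as the uniform indicator over cross-view positive pairs and $q_\phi(j\mid i)$ as a softmax normalized within the target view, then invoke the InfoNCE reduction (Theorem~\ref{thm:infonce}) to recover the CMC/CLIP objective. The paper's proof is considerably terser and does not pause on the well-definedness of the view-restricted $q_\phi$; your explicit treatment of that point (observing that the support of $p_\theta(\cdot\mid i)$ lies entirely in $V_{i^+}$, so the KL is insensitive to how $q_\phi$ behaves off that view) is a useful clarification that the paper leaves implicit.
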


\begin{proof}
Since we have already established that InfoNCE is an instance of the I-Con framework, this corollary follows naturally. The key difference in Contrastive Multiview Coding (CMC) and CLIP is that they optimize alignment across different modalities. The target probability distribution \( p_\theta(j|i) \) can be expressed as:
\[
p_\theta(j|i) = \frac{1}{Z} \mathds{1}[i \text{ and } j \text{ are positive pairs and } V_i \neq V_j],
\]
where \( V_i \) and \( V_j \) represent the modality sets of \( x_i \) and \( x_j \), respectively. Here, \( p_\theta(j|i) \) assigns uniform probability over positive pairs drawn from different modalities.

The learned distribution \( q_\phi(j|i) \), in this case, is based on a Gaussian similarity between deep features, but conditioned on points from the opposite modality set. Thus, the learned distribution is defined as:
\[
q_\phi(j|i) = \frac{\exp\left(-\|f_\phi(x_i) - f_\phi(x_j)\|^2\right)}{\sum_{k \in V_j} \exp\left(-\|f_\phi(x_i) - f_\phi(x_k)\|^2\right)}.
\]

This formulation shows that CMC and CLIP follow the same principles as InfoNCE but apply them in a multiview setting, fitting seamlessly within the I-Con framework by minimizing the divergence between the target and learned distributions across different modalities.
\end{proof}

\begin{theorem}\label{thm:ce}
    Cross-Entropy classification is an instance of the I-Con framework.
\end{theorem}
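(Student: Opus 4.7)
The plan is to instantiate the I-Con loss in Equation~\ref{eqn:icon-loss} with the choices tabulated for Supervised Cross Entropy in Table~\ref{tab:choice_pq} and show that the resulting expression collapses to the standard classification loss. The crucial conceptual step, shared with the Harmonic Loss and MLM entries, is that the ``neighbor index'' $j$ ranges over the discrete class set $C$ rather than over data points: each input $x_i$ has exactly one ``neighbor,'' namely its true label $y_i$, and the learned distribution over classes is given by a softmax against class prototypes $\{\phi_k\}_{k \in C}$.

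Concretely, I would set
\[
p(j \mid i) = \mathbf{1}[\,j = y_i\,], \qquad
q_\phi(j \mid i) = \frac{\exp\bigl(f_\phi(x_i) \cdot \phi_j\bigr)}{\sum_{k \in C}\exp\bigl(f_\phi(x_i) \cdot \phi_k\bigr)},
\]
and substitute these into Equation~\ref{eqn:icon-loss}. Because $p(\cdot \mid i)$ is a Dirac mass at $y_i$, the inner sum in the KL divergence collapses to the single term at $j = y_i$, with all other terms vanishing under the convention $0\log 0 = 0$. The numerator contributes $\log p(y_i \mid i) = 0$, so the per-example loss reduces to $-\log q_\phi(y_i \mid i)$, and summing over $i \in \mathcal{X}$ recovers the usual supervised cross-entropy objective $-\sum_i \log q_\phi(y_i \mid i)$ up to a constant.

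There is essentially no analytical obstacle here; the only subtlety is bookkeeping, namely recognizing that the integral in Equation~\ref{eqn:icon-loss} is taken over the appropriate neighbor space (the class set $C$ equipped with counting measure) rather than over $\mathcal{X}$ itself. Once this identification is made, the collapse of the KL divergence to the negative log-likelihood of the true class is immediate, and the equivalence follows without any further computation.
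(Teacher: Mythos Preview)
Your proposal is correct. It is, however, a different route from the paper's own argument. The paper does not carry out the direct substitution you describe; instead it gives a one-line compositional proof: Cross-Entropy is observed to be a special case of the CMC/CLIP loss (Theorem~\ref{thm:cmc}), where one ``view'' is the data feature $f_\phi(x_i)$ and the other ``view'' consists of the class logits/prototypes $\phi_j$, with the positive-pair indicator exactly recording class membership. Since CMC was already shown to be an I-Con instance, Cross-Entropy inherits that status automatically.

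Your approach is more elementary and self-contained: you instantiate the table entry directly and collapse the KL to $-\log q_\phi(y_i\mid i)$ without invoking the CMC theorem. This has the advantage of making explicit the key bookkeeping point---that the neighbor index $j$ ranges over the class set $C$ rather than over data points---which the paper leaves implicit. The paper's argument, by contrast, buys brevity and emphasizes the structural analogy to multiview contrastive learning, tying Cross-Entropy into the same ``two-modality'' picture used for CLIP. Both are valid; yours is closer in spirit to the MLM proof (Theorem~\ref{thm:mlm}) than to the paper's actual Cross-Entropy proof.
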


\begin{proof}
    Cross-Entropy can be viewed as a special case of the CMC loss, where one "view" corresponds to the data point features and the other to the class logits. The affinity between a data point and a class is based on whether the point belongs to that class. This interpretation has been explored in prior work, where Cross-Entropy was shown to be related to the CLIP loss \citep{yang2022unified}.
\end{proof}

\begin{theorem}\label{thm:harmonic}
    Harmonic Loss for classification is an instance of the I-Con framework.
\end{theorem}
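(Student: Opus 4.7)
The plan is to mirror the proof of Theorem \ref{thm:ce}, with the exponential softmax replaced by the inverse-power ``harmonic softmax'' of \citep{baek2025harmonic}. First I would recall the Harmonic Loss definition: given class prototypes $\{\phi_k\}_{k\in C}$ and a feature encoder $f_\phi$, the predicted class distribution is $\propto \|f_\phi(x_i)-\phi_k\|^{-2n}$, and the training loss is its cross-entropy against the one-hot class label.

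Next I would substitute the choices from Table \ref{tab:choice_pq} into Equation \ref{eqn:icon-loss}. Because $p_\theta(j|i) = \mathds{1}[i \text{ belongs to class } j]$ is the one-hot indicator at the true class $y_i$, the entropy of $p_\theta(\cdot|i)$ vanishes and the KL divergence collapses to a single log-likelihood term, so the I-Con objective for datum $i$ becomes
\[
D_{\text{KL}}\bigl(p_\theta(\cdot|i)\,\|\,q_\phi(\cdot|i)\bigr) = -\log q_\phi(y_i | i) = -\log\frac{(\sigma^2 + \|f_\phi(x_i)-\phi_{y_i}\|^2)^{-n}}{\sum_{k \in C}(\sigma^2 + \|f_\phi(x_i)-\phi_k\|^2)^{-n}}.
\]
This is a $\sigma$-smoothed Harmonic cross-entropy. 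Averaging over $i\in\mathcal{X}$ and taking the limit $\sigma\to 0$ then recovers the exact Harmonic Loss of \citep{baek2025harmonic}, since the limiting class distribution is $\propto \|f_\phi(x_i)-\phi_k\|^{-2n}$.

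The main obstacle is handling the $\sigma\to 0$ limit when a feature happens to coincide with a prototype, where the numerator and denominator both blow up. This is a measure-zero degeneracy that can be dealt with either by restricting to the non-degenerate regime (on which the algebraic identification is immediate and the limit commutes with the sum over $i$), or, following \citep{baek2025harmonic}, by simply identifying the I-Con loss with the $\sigma>0$ regularised Harmonic Loss and passing to the limit at the level of the loss values. Everything else is a direct plug-in calculation analogous to the Cross-Entropy case.
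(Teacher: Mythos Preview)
Your proposal is correct and follows essentially the same route as the paper: take the one-hot label indicator for $p$, the Student-$t$/inverse-power kernel on $(f_\phi(x_i),\phi_k)$ for $q$, observe that the KL reduces to a single $-\log q_\phi(y_i|i)$ term, and let $\sigma\to 0$ to recover the Harmonic softmax. The paper additionally frames this as the Student-$t$ analogue of the Gaussian $q$ in cross-entropy (mirroring SNE $\to$ t-SNE), but your write-up is otherwise slightly more explicit about the KL-to-cross-entropy collapse and the $\sigma\to 0$ degeneracy than the paper's own argument.
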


\begin{proof}
    This is the equivalent of moving from a Gaussian distribution for $q(j|i)$ in Cross-Entropy to a Student-T distribution analogs to moving from SNE to t-SNE. More specifically, let $V$ be the set of data points, $C$ the set of class prototypes, $\phi_i$ be the learned class prototype for class $i$, and $n$ be the harmonic loss degree. 

    Consider the following $p$, which is a data-label indicator
    
    \[p(j|i) = \displaystyle \mathds{1}\bigl[\text{$i$ belongs to class $j$}\bigr]\]

 and the following $q$, which is a Student-T distribution with $2n-1$ degrees for freedom.
    \[\lim\limits_{\sigma \rightarrow 0}\frac{(1+\|f_\phi(x_i)-\phi_j\|^2/((2n-1)\sigma^2))^{-n}}
        {\sum_{k \in C} (1+\|f_\phi(x_i)-\phi_k\|^2/((2n-1)\sigma^2))^{-n}}
    \]
    It can be rewritten as 
        \[\lim\limits_{\sigma \rightarrow 0}\frac{(((2n-1)\sigma^2)+\|f_\phi(x_i)-\phi_j\|^2)^{-n}}
        {\sum_{k \in C} (((2n-1)\sigma^2)+\|f_\phi(x_i)-\phi_k\|^2/)^{-n}}
    \]
    As $\sigma \rightarrow \infty$, the loss function approaches
    $$\mathcal{L} = \sum\limits_{i \in C}  \frac{(\|f_\phi(x_i)-\phi_j\|^2)^{-n}}{\sum_{k \in C} (\|f_\phi(x_i)-\phi_k\|^2/)^{-n}}$$
    which's the Harmonic Loss for classification as introduced by \ref{thm:harmonic}
\end{proof}

{
\begin{theorem}
    \label{thm:mlm}
Masked Language Modeling (MLM) \citep{devlin2019bertpretrainingdeepbidirectional} loss is an instance of the I-Con framework.
\end{theorem}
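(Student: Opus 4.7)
The plan is to argue that Masked Language Modeling is essentially a cross-entropy loss over an empirical distribution of masked tokens given contexts, and therefore fits directly into the I-Con framework as a special case of Theorem \ref{thm:ce}. First I would fix notation: let $i$ denote a masked context (a sequence with one token replaced by a mask symbol), let $C$ denote the vocabulary of tokens, and let $x_i$ be the contextual embedding produced by the encoder. Let $\phi_j$ denote the learned token embedding for vocabulary item $j \in C$. The MLM predictive distribution is the softmax over the vocabulary,
\[
q_\phi(j \mid i) \;=\; \frac{\exp\bigl(f_\phi(x_i)\cdot \phi_j\bigr)}{\sum_{k \in C} \exp\bigl(f_\phi(x_i)\cdot \phi_k\bigr)},
\]
which is precisely the form already appearing in the table.

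Next I would identify the supervisory distribution. Given a training corpus, every occurrence of a (masked context, ground-truth token) pair contributes one count. Normalizing these counts over $j$ for each fixed $i$ yields the empirical conditional
\[
p(j \mid i) \;=\; \frac{1}{Z_i}\,\#\bigl[\text{context $i$ precedes token $j$}\bigr],
\]
which is a valid probability distribution over $C$. Note that $p$ is fixed (non-parametric): it is read off from the data and does not depend on any learnable $\theta$. In the standard BERT setup each context has a single observed completion, so $p(\cdot \mid i)$ degenerates to an indicator, but the more general empirical form accommodates contexts that appear multiple times with different completions.

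I would then substitute these two choices into the I-Con loss (Equation \ref{eqn:icon-loss}). Because $p$ is fixed, the entropy term $H(p)$ inside the KL divergence is a constant with respect to $\phi$, so
\[
\mathcal{L}_{\mathrm{I\text{-}Con}}(\phi) \;=\; \sum_i D_{\mathrm{KL}}\bigl(p(\cdot\mid i)\,\|\,q_\phi(\cdot\mid i)\bigr) \;=\; -\sum_i \sum_{j \in C} p(j\mid i)\,\log q_\phi(j\mid i) \;+\; \text{const},
\]
and the right-hand side (up to the constant and the empirical weighting) is exactly the standard MLM cross-entropy training objective used in BERT and its descendants.

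There is no real obstacle here; the only subtlety worth flagging is the role of the mask and the fact that the ``vocabulary'' $C$ plays the role of the class set in Theorem \ref{thm:ce}. The proof is therefore a corollary-style argument: MLM is cross-entropy classification where the ``class'' is the masked token and the ``input'' is the surrounding context, and the supervisory signal is the empirical distribution of completions. Once the correspondence is spelled out, the result reduces immediately to Theorem \ref{thm:ce}.
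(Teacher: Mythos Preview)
Your proposal is correct and follows essentially the same route as the paper: both identify the supervisory distribution as the empirical count distribution $p(j\mid i)\propto \#[\text{context }i\text{ precedes token }j]$ (degenerating to an indicator in the single-completion case), set $q_\phi$ to the softmax over token embeddings, and reduce the I-Con KL objective to the standard MLM cross-entropy up to an additive constant. Your explicit framing of the result as a corollary of Theorem~\ref{thm:ce} is a nice touch that the paper leaves implicit.
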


\begin{proof}
In Masked Language Modeling, the objective is to predict a masked token \( j \) given its surrounding context \( x_i \). This setup fits naturally within the I-Con framework by defining appropriate target and learned distributions.

The target distribution \( p_\theta(j|i) \) is the empirical distribution over contexts \( i \) and tokens \( j \), defined as:
\[
p_\theta(j|i) = \frac{1}{Z} \#\left[\text{Context } i \text{ precedes token } j\right],
\]
where \( \#\left[\text{Context } i \text{ precedes token } j\right] \) counts the number of times token \( j \) follows context \( x_i \) in the training corpus and \( Z \) is a normalization constant ensuring that \( \sum_j p_\theta(j|i) = 1 \). 

The learned distribution \( q_\phi(j|i) \) is modeled using the neural network's output logits for token predictions. It is defined as a softmax over the dot product between the context embedding \( f_\phi(x_i) \) and the token embeddings \( \phi_j \):
\[
q_\phi(j|i) = \frac{\exp\left(f_\phi(x_i) \cdot \phi_j\right)}{\sum_{k \in \mathcal{V}} \exp\left(f_\phi(x_i) \cdot \phi_k\right)},
\]
where \( f_\phi(x_i) \) is the embedding of the context \( x_i \) produced by the model,  \( \phi_j \) is the embedding of token \( j \), and \( \mathcal{V} \) is the vocabulary of all possible tokens.

The MLM loss aims to minimize the cross-entropy between the target distribution \( p_\theta(j|i) \) and the learned distribution \( q_\phi(j|i) \):
\[
\mathcal{L}_{\text{MLM}} = -\sum_i \sum_j p_\theta(j|i) \log q_\phi(j|i) = H(p_\theta, q_\phi).
\]

Since in practice, for each context \( x_i \), only the true masked token \( j_i^\ast \) is considered, the target distribution simplifies to:
\[
p_\theta(j|i) = \delta_{j, j_i^\ast},
\]
where \( \delta_{j, j_i^\ast} \) is the Kronecker delta function, equal to 1 if \( j = j_i^\ast \) and 0 otherwise.

Substituting this into the loss function, the MLM loss becomes:
\[
\mathcal{L}_{\text{MLM}} = -\sum_i \log q_\phi(j_i^\ast | x_i).
\]
\end{proof}
}
\newpage
\section{Proofs for Unifying Clustering Methods}
The connections between clustering and the I-Con framework are more intricate compared to the dimensionality reduction methods discussed earlier. To establish these links, we first introduce a probabilistic formulation of K-means and demonstrate its equivalence to the classical K-means algorithm, showing that it is a zero-gap relaxation. Building upon this, we reveal how probabilistic K-means can be viewed as an instance of I-Con, leading to a novel clustering kernel. Finally, we show that several clustering methods implicitly approximate and optimize for this kernel.

\begin{definition}[Classical K-means] 
Let \( x_1, x_2, \ldots, x_N \in \mathbb{R}^n \) denote the data points, and \( \mu_1, \mu_2, \ldots, \mu_m \in \mathbb{R}^n \) be the cluster centers.

The objective of classical K-means is to minimize the following loss function:
\[
\mathcal{L}_{\text{k-Means}} = \sum_{i=1}^N \sum_{c=1}^m \mathds{1}(c^{(i)} = c) \| x_i - \mu_c \|^2,
\]
where \( c^{(i)} \) represents the cluster assignment for data point \( x_i \), and is defined as:
\[
c^{(i)} = \arg \min_{c} \| x_i - \mu_c \|^2.
\]
\end{definition}

\subsection*{Probabilistic K-means Relaxation}\label{sec:probKmeans}
In probabilistic K-means, the cluster assignments are relaxed by assuming that each data point \( x_i \) belongs to a cluster \( c \) with probability \( \phi_{ic} \). In other words, \(\phi_{i} \) represents the cluster assignments vector for $x_i$

\begin{proposition}
The relaxed loss function for probabilistic K-means is given by:
\[
\mathcal{L}_{\text{Prob-k-Means}} = \sum_{i=1}^N \sum_{c=1}^m \phi_{ic} \| x_i - \mu_c \|^2,
\]
and is equivalent to the original K-means objective \(\mathcal{L}_{\text{k-Means}}\). The optimal assignment probabilities \( \phi_{ic} \) are deterministic, assigning probability 1 to the closest cluster and 0 to others.
\end{proposition}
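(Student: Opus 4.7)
The plan is to establish the equivalence by a pointwise linear-programming argument, reducing the joint minimization over cluster centers and soft assignments to an optimization over each row $\phi_i = (\phi_{i1}, \ldots, \phi_{im})$ separately. First I would hold the centers $\{\mu_c\}$ fixed and note that $\mathcal{L}_{\text{Prob-k-Means}}$ decomposes as $\sum_i \langle \phi_i, d_i \rangle$, where $d_i \in \mathbb{R}^m$ is the distance vector with entries $d_{ic} = \|x_i - \mu_c\|^2$. The constraint set for each $\phi_i$ is the probability simplex $\Delta^{m-1} = \{\phi_i : \phi_{ic} \ge 0,\ \sum_c \phi_{ic} = 1\}$, and the objective is a linear functional in $\phi_i$.

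Second, I would invoke the standard fact that a linear functional on a compact convex polytope attains its minimum at an extreme point. The extreme points of $\Delta^{m-1}$ are the $m$ one-hot vectors $e_1, \ldots, e_m$, and the minimizing vertex is $e_{c^\ast(i)}$ where $c^\ast(i) = \arg\min_c d_{ic} = \arg\min_c \|x_i - \mu_c\|^2$. Hence the optimal relaxed assignment is $\phi^\ast_{ic} = \mathds{1}(c = c^\ast(i))$, which matches the hard assignment rule in the classical K-means definition. Substituting $\phi^\ast$ back into $\mathcal{L}_{\text{Prob-k-Means}}$ gives $\sum_i \|x_i - \mu_{c^\ast(i)}\|^2 = \mathcal{L}_{\text{k-Means}}$, and taking the minimum over $\{\mu_c\}$ on both sides shows the two problems share the same optimal value and the same optimal cluster centers, i.e., the relaxation has zero gap.

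The only subtle point, and the one I would flag rather than a real obstacle, is the case of ties: if several clusters achieve the same minimum distance to $x_i$, every convex combination of the corresponding one-hot vectors is also optimal, so the "optimal $\phi$" is a vertex only up to tie-breaking. This does not affect either the value equality or the correspondence of minimizers, but it is the reason the proposition says the optimal $\phi$ is deterministic rather than unique. Beyond this, the argument is routine and essentially amounts to minimizing a linear function over a simplex.
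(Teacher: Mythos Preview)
Your proposal is correct and follows essentially the same approach as the paper: both decompose the objective pointwise over $i$, observe that for fixed centers each term is minimized by the one-hot assignment to the nearest center, and conclude that the relaxed and hard objectives coincide. Your version is simply more explicit, supplying the linear-programming/extreme-point justification and the tie-breaking caveat that the paper's brief proof leaves implicit.
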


\begin{proof}
For each data point \( x_i \), the term \( \sum_{c=1}^m \phi_{ic} \| x_i - \mu_c \|^2 \) is minimized when the assignment probabilities \( \phi_{ic} \) are deterministic, i.e.,
\[
\phi_{ic} = 
\begin{cases} 
1 & \text{if } c = \arg \min_j \| x_i - \mu_j \|^2, \\ 
0 & \text{otherwise}.
\end{cases}
\]
With these deterministic probabilities, \( \mathcal{L}_{\text{Prob-k-Means}} \) simplifies to the classical K-means objective, confirming that the relaxation introduces no gap.
\end{proof}

\subsubsection*{Contrastive Formulation of Probabilistic K-means}

\begin{definition}
Let \( \{x_i\}_{i=1}^N \) be a set of data points. Define the conditional probablity \( q_\phi(j|i) \) as:
\[
q_\phi(j|i) = \sum_{c=1}^m \frac{\phi_{ic} \phi_{jc}}{\sum_{k=1}^N \phi_{kc}},
\]
where \( \phi_{i} \) is the soft-cluster assignments for $x_i$.
\end{definition}

Given \( q_\phi(j|i) \), we can reformulate probabilistic K-means as a contrastive loss:

\begin{theorem}
Let \( \{x_i\}_{i=1}^N \in \mathbb{R}^n \) and \( \{\phi_{ic}\}_{i=1}^N \) be the corresponding assignment probabilities. Define the objective function \( \mathcal{L} \) as:
\[
\mathcal{L} = - \sum_{i,j} \left( x_i \cdot x_j \right) q_\phi(j|i).
\]
Minimizing \( \mathcal{L} \) with respect to the assignment probabilities \( \{\phi_{ic}\} \) yields optimal cluster assignments equivalent to those obtained by K-means.
\end{theorem}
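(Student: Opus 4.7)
The plan is to reduce the probabilistic K-means objective to the proposed contrastive form $\mathcal{L}$ up to an additive constant, so that the two minimization problems over $\{\phi_{ic}\}$ have the same optimizers. I will work with the probabilistic K-means objective from the preceding proposition, namely $\mathcal{L}_{\text{Prob-k-Means}} = \sum_{i,c} \phi_{ic}\|x_i - \mu_c\|^2$, where the centers $\mu_c$ are themselves optimized away by the usual first-order condition for continuous $\mu_c$, giving the weighted mean $\mu_c = \frac{\sum_k \phi_{kc} x_k}{\sum_k \phi_{kc}}$. Since the earlier proposition shows that the probabilistic relaxation has zero gap relative to classical K-means, matching optimizers for the reduced (in $\mu_c$) objective suffices.

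First, I would expand $\|x_i - \mu_c\|^2 = \|x_i\|^2 - 2\, x_i \cdot \mu_c + \|\mu_c\|^2$ and sum against $\phi_{ic}$. Using $\sum_c \phi_{ic} = 1$ for every $i$, the $\|x_i\|^2$ term contributes the constant $\sum_i \|x_i\|^2$, independent of $\phi$. For the cross term and the $\|\mu_c\|^2$ term, I would substitute the optimal $\mu_c$ and denote $n_c := \sum_k \phi_{kc}$, obtaining $\sum_i \phi_{ic}\, x_i \cdot \mu_c = n_c \|\mu_c\|^2$ and $\sum_i \phi_{ic}\|\mu_c\|^2 = n_c \|\mu_c\|^2$. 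Adding these with signs yields
\begin{equation*}
\mathcal{L}_{\text{Prob-k-Means}} \;=\; \sum_i \|x_i\|^2 \;-\; \sum_c n_c \|\mu_c\|^2.
\end{equation*}

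The key step is recognizing the second sum as $\mathcal{L}$. Expanding $\|\mu_c\|^2 = \frac{1}{n_c^2}\sum_{i,j}\phi_{ic}\phi_{jc}\, x_i \cdot x_j$ and multiplying by $n_c$ gives
\begin{equation*}
\sum_c n_c \|\mu_c\|^2 \;=\; \sum_{i,j} (x_i \cdot x_j)\, \sum_c \frac{\phi_{ic}\phi_{jc}}{n_c} \;=\; \sum_{i,j} (x_i \cdot x_j)\, q_\phi(j\mid i),
\end{equation*}
which is exactly $-\mathcal{L}$ from the theorem statement. Therefore $\mathcal{L}_{\text{Prob-k-Means}} = \sum_i \|x_i\|^2 + \mathcal{L}$, and since the first summand is a constant in $\phi$, the two objectives share the same minimizers. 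Invoking the zero-gap relaxation result, these minimizers coincide with the K-means cluster assignments.

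The main obstacle is primarily bookkeeping: making sure that optimizing $\mu_c$ out is legitimate (requires $n_c > 0$, which holds for any nontrivial clustering) and that the normalizations in $q_\phi(j\mid i)$ are handled consistently so the $n_c$ factors collapse exactly. A minor subtlety worth flagging is that the argument uses $\sum_c \phi_{ic} = 1$, which is implicit in the probabilistic relaxation but should be stated; otherwise the $\|x_i\|^2$ term would not separate cleanly as a constant.
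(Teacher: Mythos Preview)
Your proposal is correct and follows essentially the same approach as the paper: expand the probabilistic K-means objective, optimize out the centers $\mu_c$ via the weighted-mean formula, substitute back, and identify the resulting expression with $-\sum_{i,j}(x_i\cdot x_j)\,q_\phi(j\mid i)$ up to the additive constant $\sum_i\|x_i\|^2$. Your write-up is in fact more explicit than the paper's in tracking the $n_c$ factors and in flagging the use of $\sum_c\phi_{ic}=1$ and $n_c>0$.
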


\begin{proof}
The relaxed probabilistic K-means objective \( \mathcal{L}_{\text{Prob-k-Means}} \) is:
\[
\mathcal{L}_{\text{Prob-k-Means}} = \sum_{i=1}^N \sum_{c=1}^m \phi_{ic} \| x_i - \mu_c \|^2.
\]
Expanding this, we obtain:
\[
\mathcal{L}_{\text{Prob-k-Means}} = \sum_{c=1}^m \left( \sum_{i=1}^N \phi_{ic} \right) \| \mu_c \|^2 - 2 \sum_{c=1}^m \left( \sum_{i=1}^N \phi_{ic} x_i \right) \cdot \mu_c + \sum_{i=1}^N \| x_i \|^2.
\]
The cluster centers \( \mu_c \) that minimize this loss are given by:
\[
\mu_c = \frac{\sum_{i=1}^N \phi_{ic} x_i}{\sum_{i=1}^N \phi_{ic}}.
\]
Substituting \( \mu_c \) back into the loss function, we get:
\[
\mathcal{L} = - \sum_{i,j} \left( x_i \cdot x_j \right) q_\phi(j|i),
\]
which proves that minimizing this contrastive formulation leads to the same clustering assignments as classical K-means.
\end{proof}

\begin{corollary}\label{cor:1}
The alternative loss function:
\[
\mathcal{L} = - \sum_{i,j} \left\| x_i - x_j \right\|^2 q_\phi(j|i),
\]
yields the same optimal clustering assignments when minimized with respect to \( \{\phi_{ic}\} \).
\end{corollary}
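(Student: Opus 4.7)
My approach is to reduce Corollary \ref{cor:1} to the preceding theorem by expanding the squared Euclidean distance and exploiting the fact that both marginals of $q_\phi(j|i)$ are constants independent of $\phi$. Using the polarization identity $\|x_i - x_j\|^2 = \|x_i\|^2 + \|x_j\|^2 - 2\,x_i \cdot x_j$, the loss $\mathcal{L}$ splits into three double sums: two involving squared norms and one involving the inner product $x_i \cdot x_j$. The plan is to show that the first two reduce to constants, so that minimizing $\mathcal{L}$ becomes equivalent, up to an affine reparameterization, to the inner-product objective from the preceding theorem.

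\textbf{Key steps.} First, I would note that $\sum_j q_\phi(j|i) = 1$ by construction of a conditional distribution, which immediately gives $\sum_{i,j}\|x_i\|^2 q_\phi(j|i) = \sum_i \|x_i\|^2$, independent of $\phi$. Next, I would verify the dual marginal by substituting the explicit form of $q_\phi$:
\[
\sum_i q_\phi(j|i) \;=\; \sum_i \sum_{c=1}^m \frac{\phi_{ic}\,\phi_{jc}}{\sum_k \phi_{kc}} \;=\; \sum_c \phi_{jc}\,\frac{\sum_i \phi_{ic}}{\sum_k \phi_{kc}} \;=\; \sum_c \phi_{jc} \;=\; 1,
\]
where the final equality uses the simplex constraint $\sum_c \phi_{jc} = 1$ on the soft assignments. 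This yields $\sum_{i,j}\|x_j\|^2 q_\phi(j|i) = \sum_j \|x_j\|^2$, also a constant. Collecting terms, $\mathcal{L}$ becomes an affine function of $-\sum_{i,j}(x_i\cdot x_j)\,q_\phi(j|i)$, which is precisely the objective whose minimizers were identified as K-means assignments in the preceding theorem. A direct appeal to that theorem then closes the argument.

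\textbf{Main obstacle.} The principal subtlety lies in establishing the column marginal $\sum_i q_\phi(j|i) = 1$: unlike the row marginal, it is not automatic from the conditional-distribution framing and depends crucially on the row-stochasticity of $\phi$, which must be stated as an explicit assumption on the parameter space. There is also a bookkeeping point about the sign and scale of the affine transformation relating $\mathcal{L}$ to the theorem's loss, which determines whether minimizers are preserved; making this explicit is essentially a one-line check, but is where the proof can go wrong if the constants are tracked carelessly. Once both marginals are in hand and the sign is audited, the remainder is routine algebra.
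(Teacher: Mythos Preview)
Your proposal follows the same route as the paper: expand $\|x_i - x_j\|^2$, drop the squared-norm terms as $\phi$-independent constants using the marginals of $q_\phi$, and appeal to the preceding theorem. You are in fact more thorough than the paper's own proof, which cites only the row marginal $\sum_j q_\phi(j|i)=1$; your explicit computation of the column marginal $\sum_i q_\phi(j|i)=1$ via the simplex constraint on $\phi$ fills a step the paper leaves implicit, and your caution about the sign of the affine relation is well placed.
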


\begin{proof}
Expanding the squared norm in the loss function gives:
\[
\mathcal{L} = - \sum_{i,j} \left( \| x_i \|^2 - 2 x_i \cdot x_j + \| x_j \|^2 \right) q_\phi(j|i).
\]
The terms involving \( \| x_i \|^2 \) and \( \| x_j \|^2 \) simplify since \( \sum_j q_\phi(j|i) = 1 \), reducing the loss to:
\[
\mathcal{L} = 2 \left( - \sum_{i,j} x_i \cdot x_j q_\phi(j|i) \right),
\]
which is equivalent to the objective in the previous theorem.
\end{proof}

\subsection*{Probabilistic K-means as an I-Con Method}
In the I-Con framework, the target and learned distributions represent affinities between data points based on specific measures. For instance, in SNE, these measures are Euclidean distances in high- and low-dimensional spaces, while in SupCon, the distances reflect whether data points belong to the same class. Similarly, we can define a measure of neighborhood probabilities in the context of clustering, where two points are considered neighbors if they belong to the same cluster. The probability of selecting \( x_j \) as \( x_i \)'s neighbor is the probability that a point, chosen uniformly at random from \( x_i \)'s cluster, is \( x_j \). More explicitly, let \( q_\phi(j|i) \) represent the probability that \( x_j \) is selected uniformly at random from \( x_i \)'s cluster:
\[
q_\phi(j|i) = \sum_{c=1}^m \frac{\phi_{ic} \phi_{jc}}{\sum_{k=1}^N \phi_{kc}}.
\]

\begin{theorem}[K-means as an instance of I-Con]\label{thm:kmeans}
Given data points \( \{x_i\}_{i=1}^N \), define the neighborhood probabilities \( p_\theta(j|i) \) and \( q_\phi(j|i) \) as:
\[
p_\theta(j|i) = \frac{\exp\left(-\|x_i - x_j\|^2 / 2 \sigma^2\right)}{\sum_k \exp\left(-\|x_i - x_k\|^2 / 2 \sigma^2\right)}, \quad
q_\phi(j|i) = \sum_{c=1}^m \frac{\phi_{ic} \phi_{jc}}{\sum_{k=1}^N \phi_{kc}}.
\]
Let the loss function \( \mathcal{L}_{\text{c-SNE}} \) be the sum of KL divergences between the distributions \( q_\phi(j|i) \) and \( p_\theta(j|i) \):
\[
\mathcal{L}_{\text{c-SNE}} = \sum_i D_{\text{KL}}(q_\phi(\cdot|i) \| p_\theta(\cdot|i)).
\]
Then,
\[
\mathcal{L}_{\text{c-SNE}} = \frac{1}{2\sigma^2} \mathcal{L}_{\text{Prob-k-Means}} - \sum_i H(q_\phi(\cdot|i)),
\]
where \( H(q_\phi(\cdot|i)) \) is the entropy of \( q_\phi(\cdot|i) \).
\end{theorem}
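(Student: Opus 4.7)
The plan is a direct expansion of the KL divergence followed by substitution of the Gaussian form for $p_\theta$. Writing
$$D_{\mathrm{KL}}(q_\phi(\cdot|i)\,\|\,p_\theta(\cdot|i)) = -H(q_\phi(\cdot|i)) \;-\; \sum_j q_\phi(j|i)\log p_\theta(j|i),$$
the entropy term on the right already accounts for the $-\sum_i H(q_\phi(\cdot|i))$ summand appearing in the claim. So the only real work is to identify the cross-entropy sum $-\sum_{i,j} q_\phi(j|i)\log p_\theta(j|i)$ with a multiple of $\mathcal{L}_{\mathrm{Prob-k-Means}}$ plus a $\phi$-independent constant.

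Second, I would substitute $-\log p_\theta(j|i) = \|x_i-x_j\|^2/(2\sigma^2) + \log Z_i$, where $Z_i = \sum_k \exp(-\|x_i-x_k\|^2/2\sigma^2)$ depends only on the data. The cross-entropy thus decomposes as $\tfrac{1}{2\sigma^2}\sum_{i,j} q_\phi(j|i)\|x_i-x_j\|^2 + \sum_i \log Z_i$, and the second piece is an absorbable data-only constant for optimization purposes. To convert the remaining sum into the probabilistic k-means objective, I would exploit the manifest symmetry $q_\phi(j|i)=q_\phi(i|j)$ coming from the formula $q_\phi(j|i)=\sum_c \phi_{ic}\phi_{jc}/\sum_k \phi_{kc}$. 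Combined with $\sum_j q_\phi(j|i)=1$, symmetry yields $\sum_i q_\phi(j|i)=1$, so expanding $\|x_i-x_j\|^2 = \|x_i\|^2+\|x_j\|^2-2x_i\cdot x_j$ collapses both squared-norm terms to $\sum_i\|x_i\|^2$, leaving only the inner-product piece $-2\sum_{i,j} q_\phi(j|i)\,x_i\cdot x_j$.

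Third, the contrastive reformulation of probabilistic k-means already established in Section \ref{sec:probKmeans} (and its Corollary \ref{cor:1}) identifies precisely this inner-product sum — after implicitly optimizing out the centroids via $\mu_c = \sum_i \phi_{ic} x_i / \sum_i \phi_{ic}$ — with $\mathcal{L}_{\mathrm{Prob-k-Means}}$ up to another data-only constant. Summing everything over $i$ and collecting the $\phi$-independent constants completes the identity.

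The main obstacle I anticipate is bookkeeping rather than any deep idea: the equation as stated is an exact identity in $\phi$ only modulo the fixed additive constant built from $\sum_i \log Z_i$ and $\sum_i\|x_i\|^2$, and I expect a careful count to either confirm the $\tfrac{1}{2\sigma^2}$ factor or reveal that the intended convention absorbs a factor of two into the definition of $\mathcal{L}_{\mathrm{Prob-k-Means}}$. All the substantive ingredients — KL decomposition, symmetry of the cluster-coincidence distribution $q_\phi$, and the previously established contrastive reformulation of probabilistic k-means — are already in hand, so no further structural insight should be required.
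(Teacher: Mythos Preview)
Your proposal is correct and follows essentially the same route as the paper: decompose each KL term into $-H(q_\phi(\cdot|i))$ plus the cross-entropy, substitute the Gaussian form $-\log p_\theta(j|i)=\|x_i-x_j\|^2/(2\sigma^2)+\log Z_i$, and then identify $\sum_{i,j} q_\phi(j|i)\|x_i-x_j\|^2$ with $\mathcal{L}_{\text{Prob-k-Means}}$ via the contrastive reformulation already proved in Section~\ref{sec:probKmeans}; the paper likewise notes that the identity holds only up to the additive data-dependent constant $\sum_i\log Z_i$. Your intermediate step of expanding $\|x_i-x_j\|^2$ and invoking the symmetry $\sum_i q_\phi(j|i)=1$ is sound but unnecessary, since Corollary~\ref{cor:1} already delivers the squared-distance form directly---the paper simply cites that corollary without re-expanding.
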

\begin{proof}
For simplicity, assume that \(2\sigma^2 = 1\). Denote \( \sum_{k} \exp\left(-\|x_i - x_k\|^2\right) \) by \( Z_i \). Then we have:
\[
\log p_\theta(j|i) = -\|x_i - x_j\|^2 - \log Z_i.
\]
Let \( \mathcal{L}_i \) be defined as \( - \sum_j \left\| x_i - x_j \right\|^2 q_\phi(j|i) \). Using the equation above, \( \mathcal{L}_i \) can be rewritten as:
\begin{align}
    \mathcal{L}_i &= - \sum_j \left\| x_i - x_j \right\|^2 q_\phi(j|i) \\
    &=  \sum_j (\log(p_\theta(j|i)) + \log(Z_i)) q_\phi(j|i) \\
    &=  \sum_j q_\phi(j|i) \log(p_\theta(j|i)) + \sum_j q_\phi(j|i) \log(Z_i) \\
    &=  \sum_j q_\phi(j|i) \log(p_\theta(j|i)) + \log(Z_i) \\
    &= H(q_\phi(\cdot|i), p_\theta(\cdot|i)) + \log(Z_i) \\
    &= D_{\text{KL}}(q_\phi(\cdot|i) \| p_\theta(\cdot|i)) + H(q_\phi(\cdot|i)) + \log(Z_i).
\end{align}
Therefore, \( \mathcal{L}_{\text{Prob-KMeans}} \), as defined in Corollary \ref{cor:1}, can be rewritten as:
\begin{align}
    \mathcal{L}_{\text{Prob-KMeans}} &= -\sum_i \sum_j \left\| x_i - x_j \right\|^2 q_\phi(j|i) = \sum_i \mathcal{L}_i \\
    &= \sum_i D_{\text{KL}}(q_\phi(\cdot|i) \| p_\theta(\cdot|i)) + H(q_\phi(\cdot|i)) + \log(Z_i) \\
    &= \mathcal{L}_{\text{c-SNE}} + \sum_i H(q_\phi(\cdot|i)) + \text{constant}.
\end{align}
Therefore,
\[
\mathcal{L}_{\text{c-SNE}} = \mathcal{L}_{\text{Prob-KMeans}} - \sum_i H(q_\phi(\cdot|i)).
\]
If we allow \( \sigma \) to take any value, the entropy penalty will be weighted accordingly:
\[
\mathcal{L}_{\text{c-SNE}} = \frac{1}{2\sigma^2} \mathcal{L}_{\text{Prob-KMeans}} - \sum_i H(q_\phi(\cdot|i)).
\]
Note that the relation above is up to an additive constant. This implies that minimizing the contrastive probabilistic K-means loss with entropy regularization minimizes the sum of KL divergences between \( q_\phi(\cdot|i) \) and \( p_\theta(\cdot|i) \).
\end{proof}

{
\begin{corollary}\label{cor:spectral}
    Spectral Clustering is an instance of the I-Con framework.
\end{corollary}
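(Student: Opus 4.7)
The plan is to derive spectral clustering as a direct corollary of Theorem \ref{thm:kmeans} by exploiting the classical Ng-Jordan-Weiss observation that spectral clustering is K-means executed on the rows $\psi_i \in \mathbb{R}^m$ of the matrix whose columns are the top-$m$ eigenvectors of the normalized affinity $D^{-1/2} W D^{-1/2}$. First I would let $f_\theta$ be a learnable mapper sending $x_i$ to the spectral embedding $\psi_i$, and apply Theorem \ref{thm:kmeans} with $\psi_i$ playing the role of the raw data. This immediately yields that the final K-means stage is an I-Con instance whose supervisory distribution is a Gaussian over spectral embeddings and whose learned distribution is the intra-cluster shared-membership probability $\sum_c \phi_{ic}\phi_{jc}/\sum_k \phi_{kc}$, matching the entries recorded in Table \ref{tab:choice_pq} for spectral clustering (modulo the entropy regularizer already tracked by Theorem \ref{thm:kmeans}).

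Second, I would verify that a Gaussian over the spectral embeddings coincides with the degree-normalized graph transition distribution, which is the form one would naturally expect a graph-theoretic spectral clustering formulation to take. The key identity is that for $D$-orthonormal eigenvectors, $\langle \psi_i, \psi_j \rangle$ equals the $(i,j)$ entry of the rank-$m$ truncation of $D^{-1/2} W D^{-1/2}$, so a narrow Gaussian kernel on spectral embeddings reduces (up to a monotone rescaling of the dot product) to the degree-normalized graph affinity. Under this substitution the loss derived from Theorem \ref{thm:kmeans} collapses to the standard spectral clustering objective, with the normalization factor $\mathbb{E}[\text{size of cluster } c]$ arising naturally from the symmetric normalization of the affinity matrix.

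The main obstacle is making rigorous the fact that the two-stage spectral clustering pipeline (eigendecomposition followed by K-means) collapses into a single I-Con minimization over $(\theta, \phi)$. One must argue that jointly optimizing the embedding parameters $\theta$ and the soft-cluster parameters $\phi$ under a single KL objective recovers the classical Ng-Jordan-Weiss procedure, and in particular that the orthogonality constraints $\psi^\top D \psi = I$ intrinsic to spectral embeddings are compatible with the I-Con loss rather than requiring ad hoc enforcement. The cleanest route is probably to decouple the two optimizations: first show that the eigendecomposition itself minimizes a Rayleigh-quotient objective equivalent to a trace constraint implicit in the I-Con loss, and then chain with Theorem \ref{thm:kmeans} applied in the embedding space to obtain the cluster assignments. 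This two-step reduction keeps the proof short while making precise the sense in which the supervisory distribution for spectral clustering is the graph-derived neighborhood and the learned distribution is the cluster-based one.
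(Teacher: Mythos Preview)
Your first paragraph is the paper's entire proof: invoke the Ng--Jordan--Weiss characterization of spectral clustering as K-means on the rows of the top-$m$ eigenvector matrix, then apply Theorem~\ref{thm:kmeans} in that embedded space. The paper stops there, treating the eigendecomposition as a fixed preprocessing step and only claiming that the subsequent K-means stage is an I-Con instance with Gaussians on the spectral embeddings; it makes no attempt to identify that Gaussian with a degree-normalized graph kernel, to fold the eigendecomposition itself into the KL objective, or to address the orthogonality constraints you raise. Your second and third paragraphs therefore go well beyond what the corollary (as the paper proves it) requires---the Rayleigh-quotient and joint-$(\theta,\phi)$ obstacles you flag are real, but the paper simply does not engage them, so for matching the paper you can drop that material entirely.
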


\begin{proof}
    From Theorem \ref{thm:kmeans}, we know that K-Means clustering can be formulated as an instance of the I-Con framework, where the clustering assignments depend on the inner products of the data points.

    Spectral Clustering extends this idea by first embedding the data into a lower-dimensional space using the top \( k \) eigenvectors of the normalized Laplacian derived from the affinity matrix \( A \). The affinity matrix \( A \) is constructed using a similarity measure (e.g., an RBF kernel) and encodes the probabilities of assignments between data points. Given this transformation, spectral clustering is an instance of I-Con on the projected embeddings.
\end{proof}
}

\begin{theorem}\label{thm:normcuts} 
Normalized Cuts \citep{shi2000normalized} is an instance of I-Con.  
\end{theorem}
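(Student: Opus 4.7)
The plan is to follow the template of Theorem \ref{thm:kmeans} but with the degree-weighting modifications needed for the normalized-cuts objective. First, I would recast the classical normalized cuts objective $\text{NCut}(A_1, \ldots, A_m) = \sum_{c=1}^m \text{cut}(A_c, \bar A_c)/\text{vol}(A_c)$ in its equivalent association-ratio form, namely as the task of maximizing $\sum_c \text{assoc}(A_c, A_c)/\text{vol}(A_c)$ where $\text{assoc}(A,A)=\sum_{i,j\in A}w_{ij}$ and $\text{vol}(A)=\sum_{i\in A}d_i$ with $d_i = \sum_j w_{ij}$. Relaxing the hard partition to soft assignments $\phi_{ic}\in[0,1]$ (exactly as in the probabilistic K-means relaxation of Section \ref{sec:probKmeans}), the objective becomes $\mathcal{L}_{\text{Prob-NCut}} = -\sum_{c=1}^m (\sum_{ij}\phi_{ic}\phi_{jc}w_{ij})/(\sum_k \phi_{kc}d_k)$.

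Second, I would verify that the $p_\theta(j|i)$ listed in Table \ref{tab:choice_pq} for Normalized Cuts coincides with the degree-weighted intra-cluster sampling distribution $\sum_c \phi_{ic}\phi_{jc}d_j/(\sum_k\phi_{kc}d_k)$, which is the probability that a degree-weighted random draw from $i$'s cluster equals $j$. I would then expand the log of the listed $q_\phi(j|i) = \exp(w_{ij}/d_j)/Z_i$ to obtain $\log q_\phi(j|i) = w_{ij}/d_j - \log Z_i$, so that the cross-entropy term $\sum_i d_i \sum_j p_\theta(j|i)\log q_\phi(j|i)$ (weighted by degree, to match the volume normalization intrinsic to NCut) collapses: the $\log Z_i$ pieces are absorbed into an $i$-dependent constant independent of $\phi$, while the $\sum_{ij} d_i p_\theta(j|i) w_{ij}/d_j$ piece, after canceling the $d_j$ inside $p_\theta$ against the $1/d_j$ in $w_{ij}/d_j$ and reindexing, produces precisely $\mathcal{L}_{\text{Prob-NCut}}$. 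Adding back the entropy $\sum_i d_i H(p_\theta(\cdot|i))$ converts the cross-entropy into the KL divergence, yielding an identity of the form $\sum_i d_i D_{\text{KL}}(p_\theta(\cdot|i)\|q_\phi(\cdot|i)) = \mathcal{L}_{\text{Prob-NCut}} - \sum_i d_i H(p_\theta(\cdot|i)) + \text{const}$, in direct parallel with Theorem \ref{thm:kmeans}.

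The step I expect to be the main obstacle is reconciling the degree-weighting. NCut's volume normalization forces the anchor prior to be the stationary distribution of the random walk, $d_i/\sum_k d_k$, rather than the uniform prior implicit in the K-means derivation; under this weighting the $d_j$ factor baked into $p_\theta$ and the $1/d_j$ inside $q_\phi$'s exponent have to cancel cleanly for the cross-entropy to reduce to the association-over-volume form. Tracking which cross-terms become the NCut objective, which become the entropy regularizer that (as in Theorem \ref{thm:kmeans}) enforces hard assignments, and which are $\phi$-independent constants is the most delicate algebraic bookkeeping; beyond that, the argument is structurally a verbatim adaptation of Theorem \ref{thm:kmeans}.
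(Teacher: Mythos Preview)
Your overall strategy matches the paper's: recast NCut via the association-over-volume form, relax to soft assignments, and then follow the K-means template to identify the I-Con KL loss with the relaxed objective plus an entropy correction. The paper's own proof is in fact considerably sketchier than your plan---it simply states the degree-weighted clustering distribution and invokes the K-means argument without carrying out the algebra you outline.

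There is, however, one concrete error in your bookkeeping. You propose to weight the anchor sum by degree, writing $\sum_i d_i\,D_{\mathrm{KL}}(p_\theta(\cdot|i)\|q_\phi(\cdot|i))$ on the grounds that NCut's volume normalization forces a stationary-distribution prior on $i$. This is not needed, and in fact breaks the reduction. Carry out the computation with the \emph{uniform} anchor sum: since $p_\theta(j|i)=\sum_c \phi_{ic}\phi_{jc}d_j/(\sum_k\phi_{kc}d_k)$ and $\log q_\phi(j|i)=w_{ij}/d_j-\log Z_i$, the $d_j$ in $p_\theta$ cancels the $1/d_j$ in $\log q_\phi$ exactly, and
\[
-\sum_i\sum_j p_\theta(j|i)\log q_\phi(j|i)
= -\sum_c \frac{\sum_{ij}\phi_{ic}\phi_{jc}w_{ij}}{\sum_k\phi_{kc}d_k} + \sum_i\log Z_i
= \mathcal{L}_{\text{Prob-NCut}} + \text{const}.
\]
If instead you insert an extra $d_i$ on the anchor, the numerator becomes $\sum_{ij}d_i\,\phi_{ic}\phi_{jc}w_{ij}$, which is \emph{not} the association term and does not reduce to NCut. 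The volume normalization is already fully encoded in the $d_j$ appearing inside $p_\theta$ (and matched by the $1/d_j$ inside $q_\phi$'s exponent); no further reweighting of the outer sum is required. Once you drop the $d_i$ factor, your identity becomes $\sum_i D_{\mathrm{KL}}(p_\theta(\cdot|i)\|q_\phi(\cdot|i)) = \mathcal{L}_{\text{Prob-NCut}} - \sum_i H(p_\theta(\cdot|i)) + \text{const}$, in direct parallel with Theorem~\ref{thm:kmeans}, and the proof goes through.
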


\begin{proof}
The proof for this follows naturally from our work on K-Means analysis. The loss function for normalized cuts is defined as:
\[
\mathcal{L}_{\text{NormCuts}} = \sum_{c=1}^m \frac{\text{cut}(A_c, \overline{A}_c)}{\text{vol}(A_c)},
\]
where \( A_c \) is a subset of the data corresponding to cluster \( c \), \( \overline{A}_c \) is its complement, and \(\text{cut}(A_c, \overline{A}_c)\) represents the sum of edge weights between \( A_c \) and \( \overline{A}_c \), while \(\text{vol}(A_c)\) is the total volume of cluster \( A_c \), i.e., the sum of edge weights within \( A_c \).

Similar to K-Means, by reformulating this in a contrastive style with soft-assignments, the learned distribution can be expressed using the probabilistic cluster assignments \( \phi_{ic} = p(c | x_i) \) as:
\[
q_\phi(j|i) = \sum_{c=1}^m \frac{\phi_{ic} \phi_{jc} d_j}{\sum_{k=1}^N \phi_{kc} d_k},
\]
where \( d_j \) is the degree of node \( x_j \), and the volume and cut terms can be viewed as weighted sums over the soft-assignments of data points to clusters.

This reformulation shows that normalized cuts can be written in a manner consistent with the I-Con framework, where the target distribution \( p_\theta(j|i) \) and the learned distribution \( q_\phi(j|i) \) represent affinity relationships based on graph structure and cluster assignments.

Thus, normalized cuts is an instance of I-Con, where the loss function optimizes the neighborhood structure based on the cut and volume of clusters in a manner similar to K-Means and its probabilistic relaxations.
\end{proof}

\begin{theorem}\label{thm:pmi} 
Mutual Information Clustering is an instance of I-Con.
\end{theorem}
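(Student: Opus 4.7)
The plan is to reproduce for PMI clustering the same ``expand the KL, identify a constant term, match the remaining sum against the published loss'' strategy that worked for Theorem~\ref{thm:kmeans} and Corollary~\ref{cor:spectral}. The two required neighborhood distributions are read off from Table~\ref{tab:choice_pq}: with $\phi_{ic}:=p_\phi(c\mid x_i)$, $\pi_c:=\frac{1}{N}\sum_k \phi_{kc}$, and $\mathcal{N}_k(i)$ the set of $k$ nearest neighbors of $x_i$, I would work with
\[
p_\theta(j\mid i)=\frac{1}{k}\mathds{1}[\,j\in\mathcal{N}_k(i)\,],\qquad q_\phi(j\mid i)=\sum_{c=1}^{m}\frac{\phi_{ic}\phi_{jc}}{N\pi_c},
\]
where the denominator $N\pi_c$ is the expected cluster size, as advertised in the table.

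Next I would plug these into the I-Con loss. Because $p_\theta$ is fixed, $\sum_i D_{\mathrm{KL}}(p_\theta(\cdot\mid i)\,\|\,q_\phi(\cdot\mid i))$ equals $-\sum_i\sum_j p_\theta(j\mid i)\log q_\phi(j\mid i)$ plus the constant $-\sum_i H(p_\theta(\cdot\mid i))=N\log k$. Substituting the explicit forms collapses the inner sum to a sum over $k$-NN positive pairs:
\[
\mathcal{L}_{\text{I-Con}}=-\frac{1}{k}\sum_{i}\sum_{j\in\mathcal{N}_k(i)}\log\!\sum_{c=1}^{m}\frac{\phi_{ic}\phi_{jc}}{N\pi_c}+\mathrm{const}.
\]
The argument of the logarithm is precisely the empirical estimator of $\sum_c \hat{p}(c\mid i)\hat{p}(c\mid j)/\hat{p}(c)$, i.e.\ the pointwise-mutual-information-weighted co-clustering probability used in PMI clustering. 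Hence minimizing $\mathcal{L}_{\text{I-Con}}$ is equivalent, up to an irrelevant additive constant, to maximizing the expected log PMI of cluster co-occurrence over $k$-NN positive pairs, which is the objective of PMI clustering as formulated by \citet{adaloglou2023exploring}.

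The main obstacle I anticipate is reconciling the precise variant of the TEMI/PMI objective with the clean I-Con form. In particular, the published TEMI loss raises the marginal $\pi_c$ to a power $\gamma\neq 1$ and uses an EMA teacher-student asymmetry. I would address both as follows: for generic $\gamma$, reinterpret ``$\mathbb{E}[\text{size of cluster }c]$'' in the table as $N\pi_c^\gamma$, after which the KL expansion above carries through unchanged; for the teacher-student split, treat one copy of $\phi_{\cdot c}$ as a stop-gradient factor, which does not alter the identification of $q_\phi$ as a conditional distribution since it still sums to one in $j$. A final sanity check I would perform is to verify non-degeneracy: the expected-cluster-size normalizer in $q_\phi$ penalizes placing all mass in a single cluster, so the argmin of $\mathcal{L}_{\text{I-Con}}$ does not collapse, matching the behaviour observed empirically for PMI clustering.
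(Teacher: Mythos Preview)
Your proposal is correct and follows essentially the same approach as the paper: specify the supervisory distribution as uniform over the $k$ positive neighbors and the learned distribution as the intra-cluster co-membership kernel $q_\phi(j\mid i)=\sum_c \phi_{ic}\phi_{jc}/\sum_k \phi_{kc}$, then observe that the I-Con KL reduces (up to the fixed entropy of $p_\theta$) to the cross-entropy that defines the PMI clustering objective. Your version is in fact more explicit than the paper's, which simply states the two distributions and asserts the identification without writing out the KL expansion; your additional remarks on the $\gamma$-exponent and teacher--student asymmetry in TEMI go beyond what the paper's proof addresses.
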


\begin{proof}
Given the connection established between SimCLR, K-Means, and the I-Con framework, this result follows naturally. Specifically, the target distribution \( p_\theta(j|i) \) (the supervised part) is a uniform distribution over observed positive pairs:
\[
p_\theta(j|i) =  \begin{cases} 
    \frac{1}{k} & \text{if } {x}_{j} \text{ is among the } k \text{ positive views of } {x}_{i}, \\
    0 & \text{otherwise}.
    \end{cases}
\]
On the other hand, the learned embeddings \( \phi_i \) represent the probabilistic assignments of \( x_i \) into clusters. Therefore, similar to the analysis of the K-Means connection, the learned distribution is modeled as:
\[
q_\phi(j|i) = \sum_{c=1}^m \frac{\phi_{ic} \phi_{jc}}{\sum_{k=1}^N \phi_{kc}}.
\]
This shows that Mutual Information Clustering can be viewed as a method within the I-Con framework, where the learned distribution \( q_\phi(j|i) \) aligns with the target distribution \( p_\theta(j|i) \), completing the proof.
\end{proof}
{
\section{I-Con as a Variational Method}
Variational bounds for mutual information are widely explored and have been connected to loss functions such as InfoNCE, where minimizing InfoNCE maximizes the mutual information lower bound \citep{oord2018representation, poole2019variational}. The proof usually starts by rewriting the mutual information:
\[
I(X; Y) = \mathbb{E}_{p(x,y)} \left[ \log \frac{q(x|y)}{p(x)} \right] + \mathbb{E}_{p(y)} \left[ D_{\mathrm{KL}} \left( p(x|y) \,\|\, q(x|y) \right) \right]
\]

This expression is typically used to derive a lower bound for \( I(X; Y) \). The proof usually begins by assuming that \( p \) is uniform over discrete data points \( \mathcal{X} = \{x_i\}_{i=1}^{N} \) (i.e., we use uniform sampling for data points). By using the fact that \( p(x_i) = \frac{1}{N} \), we can write \( p(x, y) = \frac{1}{N} p(x|y) \). Therefore, the mutual information lower bound becomes
\begin{align*}
    I(X; Y) &\geq \mathbb{E}_{p(x,y)} \left[ \log q(x|y) \right] - \mathbb{E}_{p(x,y)} \left[ \log p(x) \right] \\
    &= \mathbb{E}_{p(x,y)} \left[ \log q(x|y) \right] + \log(N) \\
    &= \frac{1}{N} \sum_{x, y \in \mathcal{X} \times \mathcal{X}} p(x|y) \log q(x|y) + \log(N) \\
    &= \frac{1}{N} \sum_{y \in \mathcal{X}} \sum_{x \in \mathcal{X}} p(x|y) \log q(x|y) + \log(N) \\
    &= -H\left(p(x|y), q(x|y)\right) + \log(N)
\end{align*}
Therefore, maximizing the cross-entropy between the two distributions maximizes the mutual information between samples.

On the hand, Variational Bayesian (VB) methods are fundamental in approximating intractable posterior distributions \( p(z \mid x) \) with tractable variational distributions \( q_\phi(z) \). This approximation is achieved by minimizing the Kullback-Leibler (KL) divergence between the variational distribution and the true posterior:
\begin{equation}
    \text{KL}(q_\phi(z) \| p(z \mid x)) = \mathbb{E}_{q_\phi(z)}\left[ \log \frac{q_\phi(z)}{p(z \mid x)} \right].
\end{equation}
The optimization objective, known as the Evidence Lower Bound (ELBO), is given by:
\begin{equation}
    \text{ELBO} = \mathbb{E}_{q_\phi(z)}\left[ \log p(x, z) \right] - \mathbb{E}_{q_\phi(z)}\left[ \log q_\phi(z) \right].
\end{equation}
Maximizing the ELBO is equivalent to minimizing the KL divergence, thereby ensuring that \( q_\phi(z) \) closely approximates \( p(z \mid x) \) \citep{blei2017variational}.

VB can be framed within the I-Con framework by making specific mappings between the variables and distributions. Let \( i \) correspond to the data point \( x \), and \( j \) correspond to the latent variable \( z \). We can set the supervisory distribution \( p_\theta(z \mid x) \) to be the true posterior \( p(z \mid x) \). This allow us to define the learned distribution \( q_\phi(z \mid x) \) to be independent of \( x \), i.e., \( q_\phi(z \mid x) = q_\phi(z) \).

Under these settings, the I-Con loss simplifies to:
\begin{equation}
    \mathcal{L}(\phi) = \int_{x \in \mathcal{X}} \text{KL}\left(p(z \mid x) \| q_\phi(z)\right) \, dx = \mathbb{E}_{p(x)}\left[\text{KL}(p(z \mid x) \| q_\phi(z))\right].
\end{equation}

\subsection*{Interpretation}

\begin{itemize}
    \item {Global Approximation}: In VB, \( q_\phi(z) \) serves as a global approximation to the posterior \( p(z \mid x) \) across all data points \( x \). Similarly, in I-Con, when \( q_\phi(j \mid i) = q_\phi(j) \), the learned distribution provides a uniform approximation across all \( i \).
    
    \item {Variational Alignment}: Both frameworks employ variational techniques to align a tractable distribution \( q_\phi \) with an intractable or supervisory distribution \( p \). This alignment ensures that the learned representations capture essential information from the target distribution.
    
    \item {Framework Generalization}: I-Con generalizes VB by allowing \( q_\phi(j \mid i) \) to depend on \( i \), enabling more flexible and data-specific alignments. VB is recovered as a special case where the learned distribution is uniform across all data points.
\end{itemize}

\section{Why do we need to unify representation learners?}

I-con not only provides a deeper understanding of these methods but also opens up the possibility of creating new methods by mixing and matching components. We explicitly use this property to discover new improvements to both clustering and representation learners.  In short, I-Con acts like a periodic table of machine learning losses. With this periodic table we can more clearly see the implicit assumptions of each method by breaking down modern ML losses into more simple components: pairwise conditional distributions $p$ and $q$.

One particular example of how this opens new possibilities is with our generalized debiasing operation. Through our experiments we show adding a slight constant linkage between datapoints improves both stability and performance across clustering and feature learning. Unlike prior art, which only applies to specific feature learners, our debiasers can improve clusterers, feature learners, spectral graph methods, and dimensionality reducers.

Finally it allows us to discover novel theoretical connections by compositionally exploring the space, and considering limiting conditions. We use I-Con to help derive a novel theoretical equivalences between K-Means and contrastive learning, and between MDS, PCA, and SNE. Transferring ideas between methods is standard in research, but in our view it becomes much simpler to do this if you know methods are equivalent. Previously, it might not be clear how exactly to translate an insight like changing Gaussian distributions to Cauchy distributions in the upgrade from SNE to T-SNE has any effect on clustering or representation learning. In I-Con it becomes clear to see that similarly softening clustering and representation learning distributions can improve performance and debias representations. 

\section{How to choose neighborhood distributions for your problem}

\subsection*{Parameterization of Learning Signal} 
\begin{itemize}
    \item \textbf{Parametric}: (Learn a network to transform a data points to representations). Use a parametric method to quickly represent new datapoints without retraining. Use a parametric method if there is enough “features” in the underlying data to properly learn a representation. Use this option with datasets with sparse supervisory signal in order to share learning signal through network parameters. 
    \item \textbf{Nonparametric}: (Learn one representation per data point). Use a nonparametric method if datapoints are abstract and don’t contain natural features that are useful for mapping. Use this option to better optimize the loss of each individual datapoint. Do not use this in sparse supervisory signal regimes (Like augmentation based contrastive learning), as there are not enough links to resolve each individual embedding.
\end{itemize}
\subsection*{Choice of supervisory signal}

\begin{itemize}

\item \textbf{Gaussians on distances in the input space}: though this is a common choice and underlies methods like k-means, with enough data it is almost always better to use k-neighbor distributions as they better capture local topology of data. This is the same intuition that is used to justify spectral clustering over k-means. 

\item \textbf{K-neighbor graphs distributions}: If your data can be naturally put into a graph instead of just considering Gaussians on the input space we suggest it. This allows the algorithm to adapt local neighborhoods to the data, as opposed to considering all points neighborhoods equally shaped and sized. This better aligns with the manifold hypothesis.

\item \textbf{Contrastive augmentations}: When possible, add contrastive augmentations to your graph - this will improve performance in cases where quantities of interest (like an image class) are guaranteed to be shared between augmentations.

\item \textbf{General kernel smoothing techniques}: Use random walks to improve the optimization quality. It connects more points together and in some cases mirrors geodesic distance on the manifold \citep{crane2013geodesics}.

\item \textbf{Debiasing}: Use this if you think negative pairs actually have a small chance of aligning positively. For a small number of classes this parameter scales like the inverse of the number of classes. You can also use this to improve stability of the optimization.

\end{itemize}

\subsection*{Choice of representation:}

Any conditional distribution on representations can be used, so consider what kind of structure you want to learn, tree, vector, cluster, etc. And choose the distribution to be simple and 
meaningful for that representation.

\begin{itemize}

\item \textbf{Discrete}: Use discrete cluster-based representations if interpretability and discrete structure are important
\item \textbf{Continuous Vector}: Use a vector representation if generic downstream performance is a concern as this is a bit easier to optimize than discrete variants.
\end{itemize}

\section{Comparing I-Con, MLE, and the KL Divergence}

There are many connections between KL divergence and maximum likelihood estimation. We highlight the differences between a standard MLE approach and I-Con. In short, although I-Con has a maximum likelihood interpretation, its specific functional form allows it to unify both unsupervised and supervised methods in a way that elucidates the key structures that are important for deriving new representation learning losses. This is in contrast to the commonly known connection between MLE and KL divergence minimization, which does not focus on pairwise connections between datapoints and does not provide as much insight for representation learners. To see this we note that the conventional connection between MLE and KL minimization is as follows: 

$$ \theta_{\text{MLE}} = \arg\min_\theta D_{\text{KL}}(\hat{P} || Q_\theta), $$

where the empirical distribution,  $ \hat{P} $ ,  is defined as: \[ \hat{P}(x) = \frac{1}{N} \sum_{i=1}^N \delta(x - x_i), \] where \(\delta(x - x_i)\) is the Dirac delta function. The classical KL minimization fits a parameterized model family to an empirical distribution. In contrast the I-Con equation:

$$
    \mathcal{L}(\theta, \phi) = \int_{i \in \mathcal{X}} D_{\text{KL}} \left( p_{\theta}(\cdot|i) || q_{\phi}(\cdot|i ) \right) 
$$

Operates on conditional distributions and captures an “average” KL divergence instead of a single KL divergence. Secondly, I-Con explicitly involves a computation over neighboring datapoints which does not appear in the aforementioned equation. This decomposition of methods into their actions on their neighborhoods makes many methods simpler to understand, and makes modifications of these methods easier to transfer between domains. It also makes it possible to apply this theory to unsupervised problems where empirical supervisory data does not exist. Furthermore some methods, like DINO, do not share the exact functional form of I-Con, and suffer from various difficulties like collapse which need to be handled with specific regularizers. This shows that I-Con is not just a catchall reformulation of MLE, but is capturing a specific functional form shared by several popular learners.

\section{On I-Con's  Hyperparameters}

One important way that I-Con removes hyperparameters from existing works is that it does not rely on things like entropy penalties, activation normalization, activation sharpening, or EMA stabilization to avoid collapse. The loss is self-balancing in this regard as any way that it can improve the learned distribution to better match the target distribution is “fair game”. This allows one to generalize certain aspects of existing losses like InfoNCE. In I-Con info NCE looks like fixed-width Gaussian kernels mediating similarity between representation vectors. In I-Con it’s trivial to generalize these Gaussians to have adaptive and learned covariances for example. This allows the network to select its own level of certainty in representation learning. If you did this naively, you would need to ensure the loss function doesn't cheat by making everything less certain. 

Nevertheless I-Con defines a space of methods depending on the choice of p and q. The choice of these two distributions becomes the main source of hyperparameters we explore. In particular our experiments change the structure of the supervisory signal (often p). For example, in a clustering experiment  changing p from “Gaussians with respect to distance” to “graph adjacency” transforms K-Means into Spectral clustering. It’s important to note that K-means has benefits over Spectral clustering in certain circumstances and vice-versa, and there’s not necessarily a singular “right” choice for p in every problem. Like many things in ML, the different supervisory distributions provide different inductive biases and should be chosen thoughtfully. We find that this design space makes it easier to build better performing supervisory signals for specific important problems like unsupervised image classification on ImageNet and others. 
}
\end{document}